\newtheorem{theorem}{Theorem}
\newtheorem{lemma}{Lemma}
\newtheorem{definition}{Definition}
\newtheorem{proposition}{Proposition}
\newtheorem{remark}{Remark}
\newtheorem{assumption}{Assumption}
\newcommand{\size}[1]{\left\lvert #1 \right\rvert}
\newcommand{\I}[1] {\mathbf{1} \left[ #1 \right]}
\newcommand{\pp}[1] {\mathbb{P} \left[ #1 \right]}
\newcommand{\E}[2] {\mathop{\mathbb{E}}\limits_{#1} \left[ #2 \right]}
\newcommand{\Ee}[2] {\mathop{\mathbb{E}}_{#1} [ #2 ]}
\definecolor{Top2}{RGB}{102, 171, 221}
\definecolor{Top1}{RGB}{245, 137, 112}
\newcommand{\Topone}[1]{{\color{Top1}\textbf{#1}}}
\newcommand{\Toptwo}[1]{{\color{Top2}\textbf{#1}}}
\newcommand{\cmark}{\ding{51}}%
\newcommand{\xmark}{\ding{55}}%
\newcommand{\bmark}{\ding{51} \kern-1.7ex\raisebox{0.6ex}{\rotatebox[origin=c]{125}{\textbf{--}}}}
\newenvironment{qbox-tight}
	{\begin{tcolorbox}[enhanced jigsaw, drop shadow=black!50!white,colback=white, width=\linewidth, center, left=2pt,right=2pt,top=1pt,bottom=1pt,halign=center]}
	{\end{tcolorbox}}
\definecolor{dkgreen}{rgb}{0,0.6,0}
\definecolor{gray}{rgb}{0.5,0.5,0.5}
\definecolor{mauve}{rgb}{0.58,0,0.82}
\bfseries\color{gray},
\bfseries\color{blue},
\begin{document}
  %
\title{A Unified Perspective for Loss-Oriented Imbalanced Learning via Localization}

\author{Zitai~Wang,
        Qianqian~Xu*,~\IEEEmembership{Senior ~Member,~IEEE,}
        Zhiyong~Yang, 
        Zhikang~Xu\\
        Linchao~Zhang, 
        Xiaochun~Cao,~\IEEEmembership{Senior ~Member,~IEEE,}
        and~Qingming~Huang*,~\IEEEmembership{Fellow,~IEEE}
        \IEEEcompsocitemizethanks{
            \IEEEcompsocthanksitem Zitai Wang is with the State Key Laboratory of AI Safety, Institute of Computing Technology, Chinese Academy of Sciences, Beijing 100190, China (email: \texttt{wangzitai@ict.ac.cn}). 
            \IEEEcompsocthanksitem Qianqian Xu is with the State Key Laboratory of AI Safety, Institute of Computing Technology, Chinese Academy of Sciences, Beijing 100190, China, and also with Peng Cheng Laboratory, Shenzhen 518055, China (email: \texttt{xuqianqian@ict.ac.cn}). 
            \IEEEcompsocthanksitem Zhiyong Yang is with School of Computer Science and Technology, University of Chinese Academy of Sciences, Beijing 100049, China (email: \texttt{ yangzhiyong21@ucas.ac.cn}).
            \IEEEcompsocthanksitem Zhikang Xu is with Institute of Information Engineering, Chinese Academy of Sciences, Beijing 100093, China, and also with School of Cyber Security, University of Chinese Academy of Sciences, Beijing 100049, China (email: \texttt{xuzhikang@iie.ac.cn}). 
            \IEEEcompsocthanksitem Linchao Zhang is with Artificial Intelligence Institute of China Electronics Technology Group Corporation, Beijing 100041, China (email: \texttt{zhanglinchao@cetc.com.cn}).
            \IEEEcompsocthanksitem Xiaochun Cao is with School of Cyber Science and Technology, Shenzhen Campus of Sun Yat-sen University, Shenzhen 518107, China (email:\texttt{caoxiaochun@mail.sysu.edu.cn}).
            \IEEEcompsocthanksitem Qingming Huang is with the School of Computer Science and Technology, University of Chinese Academy of Sciences, Beijing 101408, China, also with the Key Laboratory of Big Data Mining and Knowledge Management (BDKM), University of Chinese Academy of Sciences, Beijing 101408, China, and also with the State Key Laboratory of AI Safety, Institute of Computing Technology, Chinese Academy of Sciences, Beijing 100190, China (email: \texttt{qmhuang@ucas.ac.cn}).
            \IEEEcompsocthanksitem *Corresponding authors.
        }
}

\markboth{Journal of \LaTeX\ Class Files,~Vol.~xx, No.~xx, January~2025}%
{Shell \MakeLowercase{\textit{et al.}}: Bare Demo of IEEEtran.cls for Computer Society Journals}
%

\IEEEtitleabstractindextext{%
    \begin{abstract}
        \justifying
        Due to the inherent imbalance in real-world datasets, na\"ive Empirical Risk Minimization (ERM) tends to bias the learning process towards the majority classes, hindering generalization to minority classes. To rebalance the learning process, one straightforward yet effective approach is to modify the loss function via class-dependent terms, such as re-weighting and logit-adjustment. However, existing analysis of these loss-oriented methods remains coarse-grained and fragmented, failing to explain some empirical results. After reviewing prior work, we find that the properties used through their analysis are typically global, \textit{i.e.}, defined over the whole dataset. Hence, these properties fail to effectively capture how class-dependent terms influence the learning process. To bridge this gap, we turn to explore the localized versions of such properties \textit{i.e.}, defined within each class. Specifically, we employ  localized calibration to provide consistency validation across a broader range of losses and localized Lipschitz continuity to provide a fine-grained generalization bound. In this way, we reach a unified perspective for improving and adjusting loss-oriented methods. Finally, a principled learning algorithm is developed based on these insights. Empirical results on both traditional ResNets and foundation models validate our theoretical analyses and demonstrate the effectiveness of the proposed method.   
    \end{abstract}
    
\begin{IEEEkeywords}
    Imbalanced Learning, Re-weighting, Logit Adjustment, Fisher Consistency, Generalization Analysis.
\end{IEEEkeywords}
}
  \maketitle
  \IEEEdisplaynontitleabstractindextext
  \IEEEpeerreviewmaketitle

  \IEEEraisesectionheading{\section{Introduction}\label{sec:introduction}}
\IEEEPARstart{I}{n} recent years, machine learning methods have achieved remarkable success, driven by well-collected datasets with balanced class distributions \cite{krizhevsky2009learning,DBLP:journals/ijcv/RussakovskyDSKS15}. However, real-world datasets are often imbalanced: some classes contain many samples (\textit{i.e.}, majority classes), while others have very few (\textit{i.e.}, minority classes). This imbalance leads Empirical Risk Minimization (ERM) to focus primarily on majority classes, making it harder to generalize to minority classes. To tackle this challenge, imbalanced learning has gained significant attention in recent years \cite{DBLP:journals/pami/OksuzCKA21,DBLP:journals/pami/ZhangKHYF23,DBLP:journals/corr/abs-2408-00483}, where models are trained on imbalanced datasets but evaluated on balanced ones.

In this field, most methods fall into three categories: loss-oriented, data-oriented, and module-oriented ones (Please refer to Sec.\ref{sec:imbalanced_learning} for more details). Among these, loss-oriented ones can achieve superior performance with minimal modifications to the training pipeline.
To achieve this goal, two types of modifications are proposed: re-weighting \cite{DBLP:conf/icml/MorikBJ99,DBLP:conf/cvpr/CuiJLSB19,Park_2021_ICCV} and logit-adjustment \cite{DBLP:conf/nips/CaoWGAM19,DBLP:conf/iclr/MenonJRJVK21,DBLP:journals/corr/abs-2001-01385,DBLP:conf/cvpr/TanWLLOYY20,DBLP:conf/nips/KiniPOT21,DBLP:journals/pami/TanLLYYHO23,DBLP:conf/nips/WangX00CH23}, whose differences lie in the position of the modification. Abstractly, we have the following unified formulation:
\begin{equation}
    \label{eq:unified_loss}
    L_\text{modified}(f(\boldsymbol{x}), y) = \alpha_y \cdot L_\text{naive}(\tilde{f}(\boldsymbol{x}), y),
\end{equation}
where $\alpha_y$ denotes the re-weighting term of the class $y$, and $\tilde{f}(\boldsymbol{x}) := \beta_y f(\boldsymbol{x}) + \Delta_y$ is the adjusted logit; $\beta_y$ and $\Delta_y$ denote multiplicative and additive adjustment terms, respectively.

Given this formulation, one natural question arises: \textit{What kind of $\alpha_y, \beta_y, \Delta_y$ should we choose?} According to the classic paradigm of machine learning \cite{10.5555/2371238}, two properties are crucial. One is \textbf{Fisher consistency}, \textit{i.e.}, minimizing the modified loss should also optimize the performance defined on the balanced distribution. The other one is \textbf{sharp generalization bound}, \textit{i.e.}, the performance gap between the training set and the test set should be small. From the two perspectives, prior arts have justified their method with various techniques, as summarized in Tab.\ref{tab:comparison}. However, \textbf{existing theoretical insights are still fragmented and coarse-grained}. Specifically, \citet{DBLP:conf/nips/CaoWGAM19} and \citet{DBLP:conf/nips/RenYSMZYL20} construct a margin-based generalization bound to validate the necessity of the proposed Label Distribution Aware Margin (LDAM) loss ($\Delta_y \neq 0$). However, this theory cannot explain the significant improvement induced by the Deferred Re-Weighting (DRW) scheme ($\alpha_y \neq 1$). \citet{DBLP:conf/iclr/MenonJRJVK21} analyze the Fisher consistency of the proposed Logit-Adjustment (LA) loss ($\Delta_y \neq 0$), while the consistency of multiplicative adjustment $\beta_y$ is still unexplored. \citet{DBLP:conf/nips/KiniPOT21} provide a generalization analysis of the Vector-Scaling (VS) loss, but the results are limited in a linear model trained on linearly-separable two-class data. Recently, \citet{hasegawa2024multi} analyze logit-adjustment from the perspective of neural collapse. However, their insights only apply to multiplicative adjustment $\beta_y$. Besides, we find that some modifications are not necessarily compatible with each other, which is also out of the scope of existing theory. 

Given this, this paper aims to provide a unified and fine-grained analysis of both consistency and generalization. After revisiting prior arts, we find that the properties utilized through their proofs are generally global, \textit{i.e.}, defined over the whole dataset. However, model behaviors and loss properties are highly class-dependent due to the imbalance nature. To bridge this gap, we localize these properties within each class, such that the granularity of analysis can align with $\alpha_y, \beta_y, \Delta_y$. \textbf{For consistency}, in Sec.\ref{sec:consistency}, we find that the calibration assumption used in \cite{DBLP:conf/iclr/MenonJRJVK21} does not necessarily hold for minority classes, even when the model has been calibrated globally via the mixup technique \cite{DBLP:conf/iclr/ZhangCDL18}. Hence, we relax the assumption to its localized version and provide consistency validation for both additive and multiplicative adjustments.
\textbf{For generalization}, in Sec.\ref{sec:generalization}, we find that the only property of the loss function utilized in existing proofs is Lipschitz continuity \cite{10.5555/2371238,ledoux1991probability}, which is also global. 
Similarly, we derive its localized version, where the local Lipschitz constant of each class exactly corresponds to the class-dependent terms. Building on this, we establish a fine-grained generalization bound that provides theoretical insights into how to better utilize re-weighting and logit adjustment.

Finally, in Sec.\ref{sec:method}, based on the theoretical insights, we develop a principled learning algorithm for our proposed consistent  loss. In Sec.\ref{sec:experiments}, extensive experiments are conducted, \textbf{ranging from training ResNets from scratch under multiple protocols to fine-tuning pre-trained ViTs}. The empirical results not only validate our theoretical insights but also demonstrate the effectiveness of the proposed method.

Overall, the contributions of this paper are as follows:
\begin{itemize}
    \item \textbf{New technique.} We extend the classic model calibration and Lipschitz continuity to their localized versions, such that the granularity of analysis can align with the class-dependent terms.
    \item \textbf{Theoretical insights.} We present a fine-grained analysis of consistency and generalization for loss-oriented imbalanced learning. The theoretical results provide unified insights for re-weighting and logit-adjustment.
    \item \textbf{Principled learning algorithm.} A principled learning algorithm is proposed based on the theoretical insights. Extensive experiments on both ResNets and ViTs not only validate the theoretical results but also demonstrate the effectiveness of the proposed method.
\end{itemize}

\begin{table}[t]
    \caption{An overview of the existing analysis of loss-oriented imbalanced learning, where $\alpha_y, \beta_y, \Delta_y$ denote whether the analysis covers the corresponding terms; Cons. and Gene. denote Consistency and Generalization analysis, respectively; \bmark \ means the analysis is coarse-grained or limited in some restricted settings.}
    \label{tab:comparison}
    \centering
    \renewcommand{\arraystretch}{1.1}
    \begin{tabular}{l|ccc|cc}
        \toprule
        Reference & $\alpha_y$ & $\beta_y$ & $\Delta_y$ & Cons. & Gene. \\
        \midrule
        LDAM \cite{DBLP:conf/nips/CaoWGAM19,DBLP:conf/nips/RenYSMZYL20} & \xmark & \xmark & \cmark & \xmark & \bmark \\
        LA \cite{DBLP:conf/iclr/MenonJRJVK21} & \xmark & \xmark & \cmark & \cmark & \xmark \\
        CDT \cite{DBLP:journals/corr/abs-2001-01385,hasegawa2024multi} & \xmark & \cmark & \xmark & \xmark & \xmark \\
        VS \cite{DBLP:conf/nips/KiniPOT21} & \cmark & \cmark & \cmark & \xmark & \bmark \\
        \midrule
        Conf. ver. \cite{DBLP:conf/nips/WangX00CH23} & \cmark & \cmark & \cmark & \xmark & \cmark \\
        This paper & \cmark & \cmark & \cmark & \cmark & \cmark \\
        \bottomrule
    \end{tabular}
\end{table}

This work extends our NeruIPS 2023 Spotlight paper \cite{DBLP:conf/nips/WangX00CH23}, where we have provided a unified generalization analysis for loss-oriented methods. The novelty of this extended version lies in the following aspects:
\begin{itemize}
    \item In Sec.\ref{sec:consistency}, the concept of localization, introduced in our generalization analysis, is extended to consistency analysis (\textit{i.e.}, Prop.\ref{prop:local_nonlinear_loss}). To the best of our knowledge, this is \textbf{\color{blue}the first consistency validation} for multiplicative logit-adjustment.
    \item In Sec.\ref{sec:generalization}, we refine the generalization analysis to derive \textbf{\color{blue}a sharper term}, $\tilde{\beta}$ (Lem.\ref{lem:Lip_of_vs} and Prop.\ref{prop:vs_generalization}). The theoretical insights and corresponding empirical validation (Fig.\ref{fig:gamma}) have been updated accordingly.
    \item In Sec.\ref{sec:method}, we \textbf{\color{blue}adjust the learning algorithm} based on the new theoretical insights, replacing the logit-adjustment terms with the newly proposed ones (Eq.(\ref{eq:cvs})).
    \item In Sec.\ref{sec:experiments} and Appendix \ref{app:more_exp_results}, we provide additional empirical results, including training ResNets from scratch under \textbf{\color{blue}more protocols} (\textit{e.g.}, Tab.\ref{tab:caifar}, Tab.\ref{tab:caifar-protocol-c},  Fig.\ref{fig:sensitivity_new}, and Fig.\ref{fig:sensitivity_new_b}) and \textbf{\color{blue}fine-tuning pre-trained ViTs} (\textit{e.g.}, Tab.\ref{tab:caifar-protocol-c} and Tab.\ref{tab:ima_ina}).
\end{itemize}    

\section{Related Work}
\subsection{Imbalanced Learning}
\label{sec:imbalanced_learning}
Due to the prevalence of imbalance, there are a large number of studies focusing on imbalanced learning, \textit{a.k.a}, long-tailed learning \cite{DBLP:journals/pami/OksuzCKA21,DBLP:journals/pami/ZhangKHYF23,DBLP:journals/corr/abs-2408-00483}. In this field, the learning process is modified to alleviate the bias towards the majority classes. According to the position of modification, we divide existing methods into three categories: loss-oriented, data-oriented, and module-oriented ones.

Loss-oriented ones fall into the following categories: \textbf{Re-weighting} aims to achieve balanced learning by assigning larger weights to the losses of the minority classes \cite{DBLP:conf/icml/MorikBJ99,DBLP:conf/cvpr/CuiJLSB19,DBLP:conf/iccv/ParkLJ021}. For example, \citet{DBLP:conf/cvpr/CuiJLSB19} propose the Class-Balanced (CB) loss based on the effective number of each class. The Influence-Balanced (IB) loss re-weighs the samples according to their influence on the decision boundary  \cite{DBLP:conf/iccv/ParkLJ021}. Nevertheless, the re-weighting could potentially result in difficulties and instability during optimization \cite{DBLP:conf/cvpr/CuiJLSB19,DBLP:conf/nips/CaoWGAM19,DBLP:conf/cvpr/HuangLLT16}. To address this issue, Deferred Re-Weighting (DRW) performs re-weighting only during the terminal phase of training \cite{DBLP:conf/nips/CaoWGAM19}. 
\textbf{Logit-adjustment} utilizes class-dependent terms to adjust the logits \cite{DBLP:conf/nips/CaoWGAM19,DBLP:conf/iclr/MenonJRJVK21,DBLP:conf/nips/KiniPOT21,DBLP:journals/corr/abs-2001-01385,DBLP:conf/cvpr/TanWLLOYY20,DBLP:journals/pami/TanLLYYHO23}. For instance, the Label Distribution Aware Margin (LDAM) loss adjusts the logits to pursue an optimal shifted margin \cite{DBLP:conf/nips/CaoWGAM19}. The Logit-Adjustment (LA) loss \cite{DBLP:conf/iclr/MenonJRJVK21} and the Class-Dependent Temperatures (CDT) loss \cite{DBLP:journals/corr/abs-2001-01385} employ additive and multiplicative terms, respectively. Combining the two losses, \citet{DBLP:conf/nips/KiniPOT21} present a unified formulation named the Vector-Scaling (VS) loss. Besides, the LA loss has many variants, from the perspective of hard instances \cite{DBLP:conf/aaai/ZhaoC0HZ22}, feature space \cite{DBLP:conf/cvpr/LiCL22}, ensembling \cite{DBLP:conf/iccv/TaoSYCWYDZ23}, and shifted label distribution \cite{DBLP:conf/cvpr/AimarJFK23,DBLP:conf/nips/ZhuTSZ23,DBLP:journals/tcsv/ZhangGLC24,DBLP:conf/icml/0001XWLHBCH24}.

Data-oriented ones modify the training data to alleviate the imbalance issue. \textbf{Data re-sampling} aims to balance the class distribution by over-sampling the minority classes \cite{chawla2002smote} or under-sampling the majority classes \cite{drummond2003c4}. However, oversampling could induce overfitting, while undersampling could limit the learning process within a small sample space \cite{DBLP:journals/jbd/JohnsonK19}. Hence, \textbf{data augmentation} has risen as a promising alternative. For example, RandAugment \cite{DBLP:conf/nips/CubukZS020} is shown effective in imbalanced learning \cite{DBLP:conf/iccv/CuiZ00J21,DBLP:conf/iccv/TaoSYCWYDZ23,DBLP:journals/pami/DuWSH24}. Further, CUDA \cite{DBLP:conf/iclr/AhnKY23} is designed to find fine-grained class-wise augmentations. As another example, Mixup \cite{DBLP:conf/iclr/ZhangCDL18} and its variants \cite{DBLP:conf/eccv/ChouCPWJ20,DBLP:conf/nips/XuCY21} not only improve generalization but also help calibrate the model predictions. Besides, \textbf{data synthesis} uses complex techniques to generate samples, such as GANs \cite{DBLP:conf/cvpr/KhorramJSDL24} and pre-trained foundation models \cite{DBLP:conf/cvpr/ZhaoDL00024}.

Module-oriented ones are much more diverse. To name a few, \textbf{decoupling methods} adopts a two-stage learning paradigm, initially focusing on feature learning through a naïve learning process, followed by retraining the classifier with a balanced label distribution \cite{DBLP:conf/iclr/KangXRYGFK20,DBLP:conf/cvpr/ZhouCWC20,DBLP:conf/cvpr/LiCL22}. \textbf{SAM-based methods} utilize Sharpness-Aware Minimization (SAM) \cite{DBLP:conf/iclr/ForetKMN21} improve the generalization of the tail classes \cite{rangwani2022escaping,DBLP:conf/iccv/ZhouQXS23}. \textbf{contrastive-based methods} re-balance the supervised contrastive loss via a set of parametric class-wise learnable centers \cite{DBLP:conf/iccv/CuiZ00J21,DBLP:journals/pami/DuWSH24}. \textbf{Ensemble methods} assign each expert with different classes \cite{DBLP:conf/cvpr/ZhouCWC20,DBLP:conf/iccv/Cai0H21,DBLP:conf/iclr/WangLM0Y21} or distributions \cite{DBLP:conf/nips/ZhangHHF22,DBLP:conf/icml/0001XWLHBCH24} to acquire complementary knowledge.

Our analysis primarily focuses on loss-oriented methods. Techniques such as RandAugment and SAM are also utilized in some experiments. The empirical results show that when combined with these techniques, our proposed method achieves better performance.

\subsection{Model Calibration}
\label{sec:calibration}
In the early stage, neural networks are typically well-calibrated on binary classification tasks \cite{DBLP:conf/icml/Niculescu-MizilC05}, meaning that the predicted class probabilities of the model could accurately estimate the likelihood of correctness. However, \citet{DBLP:conf/icml/GuoPSW17} find that modern neural networks are poorly calibrated, perhaps due to their high capacity. According to the taxonomy in \cite{DBLP:journals/corr/abs-2308-01222}, calibration methods can be divided into three categories. The first one is post-hoc calibration, such as histogram binning \cite{DBLP:conf/icml/ZadroznyE01} and scaling-based ones \cite{DBLP:conf/icml/GuoPSW17}. The second one is regularization-based calibration, which aims to mitigate overfitting and thus improve calibration \cite{DBLP:conf/iclr/PereyraTCKH17,DBLP:conf/nips/ThulasidasanCBB19,DBLP:conf/cvpr/LiuAGD22}. For example, mixup \cite{DBLP:conf/iclr/ZhangCDL18} and label smoothing \cite{DBLP:conf/nips/MullerKH19} are shown to be effective in improving calibration \cite{DBLP:conf/nips/ThulasidasanCBB19,DBLP:conf/cvpr/LiuAGD22}. The third one alleviates miscalibration by injecting randomness \cite{DBLP:conf/icml/BlundellCKW15}.

Recently, calibration in imbalanced learning has attracted rising attention. Although the Focal loss \cite{DBLP:journals/pami/LinGGHD20} has shown effective in calibration \cite{DBLP:conf/emnlp/WangBSEPD22}, its empirical performance is inferior to loss-oriented methods. Hence, more attention has been paid to implicit regularization. For example, MiSLaS \cite{DBLP:conf/cvpr/ZhongC0J21} utilizes label-aware smoothing and shifted batch normalization. UniMix \cite{DBLP:conf/nips/XuCY21} replaces the beta distribution in mixup with an imbalance-aware one to construct a more class-balanced virtual dataset. \citet{DBLP:conf/cvpr/HongHCSKC21} find their proposed method, named LADE, is well-calibrated since the logits are regularized by the proposed loss. \citet{DBLP:conf/cvpr/AimarJFK23} show that the mixup technique can effectively improve calibration of the LA loss, with proper hyperparameters.

Although these methods have significantly improved the global calibration performance, they generally ignore the heterogeneity across different classes. Our analysis in Sec.\ref{subsec:calibration_issue} shows that the minority classes remain poorly calibrated, even when the model is well-calibrated globally with the mixup technique. This observation motivates us to extend existing calibration assumption to its localized version, which is crucial to the following consistency validation.

\begin{table}[t]
    \caption{Some important notations used in this paper. The subscript $y$ shows that localized properties are widely used in our analysis.}
    \label{tab:notation}
    \centering
    \renewcommand{\arraystretch}{1.05}
    \begin{tabular}{ll}
        \toprule
         & Description \\
        \midrule
        $C$ & the number of classes \\
        $\mathcal{X}, \mathcal{Y}$ & the input space and the label space \\
        $\mathcal{D}$ & the joint distribution defined on $\mathcal{Z} = \mathcal{X} \times \mathcal{Y}$ \\
        $\mathcal{D}_y$ & the class-conditional distribution $\mathbb{P}[\boldsymbol{x} \mid y]$ \\
        $\mathcal{D}_\text{bal}$ & the balanced distribution defined on $\mathcal{Z}$ \\
        $\mathcal{S}$ & the dataset drawn from $\mathcal{D}$, whose size is $N$ \\
        $\mathcal{S}_y$ & the sample set of class $y$, whose size is $N_y$ \\
        $\rho$ & the degree of imbalance, \textit{i.e.}, $N_1 / N_C$ \\
        $\pi_y$ & the class prior, \textit{i.e.}, $N_y / N$ \\
        $f$ & the score function mapping $\mathcal{X}$ to $\mathbb{R}^{C}$ \\
        $\mathcal{F}$ & the set of score functions \\
        $L$ & the loss function mapping $\mathcal{Y} \times \mathbb{R}^C$ to $\mathbb{R}_{+}$ \\
        $\mathcal{R}_\text{bal}(f)$ & the risk defined on $\mathcal{D}_\text{bal}$ \\
        $\mathcal{R}_\text{bal}^L(f)$ & the surrogate risk defined on $\mathcal{D}_\text{bal}$ \\
        $\hat{\mathcal{R}}_\text{bal}^L(f)$ & the empirical risk defined on $\mathcal{S}$ \\
        \midrule
        $\alpha_y$ & the re-weighting term of class $y$ \\
        $\beta_y$ & the multiplicative adjustment term of class $y$ \\
        $\Delta_y$ & the additive adjustment term of class $y$ \\
        $\nu, \gamma, \tau$ & the hyperparameters of $\alpha_y, \beta_y, \Delta_y$, resp. \\
        $\kappa_y^*, \kappa_y^+$ & the local calibration terms \\
        \midrule
        $\mathcal{G}$ & the function family $\{L \circ f,  f \in \mathcal{F}\}$ \\
        $\hat{\mathfrak{C}}_{\mathcal{S}_y}(\mathcal{G}) $ & the complexity of $\mathcal{G}$ defined on $\mathcal{S}_y$ \\
        $\mu_y$ & the local Lipschitz constant of class $y$ \\
        $B_y(f)$ & the minimal prediction of $f$ on class $y$ \\
        $\text{margin}_y^\downarrow$ & the minimal margin of class $y$ \\
        \bottomrule
    \end{tabular}
\end{table}

\section{Preliminary}
\label{sec:pre}
In this section, we present some preliminary knowledge and further outline our main idea. Some important notations are summarized in Tab.\ref{tab:notation}.

We assume that samples are drawn \textit{i.i.d.} from a product space $\mathcal{Z} = \mathcal{X} \times \mathcal{Y}$, where $\mathcal{X}$ denotes the input space and $\mathcal{Y} = \{1, \cdots, C\}$ is the label space. In imbalanced learning, the training set $\mathcal{S} = \{(\boldsymbol{x}^{(n)}, y^{(n)})\}_{n=1}^N$ is sampled from an imbalanced distribution $\mathcal{D}$ defined on $\mathcal{Z}$. Specifically, let $\mathcal{S}_y = \{\boldsymbol{x} \mid (\boldsymbol{x}, y) \in \mathcal{S}\}$ denote the set of samples from the class $y$ and $N_y := \size{\mathcal{S}_y}$ be the size of $\mathcal{S}_y$. Without loss of generality, we assume $N_1 \ge N_2 \ge \cdots \ge N_C$. Then, the degree of imbalance $\rho := N_1 / N_C$ is large in this setting.

Let $\mathcal{D}_\text{bal}$ be the balanced distribution defined on $\mathcal{Z}$. To be concrete, a class $y$ is first uniformly sampled from $\mathcal{Y}$, and then the input $\boldsymbol{x}$ is sampled from the class-conditional distribution $\mathcal{D}_y := \pp{\boldsymbol{x} \mid y}$. Then, our task is to learn a score function $f: \mathcal{X} \to \mathbb{R}^{C}$ to minimize the risk defined on $\mathcal{D}_\text{bal}$:
\begin{equation}
    \label{eq:bal_risk}
    \mathcal{R}_\text{bal} (f) := \frac{1}{C} \sum_{y=1}^C \mathcal{R}_y(f) = \frac{1}{C} \sum_{y=1}^{C} \E{\boldsymbol{x} \sim \mathcal{D}_y}{M(f(\boldsymbol{x}), y)},
\end{equation}
where $\mathcal{R}_y$ is the risk defined on $\mathcal{D}_y$, and $M: \mathbb{R}^C \times \mathcal{Y} \to \mathbb{R}_{+}$ is the measure that evaluates the model performance at $\boldsymbol{z} \in \mathcal{Z}$. For example, one common choice is the error rate:
\begin{equation}
    M(f(\boldsymbol{x}), y) = \I{y \notin \arg \max_{y' \in \mathcal{Y}} f(\boldsymbol{x})_{y'}},
\end{equation}
where $\I{\cdot}$ is the indicator function. Since $M$ is generally non-differentiable and thus hard to optimize, one has to select a differential surrogate loss $L: \mathbb{R}^C \times \mathcal{Y} \to \mathbb{R}_{+}$, inducing the following surrogate risk:
\begin{equation}
    \label{eq:bal_surrogate}
    \mathcal{R}_\text{bal}^L (f) := \frac{1}{C} \sum_{y=1}^C \mathcal{R}_y^L(f) = \frac{1}{C} \sum_{y=1}^{C} \E{\boldsymbol{x} \sim \mathcal{D}_y}{L(f(\boldsymbol{x}), y)}.
\end{equation}
In balanced learning, we can directly minimize the empirical surrogate risk $\hat{\mathcal{R}}_\text{bal}^L$ defined on the balanced training set, and the generalization guarantee is available by traditional concentration techniques \cite{10.5555/2371238}. However, in imbalance learning, the balanced training set is unavailable. Instead, we can only optimize the empirical surrogate risk defined on the imbalanced training set $\mathcal{S}$:
\begin{equation}
    \label{eq:empirical_surrogate}
    \hat{\mathcal{R}}^L(f) := \frac{1}{N} \sum_{(\boldsymbol{x}, y) \in \mathcal{S}} L(f(\boldsymbol{x}), y).
\end{equation}

Given Eq.(\ref{eq:empirical_surrogate}), two research questions naturally arise according to the classic paradigm of machine learning \cite{10.5555/2371238}: 
\begin{itemize}[leftmargin=*]
    \item \textbf{(RQ1)} What kind of surrogate loss $L$ should we select?
    \item \textbf{(RQ2)} Can minimize the empirical surrogate risk $\hat{\mathcal{R}}^L(f)$ guarantee a satisfying generalization performance, \textit{i.e.}, a small $\mathcal{R}_\text{bal}^L(f)$?  
\end{itemize}
For \textbf{(RQ1)}, a basic requirement is Fisher consistency \cite{10.5555/2371238,DBLP:conf/icml/MenonNAC13}, \textit{i.e.}, minimizing the surrogate risk $\mathcal{R}_\text{bal}^L (f)$ can also guarantee a small balanced risk $\mathcal{R}_\text{bal}(f)$. For \textbf{(RQ2)}, we should further investigate the upper bound of the generalization error $\mathcal{R}_\text{bal}^L(f) - \hat{\mathcal{R}}^L(f)$. Combining these, a small balanced risk $\mathcal{R}_\text{bal}(f)$ can be achieved by 
$$
    \hat{\mathcal{R}}^L(f) \xrightarrow{\text{  Gene.  }} \mathcal{R}_\text{bal}^L (f) \xrightarrow{\text{  Cons.  }} \mathcal{R}_\text{bal}(f).
$$ 
In view of this, we next provide a systematic analysis of consistency and generalization in Sec.\ref{sec:consistency} and Sec.\ref{sec:generalization}, respectively. 

To provide unified insights, we consider a family of loss functions named Vector-Scaling (VS) \cite{DBLP:conf/nips/KiniPOT21}:
\begin{equation}
    \label{eq:vs}
    L_\text{VS}(f(\boldsymbol{x}), y) = \alpha_y \cdot L_\text{CE}(\boldsymbol{\beta} \odot  f(\boldsymbol{x}) + \boldsymbol{\Delta}, y),
\end{equation}
where $L_\text{CE}(f(\boldsymbol{x}), y) = - \log \left( \texttt{softmax}(f(\boldsymbol{x})_y) \right)$ is the traditional CE loss \cite{10.5555/2371238}; $\texttt{softmax}$ denotes the softmax function; $\alpha_y \in \mathbb{R}$ is the re-weighting term; $\boldsymbol{\beta}, \boldsymbol{\Delta} \in \mathbb{R}^C$ are additive and multiplicative adjustment terms, respectively; $\odot$ denotes the element-wise product.
This loss family generalizes a broad range of existing loss functions. For example, when $\boldsymbol{\beta} = \boldsymbol{1}, \boldsymbol{\Delta} = \boldsymbol{0}$, $\alpha_y = \pi_y^{-1}$ and $\alpha_y = (1 - p) / (1 - p^{N_y}), p \in (0, 1)$ recover the classic balanced loss \cite{DBLP:conf/icml/MorikBJ99} and the Class-Balanced (CB) loss \cite{DBLP:conf/cvpr/CuiJLSB19}, respectively, where $\pi_y := N_y / N$ denotes the ratio of class $y$. Given $\alpha_y = 1, \boldsymbol{\beta} = \boldsymbol{1}$, $\Delta_y = \tau \log \pi_y, \tau > 0$ yields the Logit-Adjusted (LA) loss \cite{DBLP:conf/iclr/MenonJRJVK21}. When $\alpha_y = 1, \beta_y = (N_y / N_1)^{\gamma}, \boldsymbol{\Delta} = \boldsymbol{0}, \gamma > 0$, it beomces the CDT loss \cite{DBLP:journals/corr/abs-2001-01385}.

\begin{figure*}[t]
    \centering
    \includegraphics[width=0.9\textwidth]{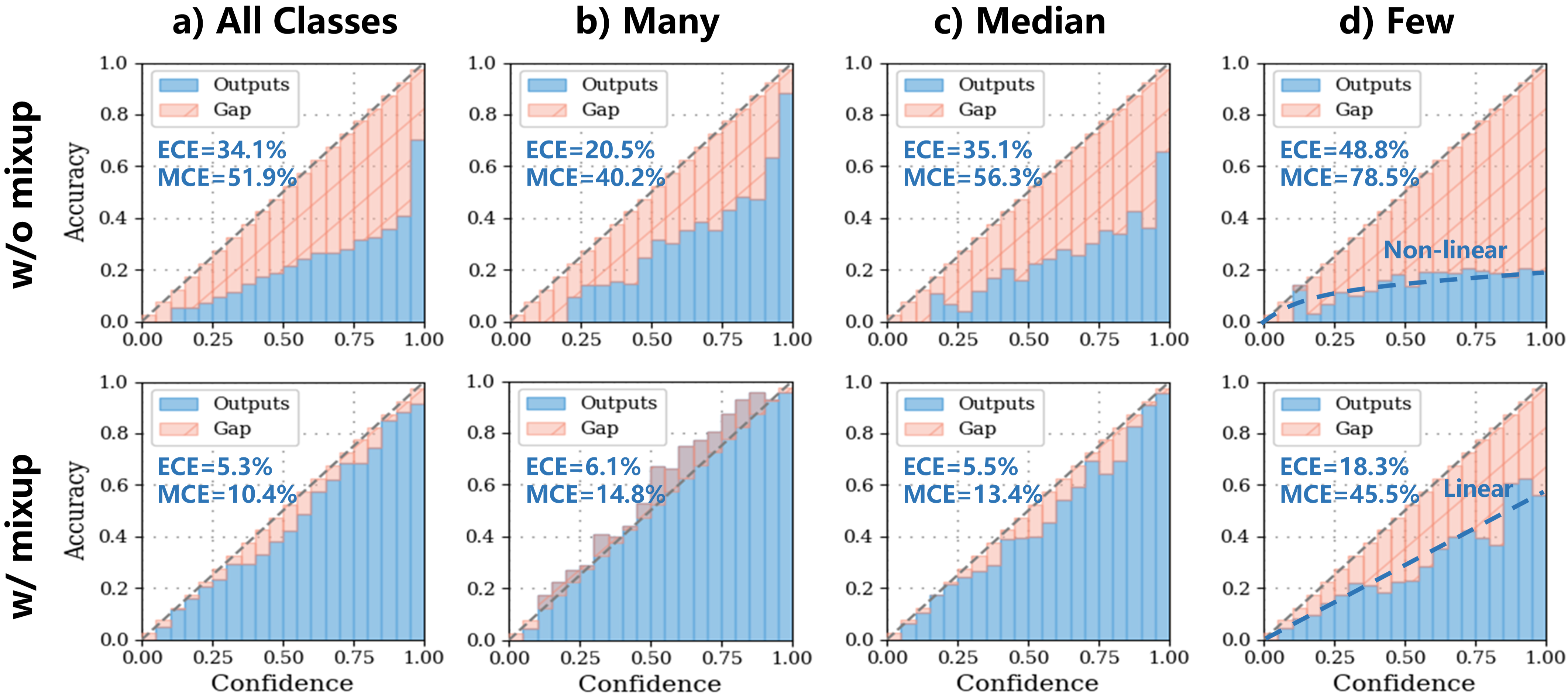}
    \caption{The calibration issue of minority classes: Although the mixup technique can significantly improve the overall/many/median calibration (\textbf{a, b, c}, respectively), the model still exhibits poor calibration in the few classes (\textbf{d}).}
    \label{fig:local_calibration}
\end{figure*}

\section{Consistency Analysis}
\label{sec:consistency}
Given Eq.(\ref{eq:vs}), how to choose proper $\alpha, \boldsymbol{\beta}, \boldsymbol{\Delta}$? To answer this question, in Sec.\ref{subsec:existing_analysis_consistency}, we review the existing consistency analysis, which are based on the global assumption. Then, fine-grained empirical results in Sec.\ref{subsec:calibration_issue} show that the minority classes are not well calibrated, even if the model has been globally calibrated. On top of this, we study the consistency property under localized calibration assumptions in Sec.\ref{subsec:consistency_under_calibration}.

\subsection{Consistency under Global Calibration}
\label{subsec:existing_analysis_consistency}
Formally, Fisher consistency requires that optimizing the modified loss $L$ can simultaneously induce a small balanced risk $\mathcal{R}_\text{bal}(f)$ \cite{10.5555/2371238,DBLP:conf/iclr/MenonJRJVK21}:
\begin{definition}[Fisher Consistency]
    The loss function $L$ is Fisher consistent if for any sequence $\{ f^{(t)} \}_{t=1}^{\infty}$, when $t \to \infty$,
    $$\begin{aligned}
        \mathcal{R}_\text{bal}^L (f^{(t)}) \to \min_f \mathcal{R}_\text{bal}^L(f) \implies \mathcal{R}_\text{bal} (f^{(t)}) \to \min_f \mathcal{R}_\text{bal}(f).
    \end{aligned}$$
\end{definition}

\noindent In imbalanced learning, \citet{DBLP:conf/iclr/MenonJRJVK21} point out that under the following global assumption, a subset of the VS loss family can achieve Fisher consistency:
\begin{assumption}[Global Calibration]
    \label{asm:global}
    The posterior probability is proportional to the model confidence for all classes, \textit{i.e.}, $\forall (\boldsymbol{x}, y) \in \mathcal{Z}, \mathbb{P}\left[ y \mid \boldsymbol{x} \right] \propto \exp(f(\boldsymbol{x})_y) \propto \texttt{softmax}(f(\boldsymbol{x})_y)$.
\end{assumption}
\begin{proposition}[\cite{DBLP:conf/iclr/MenonJRJVK21}]
    \label{prop:la_fisher}
    Under Asm.\ref{asm:global}, the VS loss is Fisher consistent for any constants $\boldsymbol{\delta} \in \mathbb{R}_+^C$, if 
    $\alpha_y = \delta_y / \pi_y, \boldsymbol{\beta} = \boldsymbol{1}, \Delta_y = \log \delta_y$.
\end{proposition}
    This proposition shows that the LA loss ($\delta_y = \pi_y$) and the classic balanced loss ($\delta_y = 1$) are Fisher consistent. However, Asm.\ref{asm:global} might be unattainable in practice since models trained on imbalanced data tend to be more miscalibrated \cite{DBLP:conf/cvpr/ZhongC0J21,DBLP:conf/nips/XuCY21,DBLP:conf/cvpr/AimarJFK23}. Although some techniques, such as mixup \cite{DBLP:conf/iclr/ZhangCDL18} can improve the overall calibration \cite{DBLP:conf/cvpr/AimarJFK23}, we find that \textbf{different class groups generally exhibit different calibration performance}, which we will elaborate on in the next subsection.

\subsection{Local Calibration Issue}
\label{subsec:calibration_issue}
One popular approach to measure model calibration is to plot reliability diagram \cite{DBLP:conf/icml/GuoPSW17,DBLP:conf/icml/Niculescu-MizilC05}. Specifically, the model predictions $f(\boldsymbol{x})$ are first divided into $M$ interval bins $\{\mathcal{B}_i\}_{i=1}^{M}$, where the $i$-th bin $\mathcal{B}_i$ contains the instances whose confidence ranges in $( \frac{i-1}{M}, \frac{i}{M} ]$. Then, the confidence and accuracy of $\mathcal{B}_i$ are defined as:
\begin{equation}
    \begin{aligned}
        \texttt{Acc}(\mathcal{B}_i) & = \frac{1}{|\mathcal{B}_i|} \sum_{\boldsymbol{x}^{(n)} \in \mathcal{B}_i} \texttt{Acc}(f(\boldsymbol{x}^{(n)})), \\
        \texttt{Conf}(\mathcal{B}_i) &= \frac{1}{|\mathcal{B}_i|} \sum_{\boldsymbol{x}^{(n)} \in \mathcal{B}_i} \texttt{Conf}(f(\boldsymbol{x}^{(n)})).
    \end{aligned}
\end{equation}
where the model confidence is defined by 
\begin{equation}
    \texttt{Conf}(f(\boldsymbol{x})) := \max_{y \in \mathcal{Y}} \texttt{softmax}(f(\boldsymbol{x})_y).
\end{equation}

\noindent If a model is perfectly calibrated, the diagram should be an identity function, \textit{i.e.}, $\forall i \in [1, M], \texttt{Acc}(\mathcal{B}_i) = \texttt{Conf}(\mathcal{B}_i)$. Motivated by this, Expected Calibration Error (ECE) and Maximum Calibration Error (MCE) \cite{DBLP:conf/aaai/NaeiniCH15,DBLP:conf/icml/GuoPSW17} are two widely-used numerical metrics, defined by the mean and maximal calibration error over all bins, respectively:
\begin{equation}
    \begin{aligned}
        &\texttt{ECE}(f) = \sum_{i=1}^M\frac{|\mathcal{B}_i|}{N}|\texttt{Acc}(\mathcal{B}_i) - \texttt{Conf}(\mathcal{B}_i)|, \\
        &\texttt{MCE}(f) = \max_{i\in\{1,\cdots,B\}}|\texttt{Acc}(\mathcal{B}_i) - \texttt{Conf}(\mathcal{B}_i)|.
    \end{aligned}
\end{equation}

In Fig.\ref{fig:local_calibration}, we report the calibration results of a ResNet-32 \cite{DBLP:conf/cvpr/HeZRS16} model trained on the CIFAR-100 LT dataset \cite{krizhevsky2009learning} with the VS loss ($\alpha_y = 1, \beta_y = (N_y / N_1)^{\gamma}, \Delta_y = \tau \log \pi_y$). The classes are divided into three categories according to the number of samples: many, median, and few, following the protocol in \cite{DBLP:conf/cvpr/0002MZWGY19,DBLP:conf/icml/Shi00SH024}. From the results, we have the following observations: 1) Due to the imbalance nature, the model without mixup exhibits poor calibration, especially in the minority classes, which is consistent with the findings in \cite{DBLP:conf/cvpr/ZhongC0J21,DBLP:conf/nips/XuCY21,DBLP:conf/cvpr/AimarJFK23}. 2) The mixup technique significantly improves the overall calibration, which is also consistent with the observation in \cite{DBLP:conf/cvpr/AimarJFK23}. 3) However, even with mixup, the model still exhibits poor calibration in the minority classes, which has not been reported. We should pay attention to this issue since it is closely related to the assumption of consistency.

\subsection{Consistency under Local Calibration}
\label{subsec:consistency_under_calibration}

Faced with this issue, one intuitive strategy is to calibrate the predictions with off-the-shelf techniques \cite{DBLP:conf/icml/GuoPSW17,DBLP:conf/aistats/BertaBJ24}. However, these strategies are rather unfavorable for minority classes. Specifically, calibration methods generally require holding out a portion of data, which can hinder the generalization of minority classes. 
Hence, we turn to investigate the consistency property under localized calibration assumptions (\textit{i.e.}, Asm.\ref{asm:local_linear} and Asm.\ref{asm:local_nonlinear}). Surprisingly, such analysis also provides new theoretical insights for the role of multiplicative terms.

The first assumption directly localizes Asm.\ref{asm:global}. That is, different classes have different slopes $\kappa_y^+$ in the reliability diagram, as shown in Fig.\ref{fig:local_calibration}.

\begin{assumption}[Localized Calibration I]
    \label{asm:local_linear}
    The posterior probability is \textbf{locally} proportional to the model confidence \textit{i.e.}, 
    $\forall (\boldsymbol{x}, y) \in \mathcal{Z}, \mathbb{P}\left[ y \mid \boldsymbol{x} \right] \propto \kappa_y^+ \texttt{softmax}(f(\boldsymbol{x})_y), \kappa_y^+ \in \mathbb{R}$.
\end{assumption}

Accordingly, the loss family defined in Prop.\ref{prop:la_fisher} should be further adjusted by $\kappa_y^+$, whose proof can be found in Appendix \ref{subsec:proof_local_linear_loss}.

\begin{restatable}{proposition}{locallinearloss}
    \label{prop:local_linear_loss}
    Under Asm.\ref{asm:local_linear}, the VS loss is Fisher consistent for any constants $\boldsymbol{\delta} \in \mathbb{R}_+^C$, if 
    $\alpha_y = \delta_y / \pi_y, \boldsymbol{\beta} = \boldsymbol{1}, \Delta_y = \log \delta_y / \kappa_y^+$.
\end{restatable}

\noindent When $\delta_y = \pi_y$, the induced loss function has an additional calibration-related term compared to the LA loss:  
\begin{equation}
    \label{eq:cla}
    L_\text{CLA}(f(\boldsymbol{x}), y) := L_\text{CE}(f(\boldsymbol{x})_y + \tau \log (\pi_y / \kappa_y^+)), 
\end{equation}
where we omit $y$ in the CE loss for conciseness.

In the second assumption, the posterior probability is proportional to the logit, rather than the model confidence. 
\begin{assumption}[Localized Calibration II]
    \label{asm:local_nonlinear}
    The posterior probability is \textbf{locally} proportional to the model prediction, \textit{i.e.}, 
    $\forall (\boldsymbol{x}, y) \in \mathcal{Z}, \mathbb{P}\left[ y \mid \boldsymbol{x} \right] \propto \kappa_y^* f(\boldsymbol{x})_y, \kappa_y^* \in \mathbb{R}$.
\end{assumption}
This assumption suggests that the model performance is proportional to the logarithm of model confidence, instead of being linear. This aligns with the observation in Fig.\ref{fig:local_calibration} and Fig.\ref{fig:local_calibration_exp}, where samples with high confidence tend to have comparable accuracy, especially for the minority classes. Similarly, we have another consistent loss family, whose proof can be found in Appendix \ref{subsec:proof_local_nonlinear_loss}.

\begin{restatable}{proposition}{localnonlinearloss}
    \label{prop:local_nonlinear_loss}
    Under Asm.\ref{asm:local_nonlinear}, the VS loss is Fisher consistent for any constants $\boldsymbol{\delta} \in \mathbb{R}_+^C$, if 
    $\alpha_y = \delta_y / \pi_y, \beta_y = \delta_y / \kappa_y^*, \boldsymbol{\Delta} = \boldsymbol{0}$.
\end{restatable}

\noindent Interestingly, although Asm.\ref{asm:local_nonlinear} slightly differs from Asm.\ref{asm:local_linear}, Prop.\ref{prop:local_nonlinear_loss} provides a consistency insight for the multiplicative term, instead of the additional ones. To the best of our knowledge, \textbf{this is the first time that the multiplicative term is justified by consistency analysis}, since existing works generally focus on its effect on feature space \cite{DBLP:journals/corr/abs-2001-01385,hasegawa2024multi}. When $\delta_y = \pi_y$, this proposition induces a Multiplicative Logit-Adjusted loss with consistency guarantee:
\begin{equation}
    \label{eq:mla}
    L_\text{MLA}(f(\boldsymbol{x}), y) = L_\text{CE}(f(\boldsymbol{x})_y \cdot \pi_y^{\gamma} / \kappa_y^*),
\end{equation}
where $\gamma > 0$ is the temperature hyperparameter, similar to that in the CDT loss \cite{DBLP:journals/corr/abs-2001-01385}.

Combining Prop.\ref{prop:local_linear_loss} and Prop.\ref{prop:local_nonlinear_loss}, we can see that the consistency of the VS loss family is guaranteed under different calibration assumptions. To facilitate the following discussion, we define the subset with Consistency Guarantee as:
\begin{equation}
    \label{eq:cvs}
    L_\text{CVS}(f(\boldsymbol{x}), y) = \alpha_y \cdot L_\text{CE}(\boldsymbol{\beta}^* \odot  f(\boldsymbol{x}) + \boldsymbol{\Delta}^+, y),
\end{equation} 
where $\beta_y^* := \pi_y^{\gamma} / \kappa_y^*$ and $\Delta_y^+ := \tau \log (\pi_y / \kappa_y^+)$ follow the definition in Eq.(\ref{eq:cla}) and Eq.(\ref{eq:mla}), respectively.

\section{Generalization Analysis}
\label{sec:generalization}
So far, consistency analysis has suggested proper class-dependent terms $\alpha_y, \boldsymbol{\beta}, \boldsymbol{\Delta}$. However, we cannot directly minimize the CVS loss since 1) their consistency holds under different assumptions, and 2) their generalization property is still a mystery. Fortunately, a systematic analysis of \textbf{(RQ2)} can provide some clues. Next, in Sec.\ref{subsec:existing_analysis_generalization}, we first show that existing generalization analysis is coarse-grained, failing to provide inspiring insights. Then, Sec.\ref{subsec:data_dependent_analysis} provides a fine-grained generalization analysis based on the localization technique. Finally, we provide a generalization guarantee for the VS loss in Sec.\ref{subsec:application}.

\subsection{Coarse-Grained Generalization Analysis}
\label{subsec:existing_analysis_generalization}
For generalization analysis, \citet{DBLP:conf/nips/CaoWGAM19} and \citet{DBLP:conf/nips/RenYSMZYL20} aggregate the class-wise generalization bound directly with a union bound over class-wise results \cite{10.5555/2371238}:
\begin{proposition}[Union Bound for Imbalanced Learning \cite{DBLP:conf/nips/CaoWGAM19}]
    \label{prop:ldam_bound}
    Given the function set $\mathcal{F}$ and a loss function $L: \mathbb{R}^C \times \mathcal{Y} \to [0, M]$, let $\mathcal{G} := \{L \circ f: f \in \mathcal{F}\}$. Then, for any $\delta \in (0, 1)$, with probability at least $1 - \delta$ over the training set $\mathcal{S}$, the following generalization bound holds for all $g \in \mathcal{G}$:
    \begin{equation}
        \label{eq:ldam_bound}
        \begin{aligned}
            & \mathcal{R}_\text{bal}^L (f) = \frac{1}{C} \sum_{y=1}^C \mathcal{R}_y^L(f) \\
            & \precsim \frac{1}{C} \sum_{y=1}^C \left( \hat{\mathcal{R}}_y^L(f) + \hat{\mathfrak{C}}_{\mathcal{S}_y}(\mathcal{G}) + 3 M \sqrt{\frac{ \log 2 C / \delta}{2 N_y}} \right),
        \end{aligned}
    \end{equation}
    where $\hat{\mathcal{R}}_y^L(f)$ is the empirical risk defined on $\mathcal{S}_y$; 
    $$
        \hat{\mathfrak{C}}_{\mathcal{S}}(\mathcal{G}) := \Ee{\boldsymbol{\xi}}{\sup_{g \in \mathcal{G}} \frac{1}{N}\sum_{n=1}^{N} \xi^{(n)} g(\boldsymbol{z}^{(n)}) }
    $$
    denotes the empirical complexity of the function set $\mathcal{G}$, and $\boldsymbol{\xi} := ( \xi^{(1)}, \xi^{(2)}, \cdots, \xi^{(N)} )$ are sampled from independent distributions such as the uniform distribution with $\{1, -1\}$; $\precsim$ denotes the asymptotic notation that omits undominated terms, that is,
    $$
        f(t) \precsim g(t) \Longleftrightarrow  \exists \text{ a constant } C > 0, f(t) \le C \cdot g(t).
    $$
\end{proposition}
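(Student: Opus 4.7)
The plan is to derive a class-wise uniform convergence bound and then aggregate over the $C$ classes via a union bound. First, I would treat $\mathcal{S}_y$ as an i.i.d.\ sample of size $N_y$ drawn from the class-conditional distribution $\mathcal{D}_y$, so that $\mathcal{R}_y^L(f) = \E{\boldsymbol{x} \sim \mathcal{D}_y}{L(f(\boldsymbol{x}), y)}$ is the population mean of a $[0,M]$-valued function and $\hat{\mathcal{R}}_y^L(f)$ is its empirical counterpart. The classical Rademacher-based route (apply McDiarmid's inequality to $\sup_{g \in \mathcal{G}}(\mathcal{R}_y^L - \hat{\mathcal{R}}_y^L)(g)$, then symmetrize) yields, with probability at least $1 - \delta'$,
\[
\mathcal{R}_y^L(f) \le \hat{\mathcal{R}}_y^L(f) + 2\,\mathfrak{C}_{N_y}(\mathcal{G}) + M\sqrt{\frac{\log(1/\delta')}{2 N_y}},
\]
where $\mathfrak{C}_{N_y}(\mathcal{G}) = \mathbb{E}_{\mathcal{S}_y}[\hat{\mathfrak{C}}_{\mathcal{S}_y}(\mathcal{G})]$ is the expected Rademacher complexity on $N_y$ samples.

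Second, I would pass from the expected Rademacher complexity to the empirical one. Since swapping a single point in $\mathcal{S}_y$ changes $\hat{\mathfrak{C}}_{\mathcal{S}_y}(\mathcal{G})$ by at most $2M/N_y$, a second McDiarmid bound gives an additional $M\sqrt{\log(1/\delta')/(2N_y)}$ deviation. Combining the two deviation terms and absorbing constants into the leading factor yields the $3M\sqrt{\log(\cdot)/(2N_y)}$ term as stated.

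Third, I would allocate the confidence budget by taking $\delta' = \delta/C$ in each class-wise bound and then apply a union bound across $y = 1, \dots, C$ so that all $C$ inequalities hold simultaneously with probability at least $1 - \delta$; this converts the logarithm into $\log(2C/\delta)$. Averaging the $C$ class-wise bounds and invoking the definition $\mathcal{R}_\text{bal}^L(f) = C^{-1} \sum_y \mathcal{R}_y^L(f)$ delivers exactly the claimed inequality.

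The main obstacle is not mathematical depth but careful bookkeeping: one must route the confidence budget through (i) the McDiarmid step for the supremum, (ii) the McDiarmid step comparing empirical and expected Rademacher complexities, and (iii) the union bound over classes, while ensuring every multiplicative constant is absorbed into the $3M$ factor (the $\precsim$ notation tolerates a small mismatch). A secondary subtlety is that the class sizes $N_y$ are themselves random when $\mathcal{S}$ is drawn from $\mathcal{D}$; as in \citet{DBLP:conf/nips/CaoWGAM19}, the bound is to be read conditionally on the realized counts, with the $\mathcal{S}_y$ i.i.d.\ from $\mathcal{D}_y$ given $N_y$.
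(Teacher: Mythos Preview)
Your proposal is correct and follows the standard route for this type of result: per-class Rademacher uniform convergence via McDiarmid and symmetrization, a second McDiarmid step to replace expected by empirical complexity, and a union bound over the $C$ classes with confidence split $\delta/C$. The paper itself does not supply a proof of Proposition~\ref{prop:ldam_bound}; it is stated as a known result from \citet{DBLP:conf/nips/CaoWGAM19} and used only as a point of comparison for the new data-dependent bound, so there is no ``paper's own proof'' to compare against beyond the implicit reference to the standard argument you have outlined.
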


To further bound the complexity term $\hat{\mathfrak{C}}_{\mathcal{S}_y}(\mathcal{G})$, \citet{DBLP:conf/nips/CaoWGAM19} apply the traditional contraction lemma \cite{ledoux1991probability} with the assumption that $L(f, y)$ is Lipschitz continuous \cite{ledoux1991probability}:
\begin{definition}[Lipschitz Continuity]
    \label{def:lipschitz}
    Let $\Vert \cdot \Vert$ denote the 2-norm. Then, we say the loss function $L(f, y)$ is Lipschitz continuous with constant $\mu$ if for any $f, f' \in \mathcal{F}$, $\boldsymbol{x} \in \mathcal{S}$,
    \begin{equation}
        \label{eq:lipschitz}
        | L(f, y) - L(f', y) | \le \mu \cdot \Vert f(\boldsymbol{x}) - f'(\boldsymbol{x}) \Vert.
    \end{equation}
\end{definition}
\begin{lemma}[Contraction Lemma]
    \label{lem:contract_lemma}
    Assume that the loss function $L(f, \boldsymbol{x})$ is Lipschitz continuous with a constant $\mu$. Then, the following inequality holds:
    \begin{equation}
        \hat{\mathfrak{C}}_{\mathcal{S}}(\mathcal{G}) \le \mu \cdot \hat{\mathfrak{C}}_{\mathcal{S}}(\mathcal{F}).
    \end{equation}
\end{lemma}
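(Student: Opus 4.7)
The plan is to prove the contraction lemma along the lines of the classical Ledoux--Talagrand argument, but carefully adapted to handle the vector-valued output $f(\boldsymbol{x}) \in \mathbb{R}^C$. Starting from the definition,
\[
\hat{\mathfrak{C}}_{\mathcal{S}}(\mathcal{G}) = \mathop{\mathbb{E}}_{\boldsymbol{\xi}} \sup_{f \in \mathcal{F}} \frac{1}{N}\sum_{n=1}^N \xi^{(n)}\, L\bigl(f(\boldsymbol{x}^{(n)}), y^{(n)}\bigr),
\]
I would proceed by a ``peel-off'' induction on $N$. At step $n$, condition on all other Rademacher variables $\{\xi^{(m)}\}_{m\neq n}$, and use the symmetry of $\xi^{(n)}\in\{\pm 1\}$ to rewrite the conditional expectation as
\[
\tfrac{1}{2}\sup_{f,f' \in \mathcal{F}} \Bigl[\,\bigl(L(f(\boldsymbol{x}^{(n)}), y^{(n)}) - L(f'(\boldsymbol{x}^{(n)}), y^{(n)})\bigr) + R(f,f')\Bigr],
\]
where $R(f,f')$ collects the unchanged terms. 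The key step then is to invoke Lipschitz continuity from Definition~\ref{def:lipschitz} to bound the first bracket by $\mu\,\|f(\boldsymbol{x}^{(n)}) - f'(\boldsymbol{x}^{(n)})\|$, which (by another symmetrization in the opposite direction) can be absorbed back into a Rademacher average over $\mathcal{F}$, contributing the factor $\mu$.

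Iterating this argument across all $n=1,\dots,N$ replaces every $L$-term with a Rademacher-weighted evaluation of $f$, and the cumulative cost is exactly the single multiplicative constant $\mu$. The result $\hat{\mathfrak{C}}_{\mathcal{S}}(\mathcal{G}) \le \mu\,\hat{\mathfrak{C}}_{\mathcal{S}}(\mathcal{F})$ then follows by taking the outer expectation over the remaining Rademacher variables.

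The main obstacle lies in the vector-valued nature of $f$: the classical scalar contraction principle applies directly only when $L$ is a Lipschitz function of a single real input, but here the Lipschitz condition is formulated with the Euclidean norm on $\mathbb{R}^C$. To handle this, I would appeal to Maurer's vector-contraction inequality, which permits replacing $\|f(\boldsymbol{x}) - f'(\boldsymbol{x})\|$ inside a Rademacher sup by a sum of coordinate-wise Rademacher averages over the components of $f$, with the complexity $\hat{\mathfrak{C}}_{\mathcal{S}}(\mathcal{F})$ interpreted as the appropriate aggregate across the $C$ output coordinates. The delicate bookkeeping between the $2$-norm in the Lipschitz condition and the coordinate-wise Rademacher symmetrization is where the argument is tightest; once that is settled, the rest is mechanical.
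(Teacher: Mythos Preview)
The paper does not actually supply a proof of this lemma: it is stated in Section~\ref{subsec:existing_analysis} as a classical background result with a citation to Ledoux--Talagrand, and none of the appendices (\ref{app:basic_lem}--\ref{app:Lip_of_vs}) revisits it. So there is no ``paper's own proof'' to compare against; the lemma is simply imported from the literature.

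Your sketch is the standard route and is essentially sound. One point worth flagging: the paper's definition of $\hat{\mathfrak{C}}_{\mathcal{S}}(\mathcal{F})$ is written for real-valued $g$, so when $\mathcal{F}$ consists of vector-valued score functions $f:\mathcal{X}\to\mathbb{R}^C$ the meaning of $\hat{\mathfrak{C}}_{\mathcal{S}}(\mathcal{F})$ on the right-hand side is left implicit. Maurer's vector-contraction inequality, which you correctly invoke, naturally produces a complexity over the \emph{component} class $\{f(\cdot)_c : f\in\mathcal{F},\, c\in\mathcal{Y}\}$ with an independent Rademacher sign per coordinate, and typically carries an extra $\sqrt{2}$ factor. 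Since the paper treats this lemma as a citation and later absorbs such constants into $\precsim$, this discrepancy is immaterial for its purposes, but if you were writing out the proof in full you would want to state explicitly which complexity of $\mathcal{F}$ appears on the right and whether the constant is exactly $\mu$ or $\sqrt{2}\,\mu$.
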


Finally, the upper bound of $\hat{\mathfrak{C}}_{\mathcal{S}_y}(\mathcal{F})$ is available by the standard margin-based generalization bound \cite{DBLP:conf/nips/KakadeST08}. However, this union bound has the following limitations: 
    \begin{itemize}[leftmargin=*]
    \item \textbf{(L1)} The empirical risk $\frac{1}{C} \sum_{y=1}^C \hat{\mathcal{R}}_y^L(f)$ is not a good approximation of the training risk $\hat{\mathcal{R}}^L(f)$, \textit{i.e.}, Eq.(\ref{eq:ldam_bound}), due to the absence of the class distribution.
    \item \textbf{(L2)} Theoretically, this generalization bound is coarse-grained and not sharp enough. Specifically, the differences among loss functions lie in the choice of class-dependent terms $\alpha_y, \boldsymbol{\beta}, \boldsymbol{\Delta}$. However, the Lipschitz continuity, which is the only property of $L$ used in the proof, is global in nature and thus obscures these differences.
    \item \textbf{(L3)} Empirically, although the induced LDAM loss outperforms the CE loss,  the improvement is not so significant. Fortunately, when combining the Deferred Re-Weighting (DRW) technique \cite{DBLP:conf/nips/CaoWGAM19}, where $\alpha_y = (1 - p) / (1 - p^{N_y}), p \in (0, 1)$ \cite{DBLP:conf/cvpr/CuiJLSB19} during the terminal phase of training, the improvement becomes impressive. However, Eq.(\ref{eq:ldam_bound}) fails to explain this phenomenon.
\end{itemize}

Besides, \citet{DBLP:conf/nips/KiniPOT21} provide a generalization analysis for the VS loss. However, the results, which only hold for linear models with linearly separable data, can only explain the roles of $\boldsymbol{\beta}$. In a nutshell, existing generalization analysis is still coarse-grained and fragmented, failing to provide systematic insights.

\subsection{Data-Dependent Generalization Analysis}
\label{subsec:data_dependent_analysis}
    Different from Eq.(\ref{eq:ldam_bound}), we hope to build a direct bound between $\mathcal{R}_\text{bal}^L (f)$ and $\widehat{\mathcal{R}}^L (f)$, as a solution to the limitation \textbf{(L1)}. To this end, our analysis is based on the following lemma, whose proof can be found in Appendix \ref{app:basic_lem}:
\begin{restatable}{lemma}{basiclem}
    \label{lem:basic_lem}
    Given the function set $\mathcal{F}$ and a loss function $L: \mathbb{R}^C \times \mathcal{Y} \to [0, M]$, then for any $\delta \in (0, 1)$, with probability at least $1 - \delta$ over the training set $\mathcal{S}$, the following generalization bound holds for all $g \in \mathcal{G}$:
    \begin{equation}
        \label{eq:basic_lem}
        \mathcal{R}_\text{bal}^L (f) \precsim  \Phi(L, \delta) + \frac{1}{C \pi_C} \cdot \hat{\mathfrak{C}}_{\mathcal{S}}(\mathcal{G}),
    \end{equation}
    where $\Phi(L, \delta) := \frac{1}{C \pi_C} [ \widehat{\mathcal{R}}^L(f) + 3 M \sqrt{\frac{\log 2 / \delta}{2 N}} ]$ contains the empirical risk on $\mathcal{S}$ and the $\delta$ term. 
\end{restatable}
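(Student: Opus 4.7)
\textbf{Proof plan for Lemma \ref{lem:basic_lem}.} The plan is to avoid the class-wise union bound of Proposition \ref{prop:ldam_bound} entirely by reducing the balanced risk to the (unbalanced) population risk on $\mathcal{D}$ via a single scalar factor, and then applying one Rademacher concentration argument on $\mathcal{S}$ as a whole. This way the $1/\sqrt{N_y}$ rates inside the sum get replaced by a single $1/\sqrt{N}$ rate at the cost of a class-imbalance factor that turns out to be exactly $\frac{1}{C\pi_C}$.

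\textbf{Step 1: Re-expressing the balanced risk.} First I would write $\mathcal{R}^L(f) = \mathbb{E}_{(\bm{x},y)\sim\mathcal{D}}[L(f(\bm{x}),y)] = \sum_{y=1}^C \pi_y\, \mathcal{R}_y^L(f)$ for the risk on the imbalanced distribution. Since $\pi_y \geq \pi_C$ for every $y$ by the convention $N_1 \geq \cdots \geq N_C$, one has $\frac{1}{C} \leq \frac{\pi_y}{C\pi_C}$, so termwise
\begin{equation}
\mathcal{R}_{\text{bal}}^L(f) \;=\; \frac{1}{C}\sum_{y=1}^C \mathcal{R}_y^L(f) \;\leq\; \frac{1}{C\pi_C}\sum_{y=1}^C \pi_y\, \mathcal{R}_y^L(f) \;=\; \frac{1}{C\pi_C}\, \mathcal{R}^L(f).
\end{equation}
This is the key conversion: the balanced risk is dominated by the imbalanced population risk up to a factor $1/(C\pi_C)$, and from here on we only need to control $\mathcal{R}^L(f)$.

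\textbf{Step 2: Standard Rademacher concentration on $\mathcal{S}$.} Next I would apply the classical uniform concentration bound (e.g.\ Theorem 3.3 of Mohri et al., using bounded difference / McDiarmid together with a symmetrization argument) to the function class $\mathcal{G} = \{L\circ f : f\in\mathcal{F}\}$, which takes values in $[0,M]$. This yields, with probability at least $1-\delta$ over $\mathcal{S}\sim\mathcal{D}^N$, uniformly over $g\in\mathcal{G}$,
\begin{equation}
\mathcal{R}^L(f) \;\leq\; \widehat{\mathcal{R}}^L(f) + 2\,\hat{\mathfrak{C}}_{\mathcal{S}}(\mathcal{G}) + 3M\sqrt{\frac{\log(2/\delta)}{2N}}.
\end{equation}
Note this uses the empirical Rademacher complexity (the extra factor of $3$ and the $\log 2/\delta$ come from the standard step of replacing the true complexity by its empirical counterpart, again via McDiarmid applied to $\hat{\mathfrak{C}}_{\mathcal{S}}(\mathcal{G})$ and a union bound over two events, hence $\delta/2$ each).

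\textbf{Step 3: Combine and absorb constants.} Multiplying the bound of Step 2 by $1/(C\pi_C)$ and plugging into Step 1 gives
\begin{equation}
\mathcal{R}_{\text{bal}}^L(f) \;\leq\; \frac{1}{C\pi_C}\Bigl[\widehat{\mathcal{R}}^L(f) + 3M\sqrt{\tfrac{\log(2/\delta)}{2N}}\Bigr] + \frac{2}{C\pi_C}\,\hat{\mathfrak{C}}_{\mathcal{S}}(\mathcal{G}),
\end{equation}
which is exactly the claim after folding the constant $2$ into the $\precsim$ notation and identifying the bracketed term as $\Phi(L,\delta)$.

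\textbf{Main obstacle.} The proof itself is short, and each individual step is classical; the only subtle point is realizing, and justifying, that the balanced-to-imbalanced comparison in Step 1 is tight enough to be worth the $1/(C\pi_C)$ blow-up. This matters because one might worry that a $1/\pi_C$ factor is no better than the $1/\sqrt{N_C}$ factors produced by the union bound in Eq.\ (\ref{eq:ldam_bound}). The payoff only becomes visible when Lemma \ref{lem:basic_lem} is combined with a data-dependent contraction that refines $\hat{\mathfrak{C}}_{\mathcal{S}}(\mathcal{G})$ into class-specific local Lipschitz constants; so while the lemma's proof is straightforward, framing the constant $1/(C\pi_C)$ in the right way (as a data-independent pre-factor rather than an irreducible loss) is the conceptual step that makes the downstream analysis of $\alpha_y, \beta_y, \Delta_y$ possible.
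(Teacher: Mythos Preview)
Your proposal is correct and follows essentially the same route as the paper: first bound $\mathcal{R}_{\text{bal}}^L(f)\le \frac{1}{C\pi_C}\mathcal{R}^L(f)$ (the paper phrases this via $|\langle a,b\rangle|\le\|a\|_\infty\|b\|_1$, which is equivalent to your termwise $\pi_y\ge\pi_C$ argument), then invoke the standard empirical Rademacher generalization bound on the imbalanced sample $\mathcal{S}$, and multiply through by $1/(C\pi_C)$.
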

\begin{remark}
    Recall that $\pi_C := N_C / N, N_1 \ge N_2 \ge \cdots \ge N_C$. Hence, this lemma reveals how the model performance depends on the degree of imbalance in the data. When the dataset becomes extremely imbalanced, \textit{i.e.}, $\pi_C \to 0$, the generalization bound becomes rather loose.
\end{remark}

    As shown in Sec.\ref{subsec:existing_analysis_generalization}, the fine-grained analysis is unavailable due to the global nature of the classic Lipschitz continuous property. In view of this, we extend this traditional definition with the localization technique, as a solution to the limitation \textbf{(L2)}.
\begin{definition}[Local Lipschitz Continuity]
    A loss function $L(f, y)$ is \textbf{locally} Lipschitz continuous with constants $\{\mu_y\}_{y = 1}^C$ if for any $f, f' \in \mathcal{F}, y \in \mathcal{Y}$, $\boldsymbol{x} \in \mathcal{S}_y$, 
    \begin{equation}
        | L(f, y) - L(f', y) | \le \mu_y \cdot \Vert f(\boldsymbol{x}) - f'(\boldsymbol{x}) \Vert.
    \end{equation}
\end{definition}
\noindent Then, the following data-dependent contraction inequality helps us obtain a sharper bound, whose proof is given in Appendix \ref{app:data_dependent_contraction}.

\begin{assumption}
    \label{asm:complexity}
    We assume that $\hat{\mathfrak{C}}_\mathcal{S}(\mathcal{F}) \sim \mathcal{O}( 1 / \sqrt{N} )$. Note that this result holds for kernel-based models with traditional techniques \cite{10.5555/2371238} and neural networks with the latest techniques \cite{DBLP:conf/iclr/LongS20}. And the prior arts also adopt this assumption \cite{DBLP:conf/nips/CaoWGAM19,DBLP:conf/nips/WangX00CH23}.
\end{assumption}

\begin{restatable}[Data-Dependent Contraction]{lemma}{datadeplem}
    \label{lem:data_dependent_contraction}
    Assume that the loss function $L(f, \boldsymbol{x})$ is locally Lipschitz continuous with constants $\{\mu_{y}\}_{y = 1}^C$. Then, the following inequality holds under Asm.\ref{asm:complexity}:
    \begin{equation}
        \hat{\mathfrak{C}}_{\mathcal{S}}(\mathcal{G}) \precsim \hat{\mathfrak{C}}_{\mathcal{S}}(\mathcal{F}) \sum_{y=1}^{C} \mu_{y} \sqrt{\pi_y},
    \end{equation}
\end{restatable}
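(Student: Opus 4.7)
The plan is to reduce the data-dependent contraction to a per-class application of the classical (vector-valued) contraction lemma, together with the scaling behavior of Rademacher complexities guaranteed by Assumption~\ref{asm:complexity}. The starting point is the definition
\[
\hat{\mathfrak{C}}_{\mathcal{S}}(\mathcal{G}) = \Ee{\boldsymbol{\xi}}{\sup_{f \in \mathcal{F}} \frac{1}{N} \sum_{n=1}^{N} \xi^{(n)} L(f(\boldsymbol{x}^{(n)}), y^{(n)})},
\]
which I would immediately reindex by grouping the samples according to their label, writing $\sum_{n=1}^N = \sum_{y=1}^C \sum_{\boldsymbol{x} \in \mathcal{S}_y}$. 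This rewriting is the whole reason the local Lipschitz constants can appear class by class: inside the sum corresponding to class $y$, the loss is only ever evaluated at the second argument $y$, so it behaves as a globally $\mu_y$-Lipschitz function of $f(\boldsymbol{x})$ by the local Lipschitz hypothesis.

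Next, I would use the standard inequality $\sup_f \sum_y (\cdot) \le \sum_y \sup_f (\cdot)$ to split the supremum across classes, obtaining
\[
\hat{\mathfrak{C}}_{\mathcal{S}}(\mathcal{G}) \le \sum_{y=1}^C \frac{N_y}{N} \cdot \Ee{\boldsymbol{\xi}}{\sup_{f \in \mathcal{F}} \frac{1}{N_y} \sum_{\boldsymbol{x} \in \mathcal{S}_y} \xi_{\boldsymbol{x}} L(f(\boldsymbol{x}), y)}.
\]
On the class-$y$ block, $L(\cdot, y)$ is a $\mu_y$-Lipschitz map from $\mathbb{R}^C$ to $\mathbb{R}$, so an application of a vector-valued contraction inequality (e.g., Maurer's vector-contraction lemma, which is the natural analogue of Lemma~\ref{lem:contract_lemma} when $f$ takes values in $\mathbb{R}^C$ under the $2$-norm of Definition~\ref{def:lipschitz}) yields
\[
\Ee{\boldsymbol{\xi}}{\sup_{f \in \mathcal{F}} \frac{1}{N_y} \sum_{\boldsymbol{x} \in \mathcal{S}_y} \xi_{\boldsymbol{x}} L(f(\boldsymbol{x}), y)} \precsim \mu_y \cdot \hat{\mathfrak{C}}_{\mathcal{S}_y}(\mathcal{F}).
\]
Substituting and using $N_y / N = \pi_y$ gives $\hat{\mathfrak{C}}_{\mathcal{S}}(\mathcal{G}) \precsim \sum_{y=1}^C \pi_y \mu_y \hat{\mathfrak{C}}_{\mathcal{S}_y}(\mathcal{F})$.

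Finally, Assumption~\ref{asm:complexity} says $\hat{\mathfrak{C}}_{\mathcal{S}}(\mathcal{F}) \sim \mathcal{O}(1/\sqrt{N})$, which applied to the sub-sample $\mathcal{S}_y$ of size $N_y$ gives $\hat{\mathfrak{C}}_{\mathcal{S}_y}(\mathcal{F}) \sim \hat{\mathfrak{C}}_{\mathcal{S}}(\mathcal{F}) \cdot \sqrt{N/N_y} = \hat{\mathfrak{C}}_{\mathcal{S}}(\mathcal{F}) / \sqrt{\pi_y}$. Plugging this in collapses one factor of $\sqrt{\pi_y}$ against $\pi_y$ and delivers exactly
\[
\hat{\mathfrak{C}}_{\mathcal{S}}(\mathcal{G}) \precsim \hat{\mathfrak{C}}_{\mathcal{S}}(\mathcal{F}) \sum_{y=1}^{C} \mu_y \sqrt{\pi_y}.
\]
I expect the main obstacle to be the choice and justification of the class-wise contraction step: because $f$ is vector-valued, the scalar contraction of Lemma~\ref{lem:contract_lemma} is not directly applicable, and one must invoke a vector-valued version whose constant is usually $\sqrt{2}$ rather than $1$—this factor is absorbed by the $\precsim$ notation, but the citation and the matching of norms to Definition~\ref{def:lipschitz} need care. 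A secondary subtlety is that the step $\sup_f \sum_y \le \sum_y \sup_f$ is lossy and, in principle, could be the source of looseness; however, since the resulting class-$y$ Rademacher complexity still involves the full hypothesis class $\mathcal{F}$ (only the sample changes), the bound remains of the right order and yields the stated fine-grained dependence on $\mu_y \sqrt{\pi_y}$.
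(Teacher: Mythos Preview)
Your proposal is correct and follows essentially the same route as the paper: reindex the empirical complexity by class, push the supremum inside the class sum to obtain $\sum_y \pi_y \hat{\mathfrak{C}}_{\mathcal{S}_y}(\mathcal{G})$, apply contraction with the class-wise constant $\mu_y$, and then use Assumption~\ref{asm:complexity} to trade $\hat{\mathfrak{C}}_{\mathcal{S}_y}(\mathcal{F})$ for $\hat{\mathfrak{C}}_{\mathcal{S}}(\mathcal{F})/\sqrt{\pi_y}$. If anything, you are more explicit than the paper about the need for a vector-valued contraction inequality, which the paper absorbs into a single ``$\precsim$'' step attributed to Assumption~\ref{asm:complexity}.
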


Combining Lem.\ref{lem:basic_lem} and Lem.\ref{lem:data_dependent_contraction}, we have the following theorem. 
\begin{theorem}[Data-Dependent Bound for Imbalanced Learning]
    \label{thm:main}
    Given the function set $\mathcal{F}$ and a bounded loss function $L$, for any $\delta \in (0, 1)$, with probability at least $1 - \delta$ over the training set $\mathcal{S}$, the following generalization bound holds for all $f \in \mathcal{F}$:
    \begin{equation}
        \label{eq:main}
        \mathcal{R}_\text{bal}^L (f) \precsim  \Phi(L, \delta) + \frac{\hat{\mathfrak{C}}_{\mathcal{S}}(\mathcal{F})}{C \pi_C} \sum_{y=1}^{C} \mu_{y} \sqrt{\pi_y}.
    \end{equation}
\end{theorem}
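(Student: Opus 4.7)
The plan is to derive Theorem 1 as a direct composition of Lemma 1 and Lemma 2, both of which are already in hand. Lemma 1 provides an upper bound on $\mathcal{R}_\text{bal}^L(f)$ in terms of $\Phi(L,\delta)$ and the hypothesis-set empirical complexity $\hat{\mathfrak{C}}_{\mathcal{S}}(\mathcal{G})$ of the composed class $\mathcal{G} = \{L \circ f : f \in \mathcal{F}\}$. Lemma 2 controls $\hat{\mathfrak{C}}_{\mathcal{S}}(\mathcal{G})$ by a weighted sum of the base-class complexity $\hat{\mathfrak{C}}_{\mathcal{S}}(\mathcal{F})$, where the weights depend on the per-class Lipschitz constants $\mu_y$ and the empirical class frequencies $\pi_y$ (via $\sqrt{\pi_y}$). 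Substituting the second inequality into the first, the complexity factor $\hat{\mathfrak{C}}_{\mathcal{S}}(\mathcal{G})/(C\pi_C)$ in Lemma 1 is replaced by $\hat{\mathfrak{C}}_{\mathcal{S}}(\mathcal{F}) \sum_{y} \mu_y \sqrt{\pi_y} / (C \pi_C)$, which is exactly the right-hand side of \eqref{eq:main}.

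Concretely, I would first invoke Lemma \ref{lem:basic_lem} to fix the high-probability event on which
\[
\mathcal{R}_\text{bal}^L(f) \precsim \Phi(L,\delta) + \frac{1}{C\pi_C}\,\hat{\mathfrak{C}}_{\mathcal{S}}(\mathcal{G})
\]
holds uniformly over $g \in \mathcal{G}$, with probability at least $1-\delta$. Then I would apply Lemma \ref{lem:data_dependent_contraction}, which is deterministic given $\mathcal{S}$ and therefore adds no further failure probability, to bound $\hat{\mathfrak{C}}_{\mathcal{S}}(\mathcal{G})$. Chaining the two $\precsim$ bounds through the constants absorbed into the asymptotic notation yields the claimed bound, and since the probability statement is inherited entirely from Lemma \ref{lem:basic_lem}, the confidence level $1-\delta$ is preserved.

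There is essentially no technical obstacle at this step: the real work has been moved into the two component lemmas. The only item worth being careful about is ensuring that Assumption \ref{asm:complexity}, which is required by Lemma \ref{lem:data_dependent_contraction}, is carried through so that the data-dependent contraction is applicable; this is stated upstream in the paper and can simply be cited. One should also verify that the composition of the two $\precsim$ inequalities is valid, i.e., that constants do not depend on $N$, $C$, or the class frequencies; since both lemmas absorb only universal constants into $\precsim$, this is immediate. The result is therefore obtained in essentially one line of substitution, and the conceptual value of the theorem lies in the refined structure of the bound rather than in the combinatorial step itself.
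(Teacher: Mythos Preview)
Your proposal is correct and matches the paper's approach exactly: the paper states Theorem~\ref{thm:main} immediately after Lemmas~\ref{lem:basic_lem} and~\ref{lem:data_dependent_contraction} with the one-line justification ``Combining Lem.\ref{lem:basic_lem} and Lem.\ref{lem:data_dependent_contraction}, we have the following theorem,'' which is precisely the substitution you describe. Your additional remarks about the deterministic nature of Lemma~\ref{lem:data_dependent_contraction} (so no extra failure probability) and the need to carry Assumption~\ref{asm:complexity} through are accurate and, if anything, more explicit than what the paper records.
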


    According to Def.\ref{def:lipschitz}, we have $\mu = \max_y \mu_y$. Hence, Thm.\ref{thm:main} is strictly sharper than the traditional generalization bounds that are built upon the global Lipschitz continuity. Besides, when $\mu_y$ is decreasing \textit{w.r.t.} $\pi_y$, this bound becomes sharper since the majority classes are assigned with smaller weigthts. This can explain the design of existing losses, which we will elaborate in the next subsection.

\begin{figure*}[t]
    \centering
    \subfigure[CIFAR-100 LT ($\rho = 100$)]{
        \includegraphics[width=0.28\linewidth]{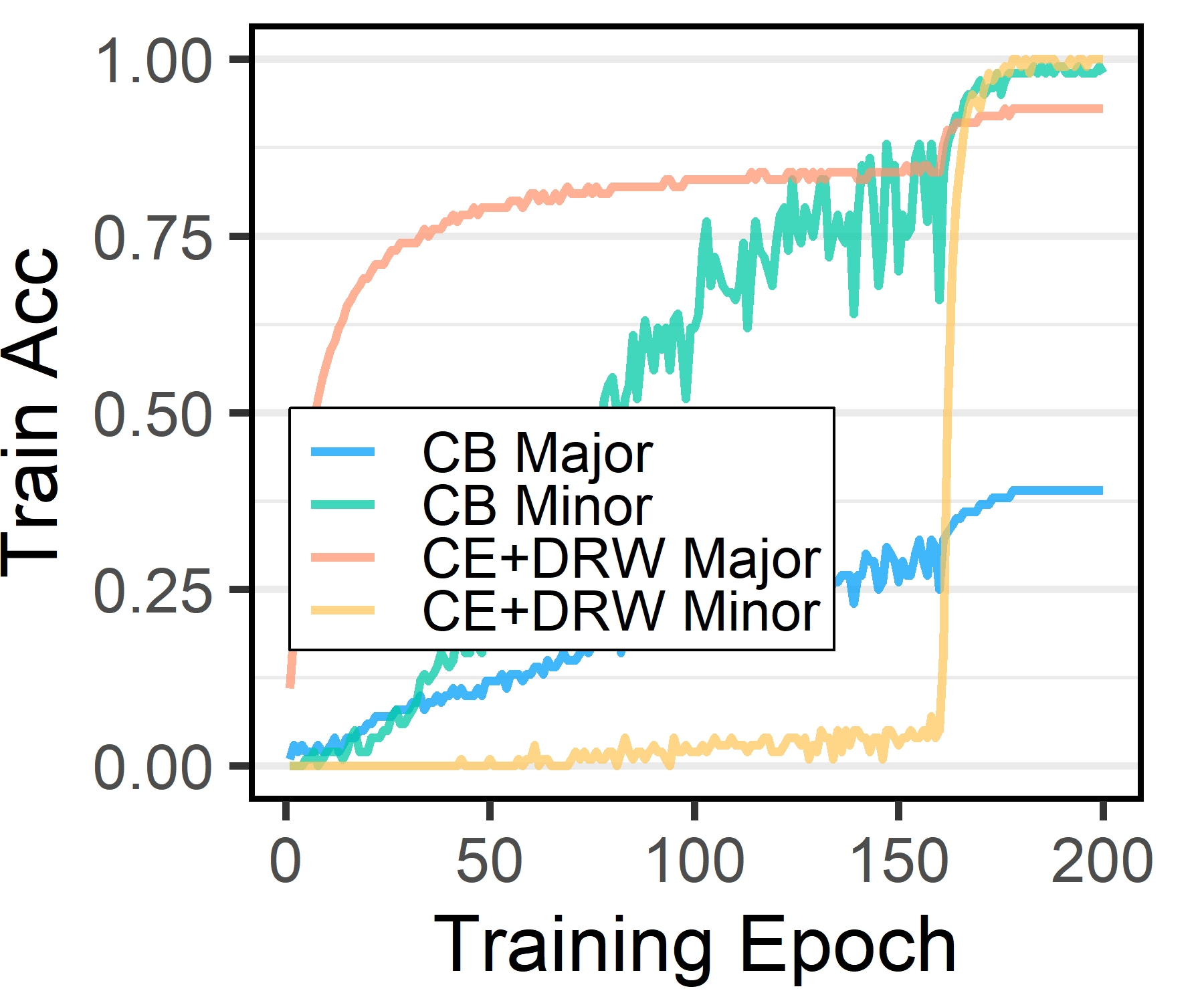}
    }
    \subfigure[CIFAR-100 LT ($\rho = 100$)]{
    \includegraphics[width=0.28\linewidth]{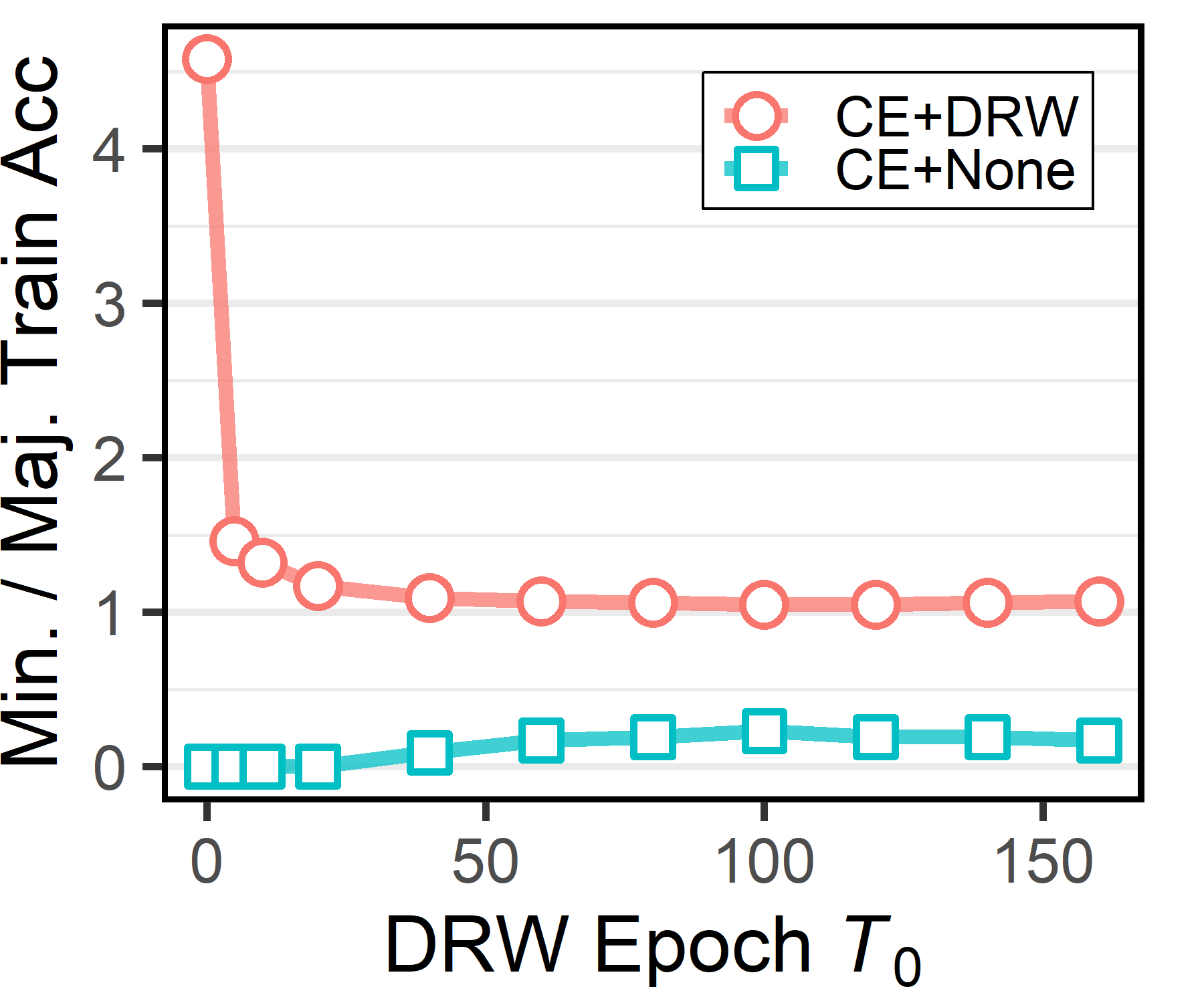}
    }
    \subfigure[CIFAR-100 LT ($\rho = 100$)]{
      \includegraphics[width=0.28\linewidth]{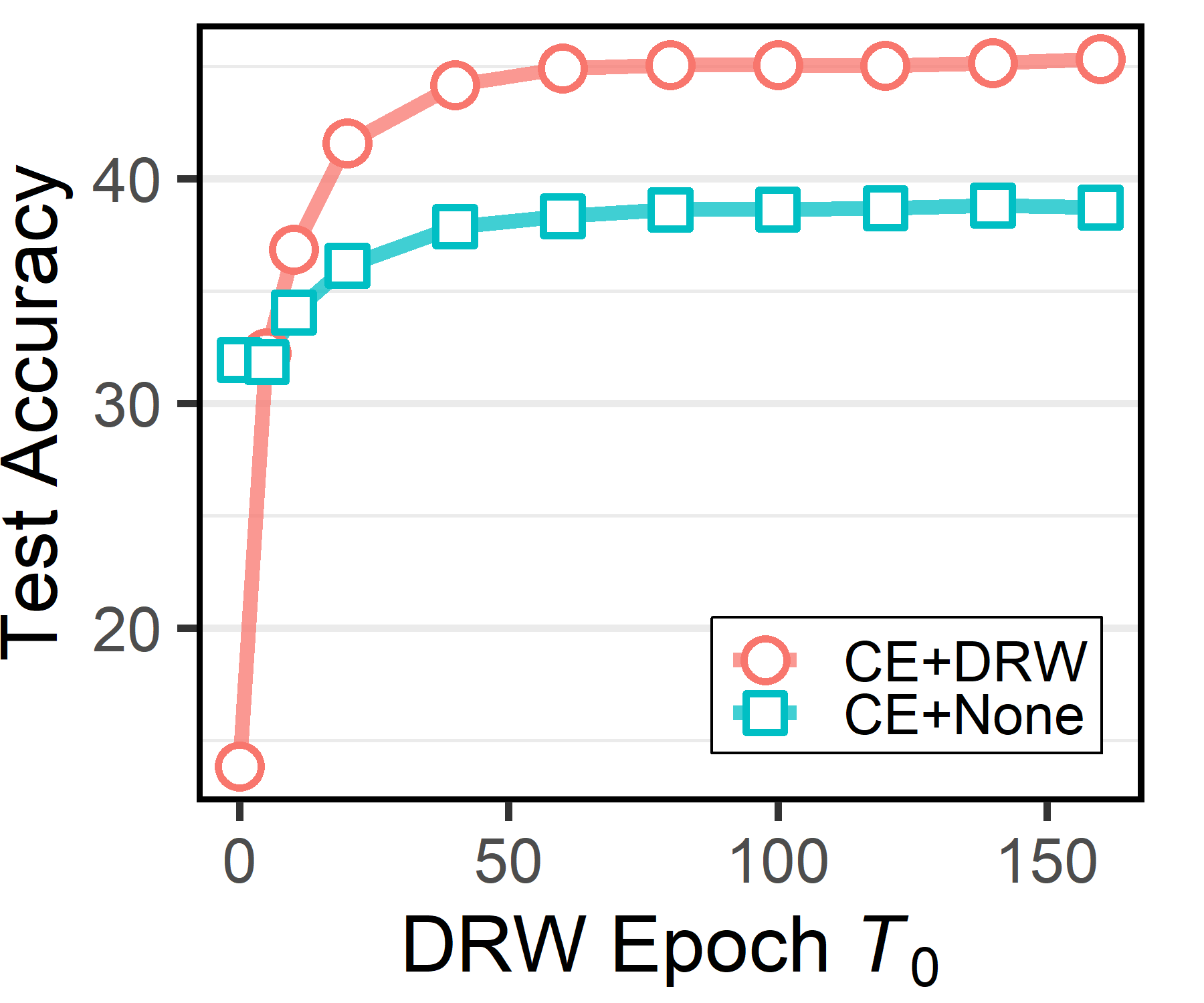}
    }
    \caption{(a) Training accuracy of CE+DRW ($T_0 = 160$) and the CB loss \textit{w.r.t.} training epoch. (b) $\widehat{Acc}_\text{min} / \widehat{Acc}_\text{maj}$ \textit{w.r.t.} the DRW epoch $T_0$, where $\widehat{Acc}_\text{min}$ and $\widehat{Acc}_\text{maj}$ denote the training accuracy of the best model on the minority/majority classes, respectively. (c) The test accuracy of the best model \textit{w.r.t.} the DRW epoch $T_0$. We can find that the DRW scheme balances the training accuracy between the majority classes and the minority classes and thus improves the model performance on the test set, which is consistent with the theoretical insight \textbf{(In2)}.}
    \label{fig:drw_cifar100}
\end{figure*}

\subsection{Application to the VS Loss}
\label{subsec:application}
Next, we apply Thm.\ref{thm:main} to the VS loss to reveal the role of both re-weighting and logit-adjustment. To this end, it is necessary to analyze the local Lipschitz property of the VS loss, whose proof is presented in Appendix \ref{app:Lip_of_vs}.
\begin{restatable}{lemma}{lipofmargin}
    \label{lem:Lip_of_vs}
    Assume that the score function is bounded. Then, the VS loss is locally Lipschitz continuous with constants 
    \begin{equation}
        \label{eq:Lip_of_vs}
        \mu_{y} = \alpha_{y} \tilde{\beta} \left[ 1 - \text{softmax}\left( \beta_{y} B_y(f) + \Delta_{y} \right) \right],
    \end{equation}
    where $\tilde{\beta} := \sqrt{\sum_{y'=1}^{C} \beta_{y'}^2}$, and
    $B_y(f)$ denotes the minimal prediction on the ground-truth class $y$, i.e., $B_y(f) := \min_{\mathbf{x} \in S_y} f(\boldsymbol{x})_y$.
\end{restatable}

\begin{remark}
    $B_y(f)$ is closely related to the minimal margin defined by $\text{margin}_y^\downarrow := \min_{\mathbf{x} \in S_y} (f(\boldsymbol{x})_y - \max_{j \neq y} f(\boldsymbol{x})_j)$. It is not difficult to check that under mild assumptions,
    $$
        0 \le B_y(f) - \text{margin}_y^\downarrow \le \max_{\mathbf{x} \in S_y, j \neq y} f(\boldsymbol{x})_j.
    $$ 
    Hence, as we improve the model performance on class $y$, the RHS of the above inequality, i.e., the gap between $B_y(f)$ and $\text{margin}_y^\downarrow$ will decrease. In other words, both the minimal margin and $B_y(f)$ will increase.
\end{remark}

\begin{remark}
    The local Lipschitz constant defined in eq.(\ref{eq:Lip_of_vs}) is tighter and more concise than our conference version \cite{DBLP:conf/nips/WangX00CH23}, where $\tilde{\beta}$ is replaced by $\sqrt{\beta_y^2 + \left( \sum_{y' \neq y} \beta_{y'} \right)^2}$. \textbf{According to this modification, we have updated the theoretical insight (In4) in the following discussion}, where the negative impact of the multiplicative adjustment term is further highlighted.
\end{remark}

Then, combining Thm.\ref{thm:main} and Lem.\ref{lem:Lip_of_vs}, we have the following proposition, which reveals how loss-oriented methods improve generalization performance by exploiting the data priors.
\begin{proposition}[Data-Dependent Bound for the VS Loss]
    \label{prop:vs_generalization}
    Given the function set $\mathcal{F}$ and the VS loss  $L_\text{VS}$, for any $\delta \in (0, 1)$, with probability at least $1 - \delta$ over the training set $\mathcal{S}$, the following generalization bound holds for all $f \in \mathcal{F}$:
    \begin{equation}
        \label{eq:bound_margin}
        \begin{aligned}
            & \mathcal{R}_\text{bal}^L (f) \precsim \Phi(L_\text{VS}, \delta) + \\
            & \frac{\tilde{\beta} \cdot \hat{\mathfrak{C}}_{\mathcal{S}}(\mathcal{F})}{C \pi_C} \sum_{y=1}^{C} \alpha_{y} \sqrt{\pi_y} \left[ 1 - \text{softmax}\left( \beta_{y} B_y(f) + \Delta_{y} \right) \right].
        \end{aligned}
    \end{equation}
\end{proposition}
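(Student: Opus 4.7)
The plan is to obtain the bound as a direct specialization of Thm.\ref{thm:main} to the VS loss family, using Lem.\ref{lem:Lip_of_vs} to translate the generic local-Lipschitz constants $\mu_y$ into the explicit expression involving $\alpha_y$, $\tilde\beta_y$, $\Delta_y$, and $B_y(f)$. First I would check the compatibility of hypotheses: Thm.\ref{thm:main} requires a loss bounded in $[0, M]$ and local Lipschitz continuity with some constants $\{\mu_y\}$. Boundedness of $L_\text{VS}$ follows from the boundedness assumption on the score function in Lem.\ref{lem:Lip_of_vs} together with the fact that the log-softmax of bounded logits is bounded; so both prerequisites are satisfied under the same assumption.

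Next I would instantiate Thm.\ref{thm:main} with $L = L_\text{VS}$. This gives, with probability at least $1-\delta$ over $\mathcal{S}$,
\begin{equation*}
    \mathcal{R}_\text{bal}^L(f) \precsim \Phi(L_\text{VS}, \delta) + \frac{\hat{\mathfrak{C}}_{\mathcal{S}}(\mathcal{F})}{C\pi_C} \sum_{y=1}^C \mu_y \sqrt{\pi_y},
\end{equation*}
valid for every $f \in \mathcal{F}$, where $\mu_y$ is any admissible local Lipschitz constant on class $y$. I would then invoke Lem.\ref{lem:Lip_of_vs}, which identifies
\begin{equation*}
    \mu_y = \alpha_y \tilde\beta_y \bigl[ 1 - \mathrm{softmax}(\beta_y B_y(f) + \Delta_y) \bigr],
\end{equation*}
as a valid such constant for $L_\text{VS}$. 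Substituting this expression into the generic bound yields exactly Eq.(\ref{eq:bound_margin}).

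The proof itself is therefore essentially a one-line substitution, and I expect no technical obstacle in this step. The real work has already been done in Thm.\ref{thm:main} (which packages the data-dependent contraction machinery) and in Lem.\ref{lem:Lip_of_vs} (which handles the delicate computation of class-wise Lipschitz constants through the gradient of the softmax-cross-entropy composition of $L_\text{VS}$). If anything requires care in the writeup, it is ensuring that the $\mu_y$ produced by Lem.\ref{lem:Lip_of_vs} is uniform in the choice of $f$ used inside the supremum defining $\hat{\mathfrak{C}}_\mathcal{S}(\mathcal{G})$; however, because $B_y(f)$ acts as a lower bound valid for every $f$ in the class (or, equivalently, the result is stated pointwise in $f$ as in Prop.\ref{prop:vs_generalization}), the substitution is legitimate and no additional uniformization argument is needed.
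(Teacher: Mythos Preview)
Your proposal is correct and matches the paper's approach exactly: the paper simply states that Prop.\ref{prop:vs_generalization} follows by ``combining Thm.\ref{thm:main} and Lem.\ref{lem:Lip_of_vs}'', which is precisely the substitution you describe. Your additional remarks on verifying boundedness and on the pointwise-in-$f$ nature of $B_y(f)$ go slightly beyond what the paper makes explicit, but they do not alter the argument.
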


From Eq.(\ref{eq:bound_margin}), we have the following insights. Notably, \textbf{(In1)-(In3)} focus on the design of existing methods, while \textbf{(In4)} foretells the negative impact of the multiplicative adjustment term. The empirical validation of these insights can be found in Sec.\ref{subsec:theory_validation}.

\textbf{(In1) Why re-weighting and logit-adjustment are necessary?} Due to $\sqrt{\pi_y}$ and $B_y(f)$, the generalization bound is also imbalanced among classes. Both re-weighting and logit-adjustment can induce a sharper generalization bound by assigning lower weights to the classes with larger $\sqrt{\pi_y}$ and $B_y(f)$.

    \textbf{(In2) Why the deferred scheme is necessary?} As pointed out in \cite{DBLP:conf/cvpr/CuiJLSB19,DBLP:conf/cvpr/HuangLLT16}, weighting up the minority classes will cause difficulties and instability in optimization, especially when the distribution is extremely imbalanced. To fix this issue, \citet{DBLP:conf/nips/CaoWGAM19} develop a deferred scheme, where $\alpha_y = 1$ and $(1 - p) / (1 - p^{N_y}), p \in (0, 1)$ during the initial and terminal phase of training, respectively. Why does this scheme work? Prop.\ref{prop:vs_generalization} can provide some explanations, as a solution to the limitation \textbf{(L3)}.
Specifically, although a weighted loss can boost the optimization of the minority classes, it is harmful to the further improvement of the majority classes, as shown in Fig.\ref{fig:drw_cifar100}. Hence, the majority/minority classes will have relatively small/large $B_y(f)$ respectively, and the generalization bound becomes even looser. By contrast, in the DRW scheme, we have $\alpha_y = 1$ during the initial phase of training. Such a warm-up phase will encourage the model to focus on the majority classes and induce a small $B_y(f)$ for both majority and minority classes after weighing up the minority classes. On top of this, the generalization bound can become sharper, which explains the effectiveness of the deferred scheme.

\textbf{(In3) How does our result explain the design of losses?} On one hand, according to Eq.(\ref{eq:bound_margin}), $\alpha_y$ should decrease as $\pi_y$ increases, which is consistent with the balanced loss with $\alpha_y = \pi_y^{-1}$ \cite{DBLP:conf/icml/MorikBJ99} and $\alpha_y = (1 - p) / (1 - p^{N_y}), p \in (0, 1)$ \cite{DBLP:conf/cvpr/CuiJLSB19}. On the other hand, due to the negative sign, both $\beta_y$ and $\Delta_y$ should increase as $\pi_y$ increases. This insight is consistent with the LDAM loss ($\Delta_y \propto - N_y^{- 1 / 4}$) \cite{DBLP:conf/nips/CaoWGAM19}, the LA loss ($\Delta_y = \tau \log \pi_y$) \cite{DBLP:conf/iclr/MenonJRJVK21}, the ALA loss ($\Delta_y = - 1 / (\log (N_y / N_1 + 1))$) \cite{DBLP:conf/aaai/ZhaoC0HZ22}, the CDT loss ($\beta_y = (N_y / N_1)^{\gamma}$) \cite{DBLP:journals/corr/abs-2001-01385}, and our proposed CVS loss ($\beta_y = \pi_y^{\gamma} / \kappa_y^*, \Delta_y = \tau \log (\pi_y / \kappa_y^+)$). 

\textbf{(In4) Is it completely safe to use these terms?} \textbf{(a)} Unfortunately, for multiplicative logit-adjustment, the answer is negative. To be specific, $\tilde{\beta}$ achieves its minimum value when and only when $\beta_y =1$ for all class $y$, where we assume $\sum_{y=1}^{C} \beta_y = C$ without loss of generality. In other words, although imbalanced $\beta_y$ could help balance the generalization bound among classes, $\tilde{\beta}$ can lead to a looser generalization bound, especially when the generalization bound has been relatively balanced among classes. Combining the insight \textbf{(In2)}, we conjecture that the multiplicative logit-adjustment could be incompatible with the DRW scheme. \textbf{(b)} Fortunately, for re-weighting and additive logit-adjustment, the answer is positive since both terms can generally induce a sharper generalization bound.

\begin{algorithm}[t]
    \setstretch{1.3}
    \caption{The Learning Algorithm for the CVS Loss}
    \label{alg:algorithm}
    \begin{algorithmic}[1]
        \REQUIRE The training set $\mathcal{S} = \{ (\boldsymbol{x}^{(n)}, y^{(n)}) \}_{n = 1}^N$ 
        \REQUIRE A model $f$ parameterized by $\Theta$
        \STATE Initialize the model parameters $\Theta$ randomly
        \FOR{$t = 1, 2, \cdots, T$}
            \FOR{$\mathcal{B}$ in $\text{SampleMiniBatch}(\mathcal{S}, m)$}
                \IF {$t < T_0$} 
                    \STATE Set $\alpha_y = 1, \beta_y = \pi_y^{\gamma} / \hat{\kappa}_y^*, \boldsymbol{\Delta} = \boldsymbol{0}$ \hfill $\triangleright$ MLA
                \ELSE
                    \STATE Set $\alpha_y = 1, \boldsymbol{\beta} = \boldsymbol{1}, \Delta_y = \tau \log (\pi_y / \hat{\kappa}_y^+)$ \hfill $\triangleright$ CLA 
                    \STATE Optional: set $\alpha_y \propto \pi_y^{- \nu}$ \hfill $\triangleright$ ADRW 
                \ENDIF
                \STATE $L(f, \mathcal{B}) \leftarrow \frac{1}{m} \sum_{(\boldsymbol{x}, y) \in \mathcal{B}} L_\text{CVS}(f(\boldsymbol{x}), y)$
                \STATE $\Theta \leftarrow \Theta - \eta \nabla_{\Theta} L(f, \mathcal{B})$
            \ENDFOR
            \STATE Optional: esitimate $\hat{\kappa}_y^*$ or $\hat{\kappa}_y^+$
            \STATE Optional: anneal the learning rate $\eta$
            \ENDFOR
    \end{algorithmic} 
\end{algorithm}

\section{Methdology}
\label{sec:method}
Hitherto, we have provided a systematic theoretical analysis from the perspective of consistency and generalization. Inspired by these theoretical results, we next present a principled learning algorithm for the CVS loss. 
\begin{itemize}
    \item According to \textbf{(In1)-(In3)}, it is crucial to comprehensively utilize re-weighting and logit-adjustment, as they all contribute to improving the generalization bound. When using re-weighting, the DRW scheme is necessary to guarantee a small $B_y(f)$ for both majority and minority classes.
    \item According to the propositions in Sec.\ref{subsec:consistency_under_calibration}, the additive term $\boldsymbol{\Delta}$ and the multiplicative term $\boldsymbol{\beta}$ are consistent under different calibration assumptions. Meanwhile, \textbf{(In4)} indicates that the multiplicative term $\boldsymbol{\beta}$ could be incompatible with the other terms. 
    Combining these insights and the DRW scheme, the MLA loss should be used during the early phase of training. Later, we switch to the CLA loss during the terminal phase of training, optionally equipped with the re-weighting term. Notably, this strategy can be viewed as a special but enhanced version of the TLA scheme proposed in our conference version \cite{DBLP:conf/nips/WangX00CH23}, where $\beta_y = 1$ during the terminal phase of training.
    \item According to Prop.\ref{prop:local_linear_loss} and Prop.\ref{prop:vs_generalization}, we set $\alpha_y \propto \pi_y^{- \nu}, \nu > 0$ to align with $\sqrt{\pi_y}$. This re-weighting term, named Aligned DRW (ADRW), has been proposed in our conference version \cite{DBLP:conf/nips/WangX00CH23}. Notably, its formulation also aligns with the consistency analysis in Sec.\ref{subsec:consistency_under_calibration}. 
    \item To calculate the CVS loss, it is necessary to estimate the calibration terms $\hat{\kappa}_y^*$ and $\hat{\kappa}_y^+$. For efficiency, we perform linear regression on the training set only at the end of each epoch. Recall that $\texttt{Acc}(\mathcal{B}_i)$ and $\texttt{Conf}(\mathcal{B}_i)$ denote the accuracy and the confidence of the $i$-th bin, respectively. Then, according to Asm.\ref{asm:local_linear}, the regression is performed between $\texttt{Acc}(\mathcal{B}_y)$ and $\texttt{Conf}(\mathcal{B}_y)$ to evaluate $\hat{\kappa}_y^+$. For $\hat{\kappa}_y^*$, according to Asm.\ref{asm:local_nonlinear}, we regress $\texttt{Acc}(\mathcal{B}_y)$ and $\texttt{Logit}(\mathcal{B}_y)$, where $\texttt{Logit}(\mathcal{B}_y)$ is the logits averaged within the $i$-th bin.
\end{itemize}

\begin{remark}
    \citet{DBLP:conf/nips/KiniPOT21} point out that the multiplicative term $\boldsymbol{\beta}$ adjusts the optimal solution by moving the decision boundary towards the majority classes, which is consistent with the other work \cite{DBLP:conf/nips/CaoWGAM19,DBLP:journals/corr/abs-2001-01385,hasegawa2024multi}. However, their analysis states that the re-weighting term $\alpha_y$ and the additive term $\boldsymbol{\Delta}$ become ineffective during the terminal phase of training. According to this result, it seems that $\alpha_y$ and $\boldsymbol{\Delta}$ should not be used during the terminal phase of training, which contradicts our method. However, their analysis is limited within linear models trained on linearly separable data for binary classification. By contrast, our analysis can be applied to multiclass classification and non-linear models such as modern neural networks. In Sec.\ref{sec:experiments}, empirical results will also validate the effectiveness of $\alpha_y$ and $\boldsymbol{\Delta}$ during the terminal phase of training.
\end{remark}

Overall, we summarize the learning algorithm in Alg.\ref{alg:algorithm}. In line 5, we use the MLA loss during the early phase of training. Since the calibration terms could be unstable during the early phase of training, we estimate the calibration term $\kappa_y^*$ after several warm-up epochs. In lines 7-9, we switch to the CLA loss in the terminal phase of training, where the re-weighting term $\alpha_y$ is optional. 

\begin{remark}
    Compared with our conference version \cite{DBLP:conf/nips/WangX00CH23}, the CVS loss mainly has two differences. On one hand, the multiplicative logit-adjustment term $(N_y / N_1)^{\gamma}$ is replaced by its consistent counterpart $\pi_y^{\gamma} / \kappa_y^*$, \textit{i.e.}, line 5, according to the theoretical analysis in Sec.\ref{subsec:consistency_under_calibration}. On the other hand, based on the local calibration assumptions, we introduce $\kappa_y^*, \kappa_y^+$ to the additive/multiplicative logit-adjustment terms, respectively, \textit{i.e.}, lines 5, 7, and 13. 
\end{remark}

\begin{figure*}[t]
    \centering
    \subfigure[PEFT ResNet-50 on iNaturalist.]{
    \includegraphics[width=0.31\textwidth]{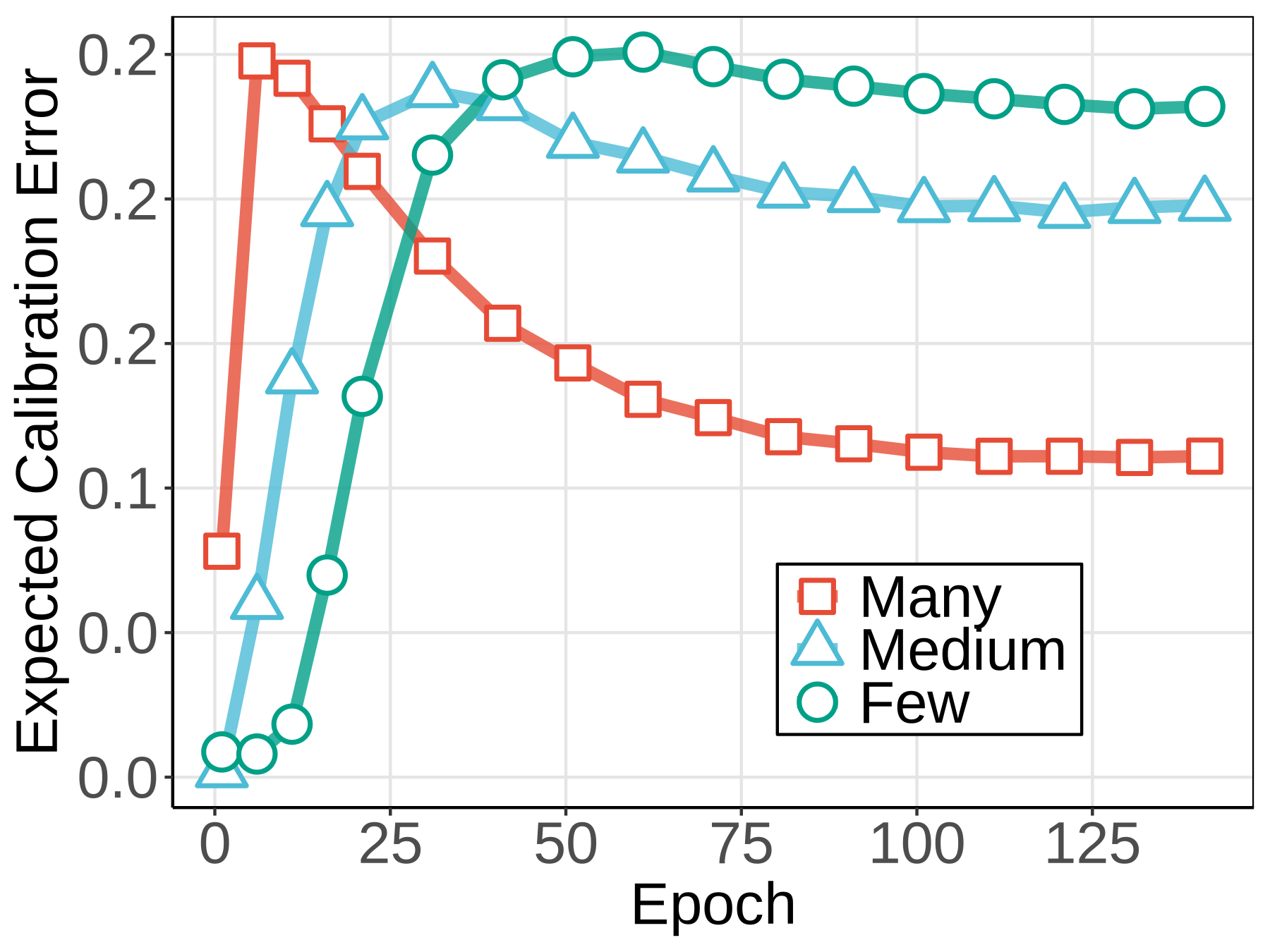} 
    \label{fig:r2q1_resnet50_peft_inat}
    }
    \subfigure[FFT ViT-B/16 on iNaturalist.]{
    \includegraphics[width=0.31\textwidth]{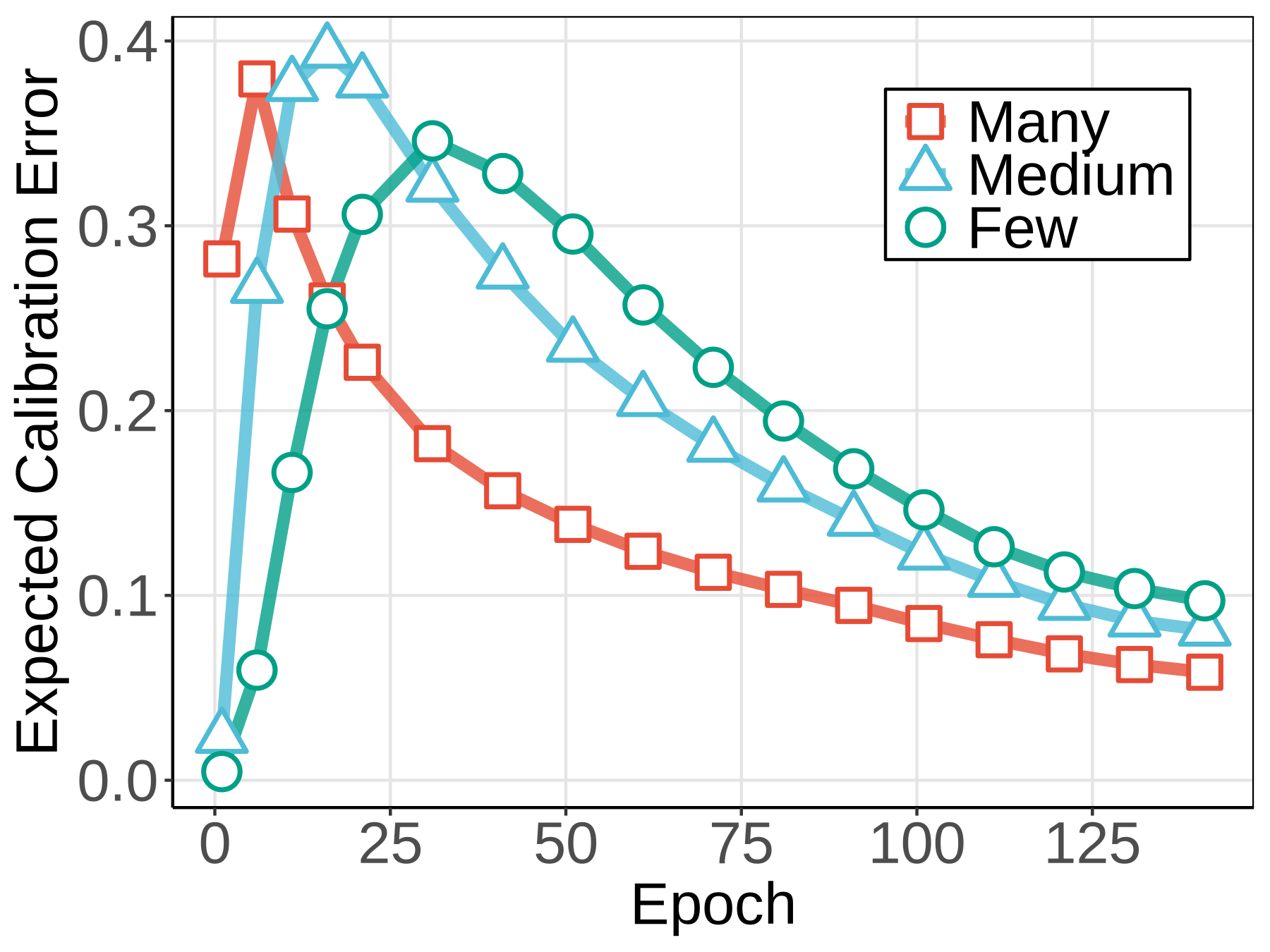} 
    \label{fig:r2q1_vit_fft_inat}
    }
    \subfigure[PEFT ViT-B/16 on iNaturalist.]{
    \includegraphics[width=0.31\textwidth]{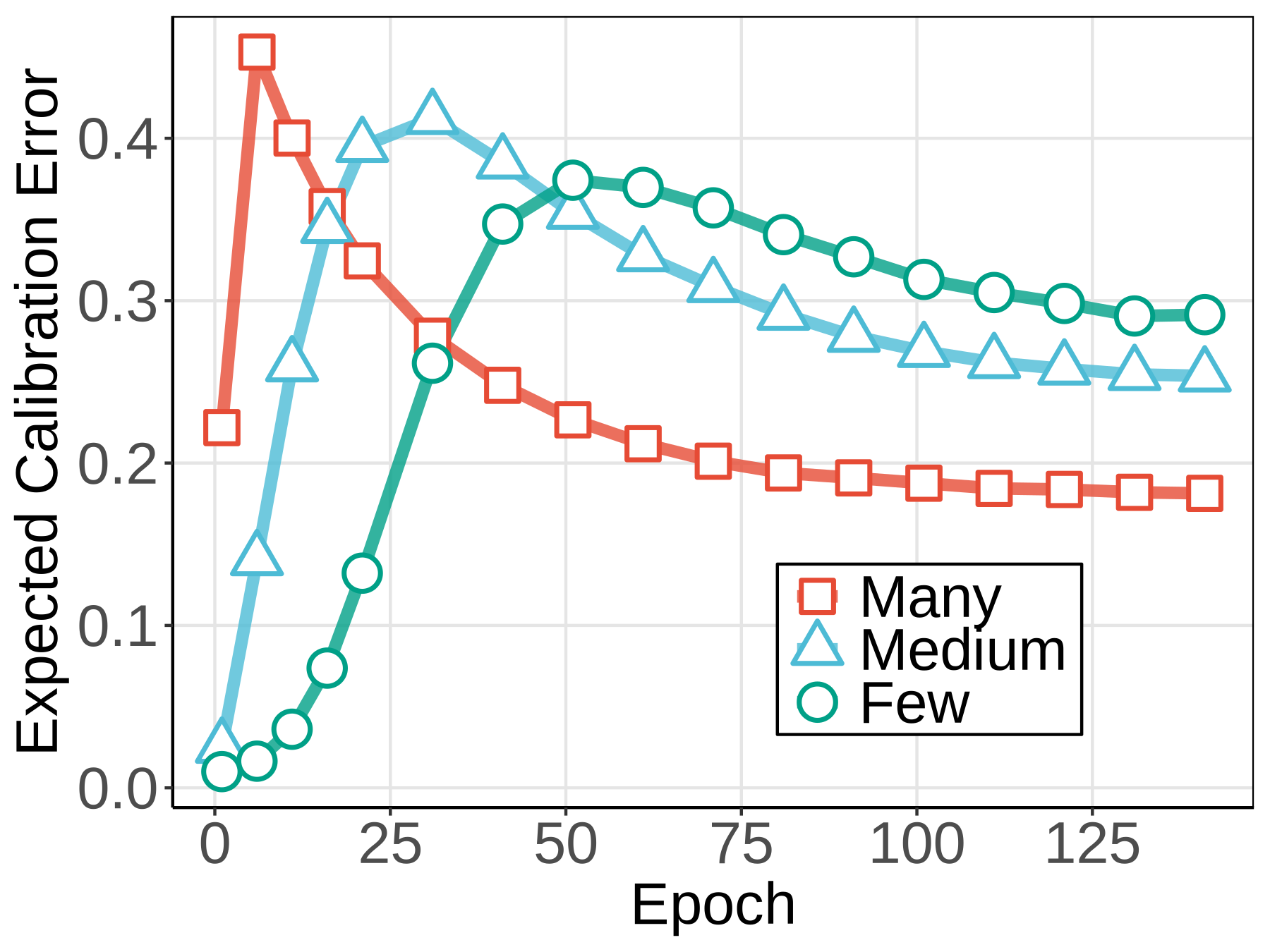} 
    \label{fig:r2q1_vit_peft_inat}
    }
    \caption{The empirical validation of Assumptions 2 and 3 under different settings. The x-axis denotes the training epochs, and the y-axis denotes the Expected Calibration Error (ECE).}
    \label{fig:r2q1_all}
\end{figure*}

At the fisrt galance, the search space of the CVS loss is $\mathcal{O}(N_\gamma \cdot N_\tau \cdot N_\nu)$ and induce additional tuning burden, where $N_\gamma, N_\tau, N_\nu$ are the number of candidate values for the $\nu, \gamma, \tau$, respectively. However, benefiting from the theoretical validation presented in Sec.\ref{subsec:theory_validation}, the tuning burden is actually much smaller than it seems.   

On one hand, according to \textbf{(In4-a)}, the multiplicative logit-adjustment could be incompatible with the DRW scheme. Hence, a small $\gamma$ is enough for most cases, according to our empirical results in Fig.\ref{fig:sensitivity_new} and Fig.\ref{fig:sensitivity_new_b}.
On the other hand, we can adopt a two-stage strategy to reduce the tuning burden. Specifically, in the first stage, we only tune $\gamma$. In the second stage, we can set $\gamma$ as the optimal value found in the first stage and tune $\nu, \tau$ with only the last few epochs.   
In this way, the tuning burden is reduced to $\mathcal{O}(n_\gamma + N_\nu N_\tau / r_\text{defer})$, where $n_\gamma$ denotes the cardinality of a small candidate set for $\gamma$, and $r_\text{defer} := T - T_0 / T$ denotes the ratio of the last few epochs to the total training epochs.

Notably, such a tuning burden is comparable to those of prior arts. For example, the LDAM+DRW loss requires tuning the hyperparameters of margin and re-weighting. If we also adopt the same two-stage strategy, its tuning burden becomes $\mathcal{O}(N_\text{margin} + N_\text{re-weight} / r_\text{defer})$, which is similar to ours. The VS loss requires tuning the hyperparameters of additive logit-adjustment and multiplicative logit-adjustment simultaneously. Hence, its tuning burden is $\mathcal{O}(N_\gamma \cdot N_\tau)$, which could be more burdensome than our method. In a nutshell, our tuning burden can be significantly reduced according to the theoretical analysis and the two-stage strategy, making it acceptable in practice. 

\section{Experiments}
\label{sec:experiments}
In this section, we conduct extensive experiments to validate the effectiveness of the CVS loss on benchmark datasets. The implementation ranges from training ResNets from scratch to fine-tuning pre-trained ViTs. All these results speak to our theoretical insights and the effectiveness of the CVS loss.

\subsection{Experiment Protocols}

\textbf{Datasets.} We conduct experiments on four popular benchmark datasets for imbalanced learning:
\begin{itemize}
    \item \textbf{CIFAR-10 LT and CIFAR-100 LT}: The original version of CIFAR-10\footnotemark[1] and CIFAR-100\footnotemark[1] \cite{krizhevsky2009learning} consists of 50,000 training images and 10,000 validation images, uniformly sampled from 10 and 100 classes, respectively. Following the protocol in \cite{DBLP:journals/nn/BudaMM18,DBLP:conf/cvpr/CuiJLSB19,DBLP:conf/nips/CaoWGAM19}, we consider two types of imbalance: long-tailed imbalance (\textbf{LT}), where the number of training samples for each class decreases exponentially, and step imbalance (\textbf{STEP}), which reduces the sample size of half of the classes to a fixed ratio. For both imbalance types, we report the balanced accuracy averaged over 5 random seeds with the imbalance ratio $\rho \in \{10, 100\}$. Besides, for CIFAR-100 LT ($\rho = 100$), the classes are divided into three subcategories: Many ($>$ 100 images), Medium (20$\sim$100 images), and Few ($<$ 20 images), following the protocol in \cite{DBLP:conf/cvpr/0002MZWGY19,DBLP:conf/icml/Shi00SH024}.
    \item \textbf{ImageNet-LT and iNaturalist}: We use the long-tailed version of the ImageNet dataset\footnotemark[2] \cite{DBLP:journals/ijcv/RussakovskyDSKS15} proposed by \cite{DBLP:conf/cvpr/0002MZWGY19}, which contains 115.8K images from 1K classes with $N_1 = 1280$ and $N_C = 5$. iNaturalist\footnotemark[3] \cite{DBLP:conf/cvpr/HornASCSSAPB18} is a real-world long-tailed dataset with 437.5K images from 8,142 classes. Following the protocol in \cite{DBLP:conf/cvpr/ZhongC0J21}, the classes are split into three subsets under long-tailed imbalance: Many, Medium, and Few. ImageNet-LT has 390/445/165 many/medium/few classes, respectively, while iNaturalist has 842/3701/3599, respectively. We report the balanced accuracy on all the classes and each subset.
\end{itemize}
\footnotetext[1]{\url{https://www.cs.toronto.edu/~kriz/cifar.html}. Licensed MIT.}
\footnotetext[2]{\url{https://image-net.org/index.php}. Licensed MIT.}
\footnotetext[3]{\url{https://github.com/visipedia/inat_comp}. Licensed MIT.}

    \textbf{Competitors.} On the CIFAR datasets, we select several popular loss-oriented methods as the baselines, including the CE loss (CE) \cite{10.5555/2371238}, the LDAM loss (LDAM) \cite{DBLP:conf/nips/CaoWGAM19}, LDAM+DRW \cite{DBLP:conf/nips/CaoWGAM19}, and the VS loss (VS) \cite{DBLP:conf/nips/KiniPOT21} that generalizes the LA loss \cite{DBLP:conf/iclr/MenonJRJVK21} and the CDT loss \cite{DBLP:journals/corr/abs-2001-01385}. We tune all the hyperparameters according to the suggestions in the original papers. Besides, we also compare our conference version \cite{DBLP:conf/nips/WangX00CH23}, denoted by CE+ADRW and VS+TLA+ADRW. On ImageNet-LT and iNaturalist, we select superior methods as competitors, especially the loss-oriented ones, which are listed in Tab.\ref{tab:ima_ina}. 

\textbf{Training protocols.} When \textit{training ResNets}:  For the CIFAR datasets, we train the models under three different protocols. \textbf{(a)} The standard protocol \cite{DBLP:conf/nips/CaoWGAM19} trains a ResNet-32 model \cite{DBLP:conf/cvpr/HeZRS16} for 200 epochs by SGD with a momentum of 0.9, a weight decay of 2e-4, and a bath size of 128 \cite{DBLP:conf/icml/SutskeverMDH13}. A multistep learning rate schedule is used with an initial learning rate of 0.1, divided by 10 at the 160th and 180th epoch by default. \textbf{(b)} The second protocol is the same as the first one but with a tuned weight decay. As shown in \cite{DBLP:conf/cvpr/AlshammariWRK22}, tuning weight decay can encourage balanced weigthts across classes and thus drastically improve the model performance. \textbf{(c)} The last protocol aims to pursue a superior performance. Inspired by \cite{DBLP:conf/iccv/CuiZ00J21}, the learning rate decays by a cosine schedule \cite{DBLP:conf/iclr/LoshchilovH17} from 0.1 to 1e-4 in 400 epochs, and RandAugment \cite{DBLP:conf/nips/CubukZS020} also benefits the training process. Besides, we incorporate the Sharpness-Aware Minimization (SAM) technique \cite{DBLP:conf/iclr/ForetKMN21} to facilitate the optimization of the minority classes, allowing them to escape saddle points and converge to flat minima \cite{rangwani2022escaping}. 
For ImageNet-LT and iNaturalist, we follow the implementation in \cite{DBLP:conf/cvpr/ZhongC0J21,rangwani2022escaping}, where the ResNet-50 model \cite{DBLP:conf/cvpr/HeZRS16} is trained for 90 epochs by SGD with a momentum of 0.9 and a batch size of 256. The learning rate is decided by a cosine schedule, with an initial value of 0.2 and 0.1, respectively. The SAM technique is also used, with a hyperparameter as suggested in \cite{rangwani2022escaping}.
Note that these techniques do not introduce any additional knowledge or model parameters. 
When \textit{fine-tuning ViTs}: Following the protocols in \cite{DBLP:conf/icml/Shi00SH024}, we fine-tune the image encoder of CLIP \cite{DBLP:conf/icml/RadfordKHRGASAM21} with a ViT-B/16 backbone \cite{DBLP:conf/iclr/DosovitskiyB0WZ21}. We use the same hyperparameters as those in \cite{DBLP:conf/icml/Shi00SH024}, except that the learning rate and training epochs are set to $0.001$ and $20$, respectively. 
Besides, the test-time ensembling (TTE) technique \cite{DBLP:conf/icml/Shi00SH024} is also used to further improve the model performance on the ImageNet-LT and iNaturalist datasets. 

\begin{figure*}[t]
    \centering
    \subfigure[CIFAR-10 LT]{
      \includegraphics[width=0.23\linewidth]{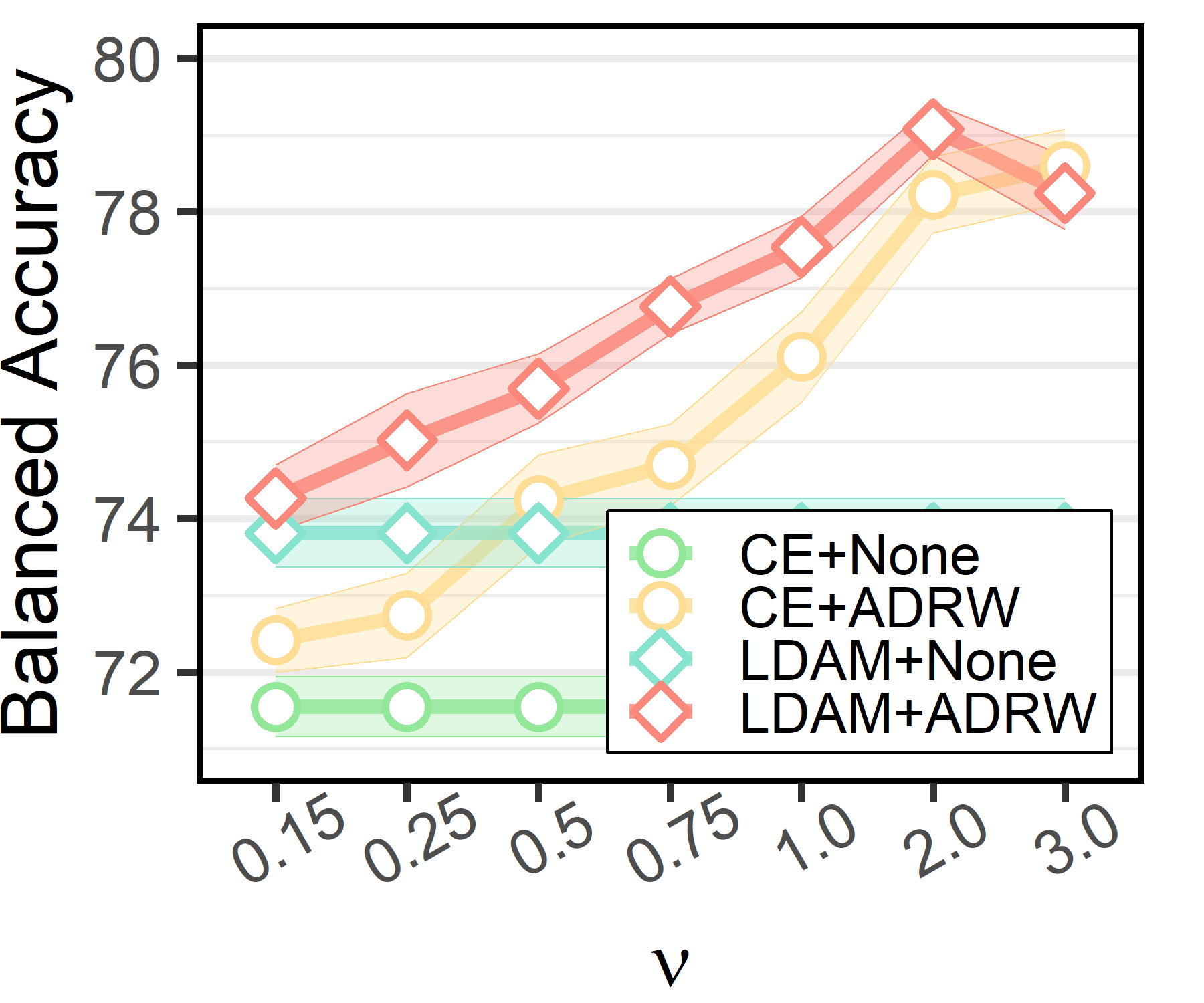}
     }
    \subfigure[CIFAR-10 Step]{
      \includegraphics[width=0.23\linewidth]{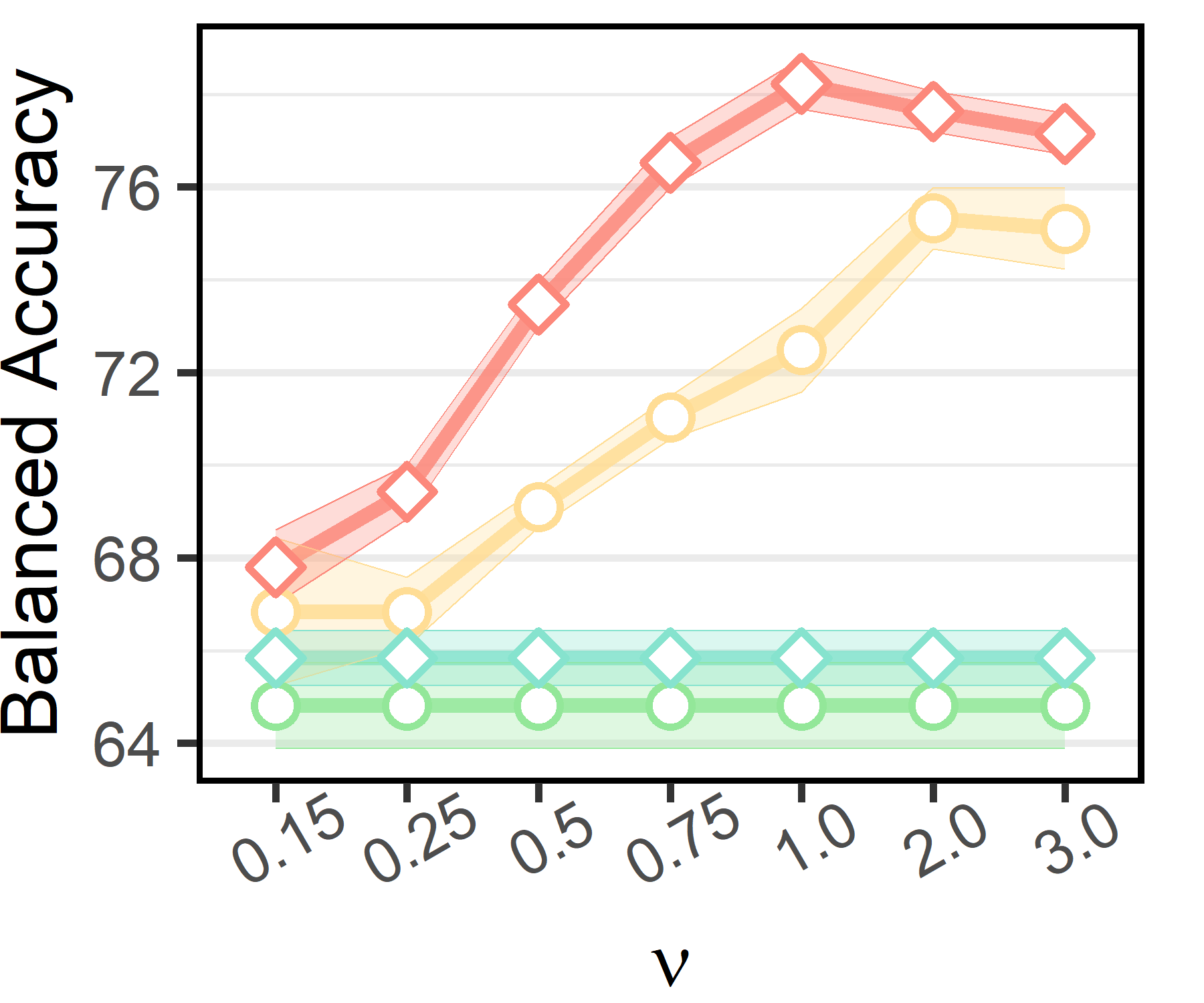}
     }
    \subfigure[CIFAR-100 LT]{
      \includegraphics[width=0.23\linewidth]{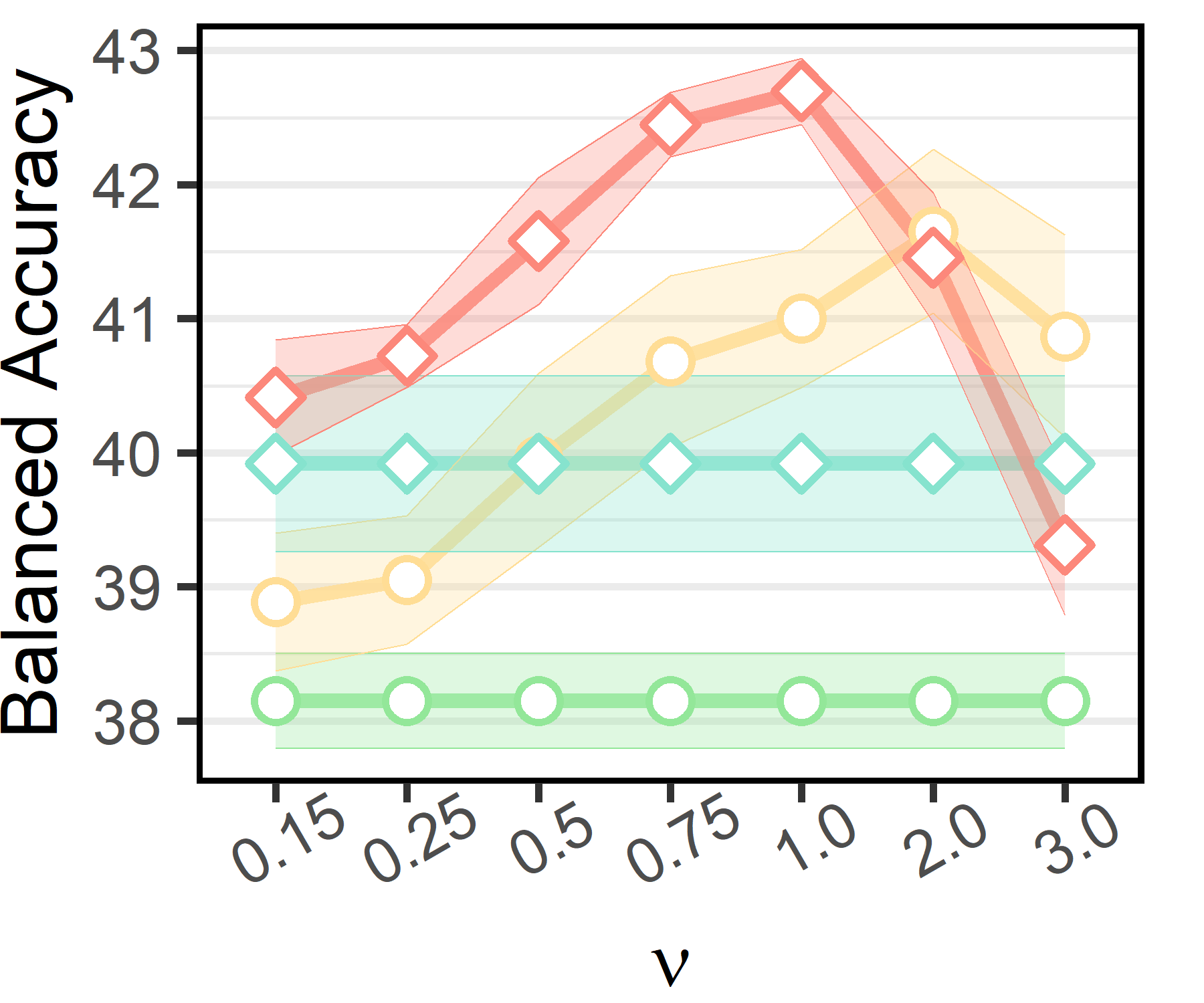}
     }
    \subfigure[CIFAR-100 Step]{
    \includegraphics[width=0.23\linewidth]{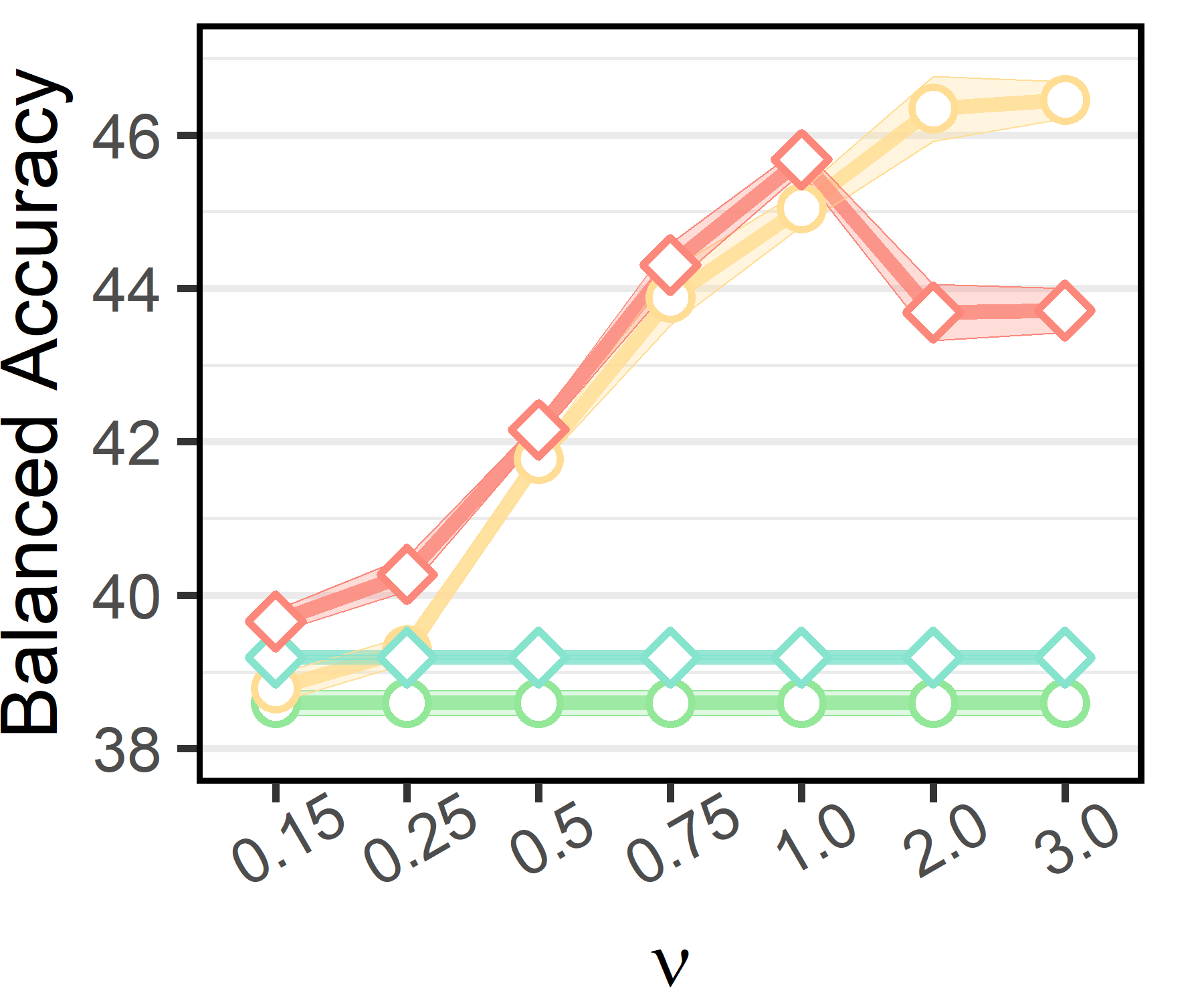}
    }
    \caption{The balanced accuracy of the CE loss and the LDAM loss \textit{w.r.t.} $\alpha_y \propto \pi_y^{-\nu}$ on the CIFAR datasets, where the imbalance ratio $\rho = 100$. Both re-weighting and logit-adjustment boost the model performance, which is consistent with the theoretical insight \textbf{(In1)} and \textbf{(In4-b)}.}
    \label{fig:alpha}
\end{figure*}

\begin{figure*}[!t]
    \centering
    \subfigure[CIFAR-10 LT]{
      \includegraphics[width=0.45\linewidth]{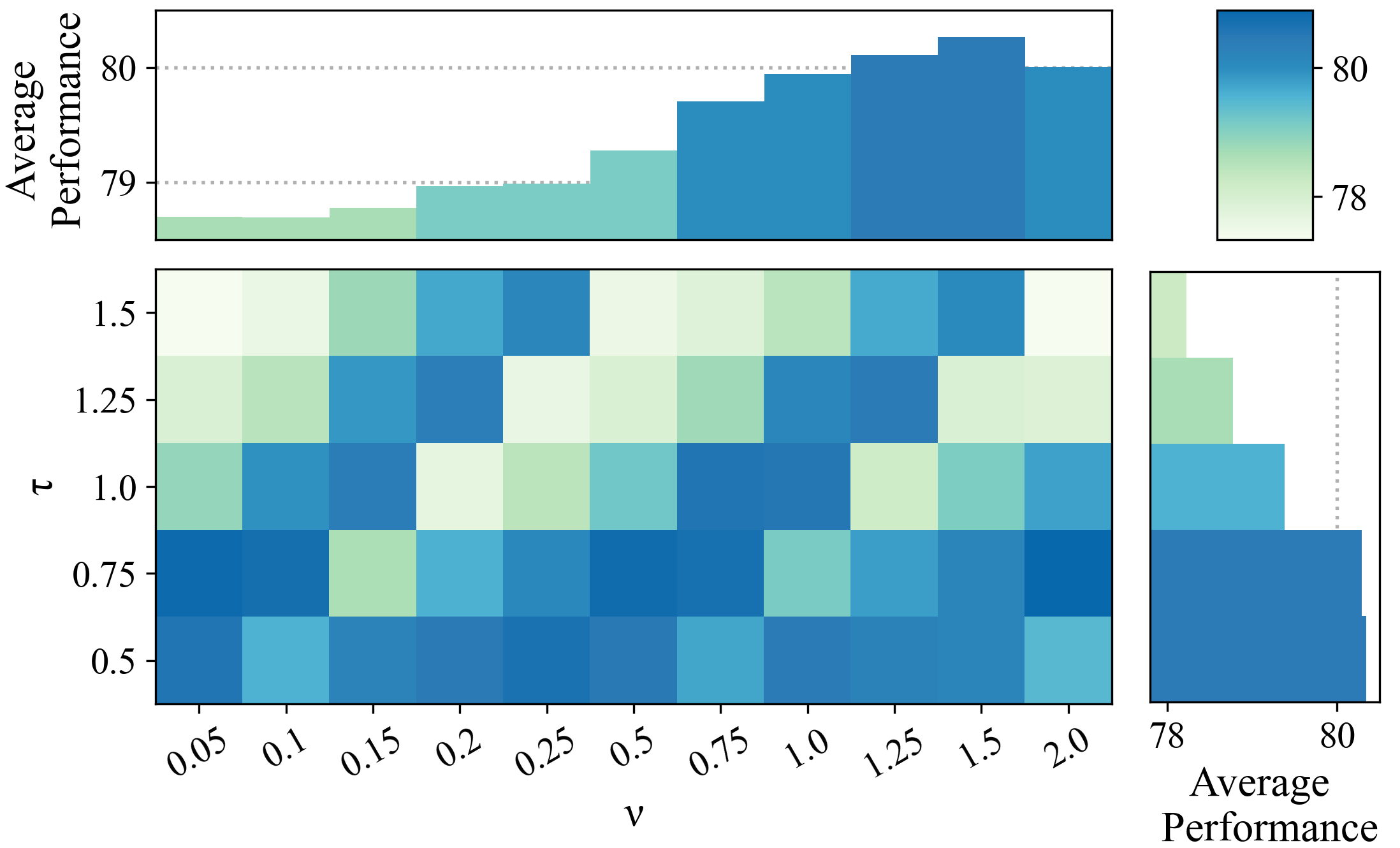}
     }
    \subfigure[CIFAR-10 Step]{
      \includegraphics[width=0.45\linewidth]{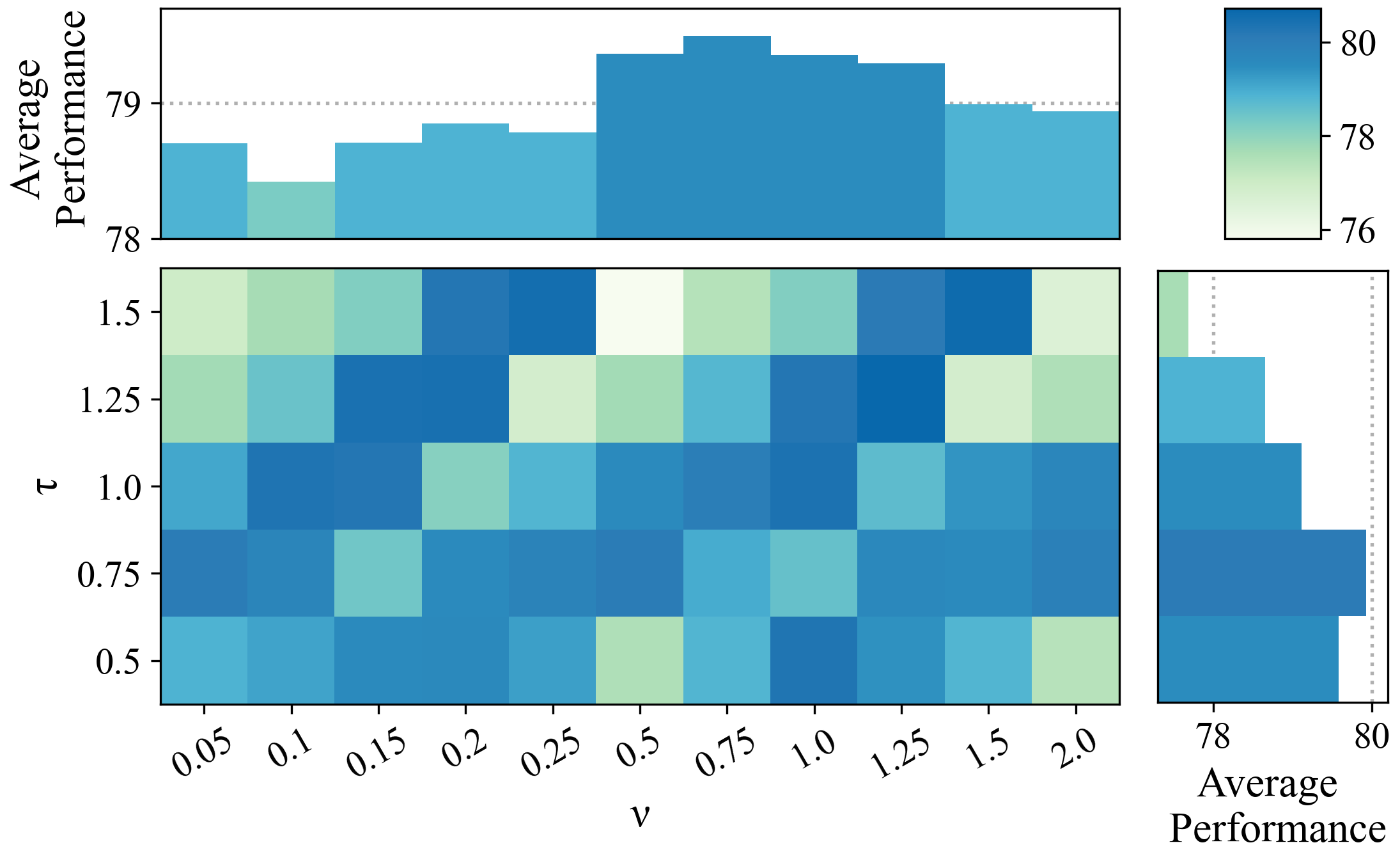}
     }
    \caption{Sensitivity analysis of VS+ADRW \textit{w.r.t.} $\alpha_y \propto \pi_y^{-\nu}$ and $\Delta_y = \tau \log \pi_y$ on the CIFAR-10 dataset, where the imbalance ratio $\rho = 100$. Both re-weighting and logit-adjustment boost the model performance, which is consistent with the theoretical insights \textbf{(In1)} and \textbf{(In4-b)}.}
    \label{fig:sensitivity_cifar10}
\end{figure*}

\begin{figure*}[t]
    \centering
    \subfigure[CIFAR-10 LT]{
      \includegraphics[width=0.23\linewidth]{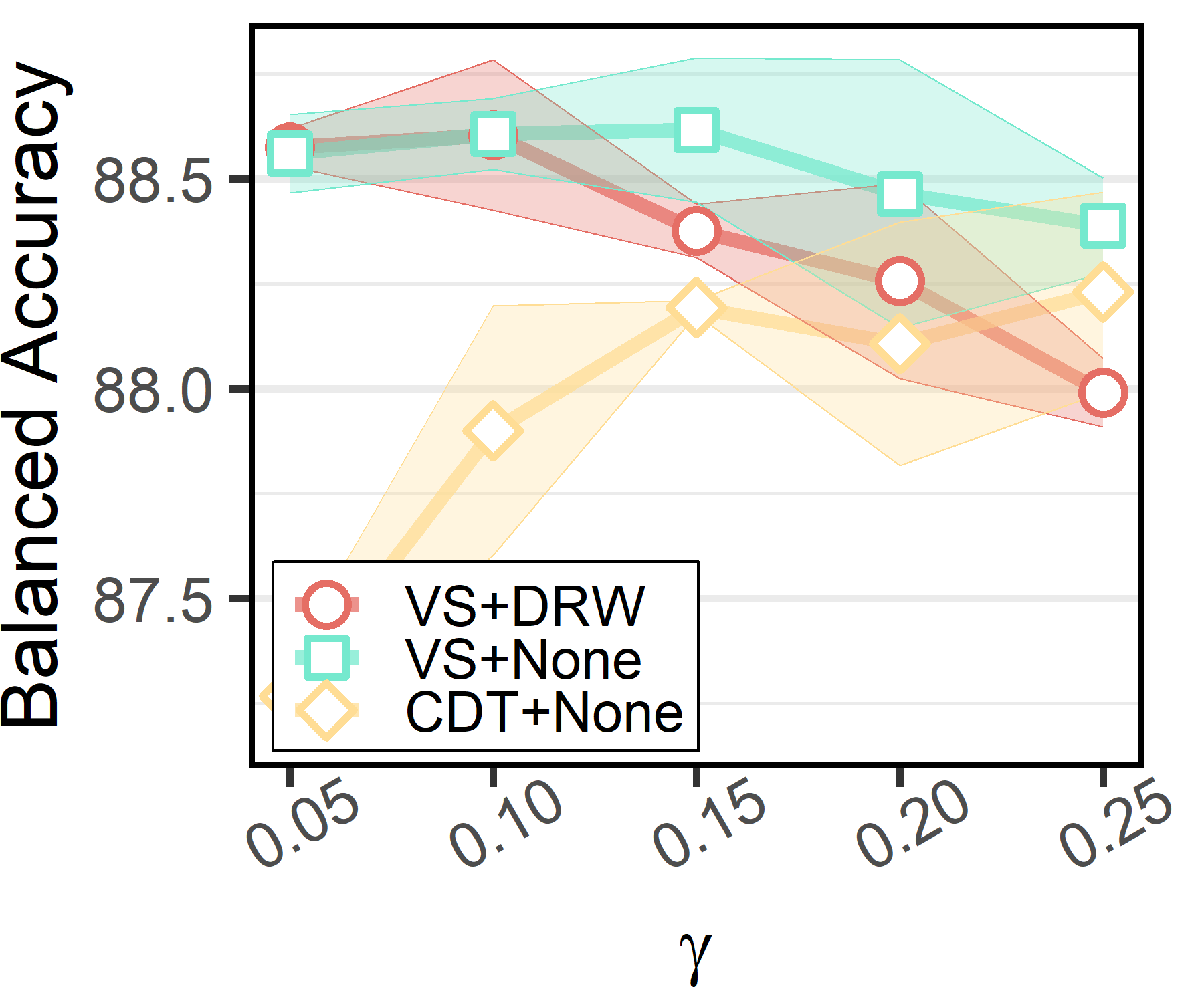}
     }
    \subfigure[CIFAR-10 Step]{
      \includegraphics[width=0.23\linewidth]{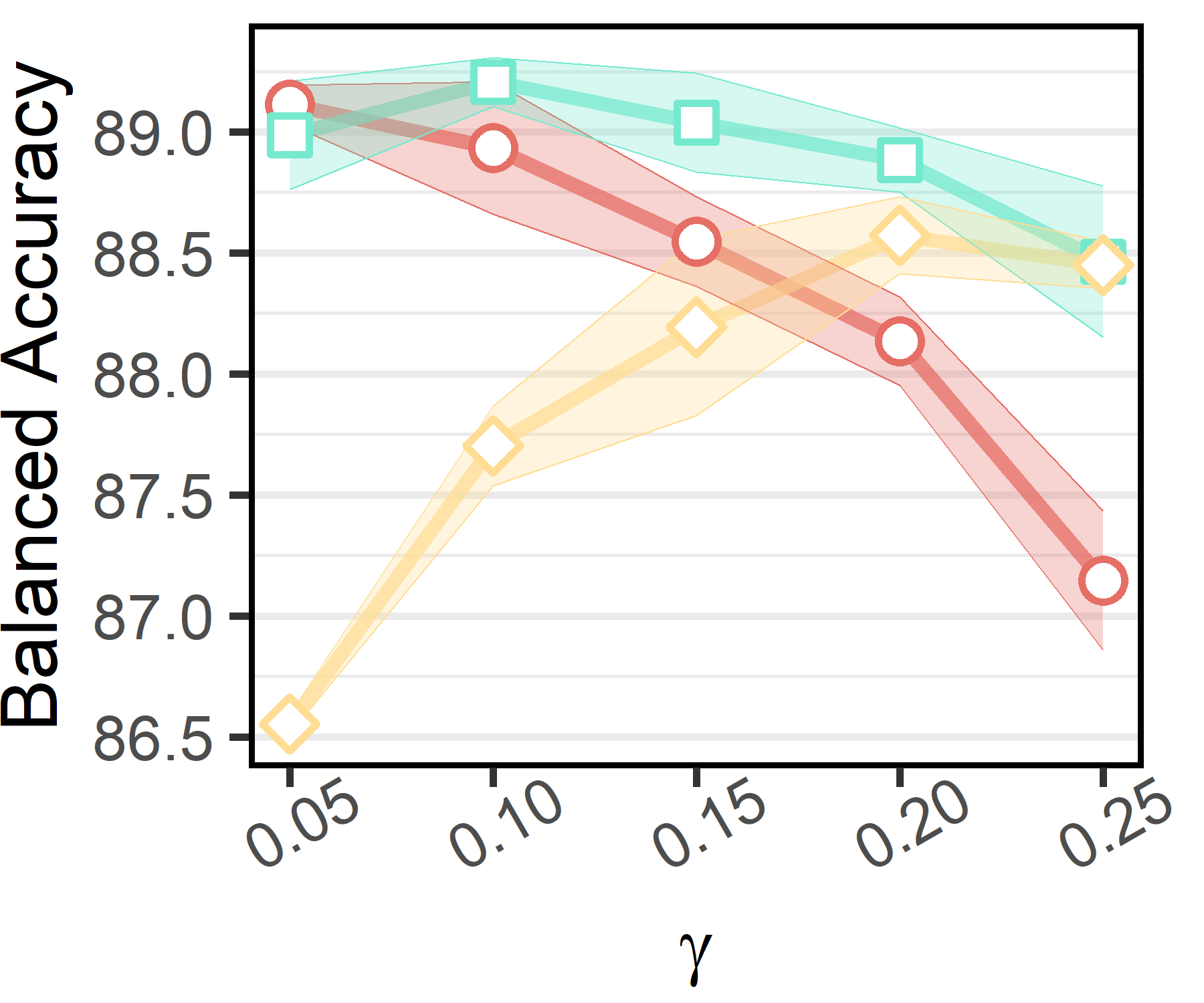}
     }
    \subfigure[CIFAR-100 LT]{
      \includegraphics[width=0.23\linewidth]{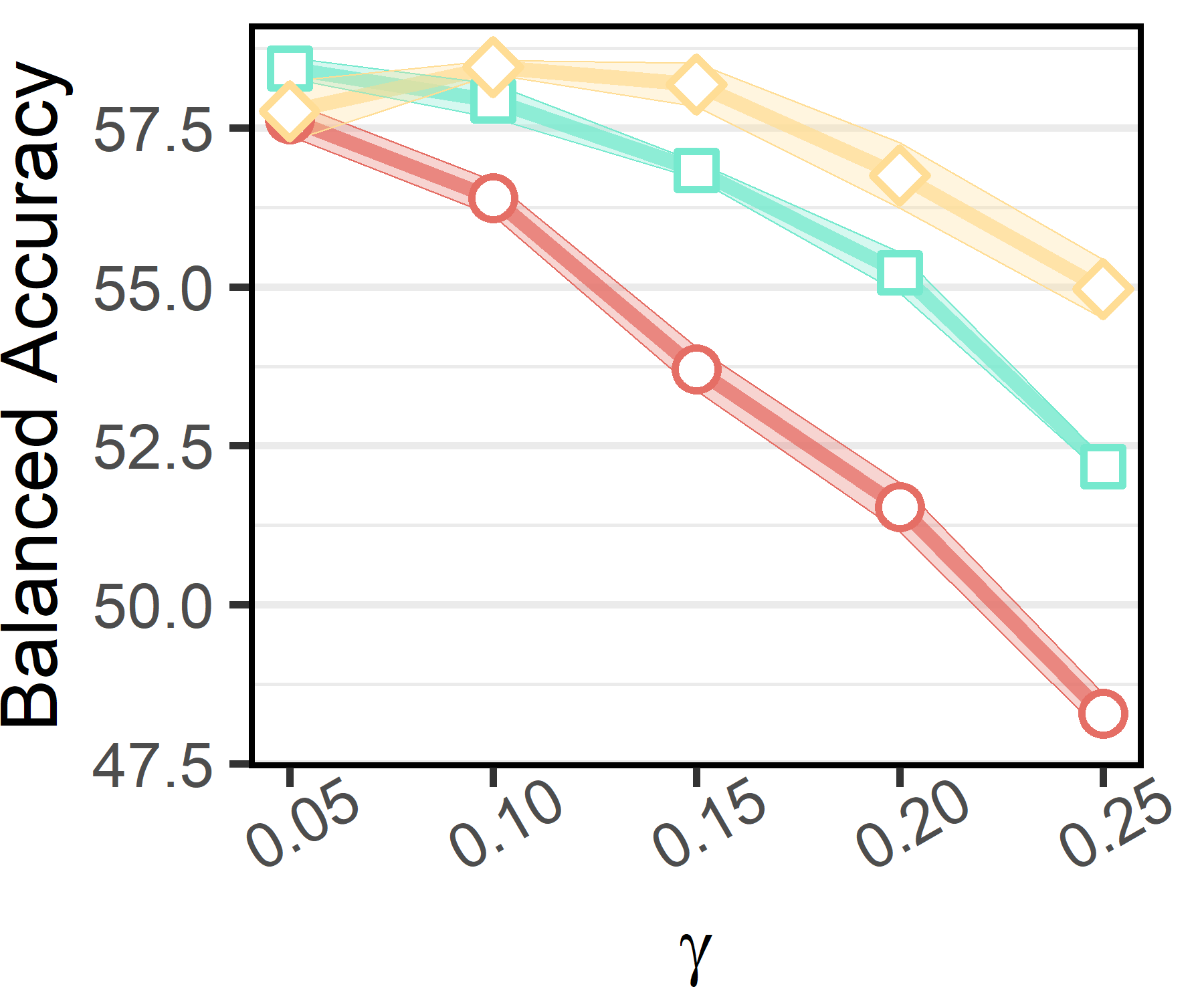}
     }
    \subfigure[CIFAR-100 Step]{
    \includegraphics[width=0.23\linewidth]{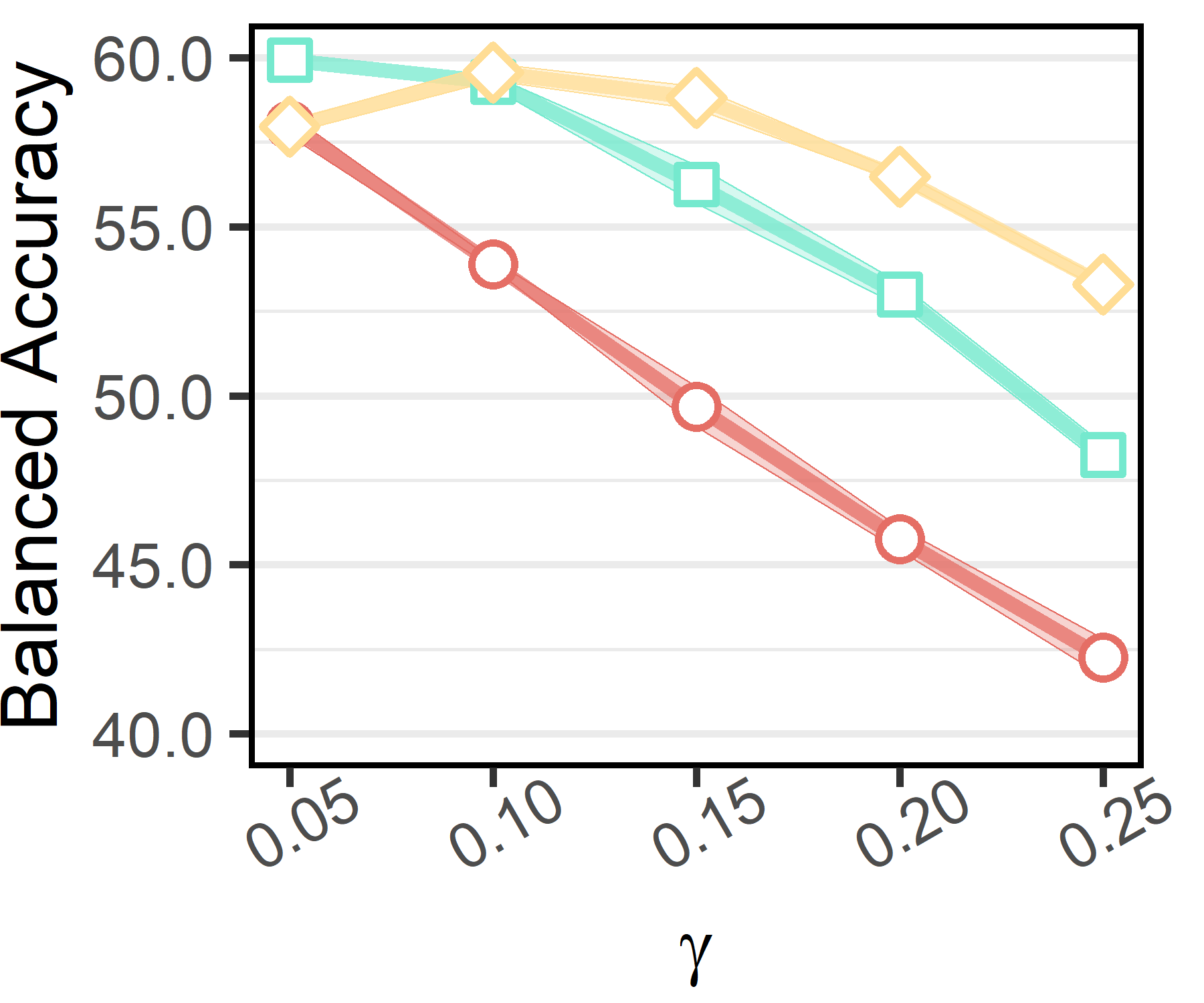}
    }
    \caption{The balanced accuracy of the VS loss \textit{w.r.t.} $\beta_y = (N_y / N_1)^\gamma$ on the CIFAR datasets, where the imbalance ratio $\rho = 10$. We can find that VS+DRW performs inferior to VS+None, especially when $\gamma$ is large, which is consistent with the theoretical insight \textbf{(In4-a).}}
    \label{fig:gamma}
\end{figure*}

\textbf{Infrastructure.} For the CIFAR datasets, we perform the experiments on an Ubuntu server equipped with Nvidia(R) RTX 4090 GPUs, whereas the experiments on ImageNet-LT and iNaturalist are conducted on NVIDIA(R) A100 GPUs. We implement the codes via \texttt{python} (v-3.8.10), and the main third-party packages include \texttt{pytorch} (v-2.0.0) \cite{DBLP:conf/nips/PaszkeGMLBCKLGA19}, \texttt{numpy} (v-1.24.2) \cite{DBLP:journals/nature/HarrisMWGVCWTBS20}, \texttt{scikit-learn} (v-1.2.2) \cite{DBLP:journals/jmlr/PedregosaVGMTGBPWDVPCBPD11} and \texttt{torchvision} (v-0.15.1) \cite{DBLP:conf/mm/MarcelR10}.

    \textbf{Parameter search}. In our conference version \cite{DBLP:conf/nips/WangX00CH23}, the parameters of VS+TLA+ADRW are searched according to the prior arts \cite{DBLP:conf/iclr/MenonJRJVK21,DBLP:conf/nips/KiniPOT21}. For sensitivity analysis, the search space of the CVS loss is listed as follows: $\nu$ is searched in $\{0.05, 0.1, 0.15, 0.2, 0.25\}$; $\tau$ is searched in $\{0.65, 0.75, 0.8, 0.85, 0.9, 0.95, 1.0\}$; $\gamma$ is searched in $\{0.005, 0.01, 0.02, 0.03, 0.04\}$. If  allowed, the weight decay is additionally searched in $[2e-4, 1e-3]$. 

\begin{table*}[!t]
    \centering
    \caption{The balanced accuracy averaged over 5 random seeds on the CIFAR datasets. The best and the runner-up methods under each protocol are marked with \Topone{red} and \Toptwo{blue}, respectively. The best baseline model is marked with \underline{underline}.}
        \renewcommand{\arraystretch}{1.3}
        \begin{tabular}{l|cccc|cccc}
            \toprule
            Dataset & \multicolumn{4}{c|}{CIFAR-10} & \multicolumn{4}{c}{CIFAR-100} \\
            \midrule
            Imbalance Type & \multicolumn{2}{c}{LT} & \multicolumn{2}{c|}{Step} & \multicolumn{2}{c}{LT} & \multicolumn{2}{c}{Step} \\
            \midrule
            Imbalance Ratio & 100   & 10    & 100   & 10    & 100   & 10    & 100   & 10 \\
            \midrule
            \multicolumn{9}{c}{Protocol (a): the standard one \cite{DBLP:conf/nips/CaoWGAM19} } \\
            \midrule
            CE    & 71.5$_{\pm 0.4}$ & 87.0$_{\pm 0.2}$ & 64.8$_{\pm 0.9}$ & 85.1$_{\pm 0.3}$ & 38.3$_{\pm 0.4}$ & 56.7$_{\pm 0.4}$ & 38.6$_{\pm 0.2}$ & 54.4$_{\pm 0.3}$ \\
            CE+DRW & 75.8$_{\pm 0.3}$ & 87.9$_{\pm 0.3}$ & 72.2$_{\pm 0.8}$ & 88.0$_{\pm 0.3}$ & 40.8$_{\pm 0.6}$ & 58.1$_{\pm 0.3}$ & 45.4$_{\pm 0.4}$ & 59.1$_{\pm 0.3}$ \\
            LDAM  & 73.8$_{\pm 0.4}$ & 86.4$_{\pm 0.4}$ & 65.8$_{\pm 0.6}$ & 85.0$_{\pm 0.3}$ & 39.9$_{\pm 0.7}$ & 55.7$_{\pm 0.5}$ & 39.2$_{\pm 0.0}$ & 50.5$_{\pm 0.2}$ \\
            LDAM+DRW & 77.7$_{\pm 0.4}$ & 87.5$_{\pm 0.2}$ & 77.8$_{\pm 0.5}$ & 87.8$_{\pm 0.3}$ & \underline{42.7$_{\pm 0.5}$} & 57.5$_{\pm 0.3}$ & 45.3$_{\pm 0.6}$ & 56.9$_{\pm 0.2}$ \\
            VS    & 78.8$_{\pm 0.2}$ & \underline{88.7$_{\pm 0.1}$} & 76.1$_{\pm 0.7}$ & \underline{88.3$_{\pm 0.1}$} & 41.8$_{\pm 0.7}$ & \underline{58.4$_{\pm 0.2}$} & \underline{46.2$_{\pm 0.3}$} & \underline{59.9$_{\pm 0.2}$} \\
            VS+DRW & \underline{80.1$_{\pm 0.1}$} & 88.6$_{\pm 0.1}$ & \underline{78.2$_{\pm 0.2}$} & 88.1$_{\pm 0.1}$ & 41.3$_{\pm 0.4}$ & 57.6$_{\pm 0.3}$ & 44.0$_{\pm 0.3}$ & 58.0$_{\pm 0.3}$ \\
            \midrule
            CE+ADRW & 78.6$_{\pm 0.5}$ & 88.2$_{\pm 0.3}$ & 75.5$_{\pm 0.6}$ & 88.5$_{\pm 0.2}$ & 41.8$_{\pm 0.6}$ & 58.3$_{\pm 0.4}$ & 46.5$_{\pm 0.3}$ & 59.2$_{\pm 0.3}$ \\
            VS+TLA+ADRW & 81.1$_{\pm 0.2}$ & \Toptwo{89.0$_{\pm 0.2}$} & 80.9$_{\pm 0.2}$ & 89.3$_{\pm 0.1}$ & 43.4$_{\pm 0.6}$ & 59.2$_{\pm 0.2}$ & 47.8$_{\pm 0.1}$ & 60.5$_{\pm 0.3}$ \\
            \midrule
            \textbf{CVS} & \Toptwo{81.3$_{\pm 0.2}$} & 88.9$_{\pm 0.1}$ & \Toptwo{81.5$_{\pm 0.2}$} & \Toptwo{89.6$_{\pm 0.1}$} & \Toptwo{43.5$_{\pm 0.2}$} & \Toptwo{59.4$_{\pm 0.2}$} & \Toptwo{48.2$_{\pm 0.1}$} & \Toptwo{61.0$_{\pm 0.2}$} \\  
            {\textbf{CVS+ADRW}} & \Topone{81.5$_{\pm 0.2}$} & \Topone{89.0$_{\pm 0.1}$} & \Topone{81.6$_{\pm 0.2}$} & \Topone{89.6$_{\pm 0.1}$} & \Topone{44.4$_{\pm 0.1}$} & \Topone{59.9$_{\pm 0.1}$} & \Topone{48.6$_{\pm 0.2}$} & \Topone{61.2$_{\pm 0.2}$} \\
            \midrule
            \multicolumn{9}{c}{Protocol (b): standard + tuned wd} \\
            \midrule
            CE & 73.4$_{\pm 0.3}$ & 87.4$_{\pm 0.2}$ & 66.2$_{\pm 0.4}$ & 85.6$_{\pm 0.1}$ & 40.7$_{\pm 0.3}$ & 58.8$_{\pm 0.2}$ & 38.9$_{\pm 0.1}$ & 54.6$_{\pm 0.1}$ \\
            LDAM+DRW & 79.8$_{\pm 0.4}$ & 88.0$_{\pm 0.2}$ & 78.7$_{\pm 0.1}$ & 87.7$_{\pm 0.2}$ & 45.4$_{\pm 0.1}$ & 58.3$_{\pm 0.3}$ & 46.4$_{\pm 0.3}$ & 57.1$_{\pm 0.2}$ \\
            VS & \underline{81.6$_{\pm 0.1}$} & \underline{89.1$_{\pm 0.1}$} & \underline{81.3$_{\pm 0.2}$} & \underline{89.3$_{\pm 0.1}$} & \underline{46.3$_{\pm 0.3}$} & \underline{61.1$_{\pm 0.3}$} & \underline{46.7$_{\pm 0.5}$} & \underline{61.0$_{\pm 0.3}$}  \\
            \midrule
            CE+ADRW & 80.5$_{\pm 0.1}$ & 88.7$_{\pm 0.1}$ & 78.7$_{\pm 0.2}$ & 88.9$_{\pm 0.2}$ & 46.2$_{\pm 0.4}$ & 61.1$_{\pm 0.1}$ & 46.9$_{\pm 0.1}$ & 60.1$_{\pm 0.2}$ \\
            VS+TLA+ADRW & 81.6$_{\pm 0.1}$ & 89.2$_{\pm 0.1}$ & 81.3$_{\pm 0.2}$ & 89.6$_{\pm 0.1}$ & 46.3$_{\pm 0.5}$ & 61.3$_{\pm 0.3}$ & 47.8$_{\pm 0.3}$ & 61.4$_{\pm 0.4}$ \\
            \midrule
            \textbf{CVS} & \Toptwo{82.2$_{\pm 0.1}$} & \Toptwo{89.2$_{\pm 0.1}$} & \Toptwo{81.7$_{\pm 0.2}$} & \Toptwo{89.7$_{\pm 0.0}$} & \Toptwo{48.2$_{\pm 0.1}$} & \Toptwo{61.6$_{\pm 0.1}$} & \Toptwo{48.7$_{\pm 0.2}$} & \Toptwo{61.8$_{\pm 0.1}$} \\
            \textbf{CVS+ADRW} & \Topone{82.4$_{\pm 0.2}$} & \Topone{89.3$_{\pm 0.1}$} & \Topone{81.8$_{\pm 0.4}$} & \Topone{89.8$_{\pm 0.1}$} & \Topone{48.3$_{\pm 0.1}$} & \Topone{61.8$_{\pm 0.1}$} & \Topone{48.8$_{\pm 0.1}$} & \Topone{61.8$_{\pm 0.1}$} \\
            \bottomrule
        \end{tabular}%
    \label{tab:caifar}%
\end{table*}%

\subsection{Theory Validation}
\label{subsec:theory_validation}

\subsubsection{Validation of Calibration Analysis}
To validate the local calibration assumptions proposed in Sec.\ref{subsec:calibration_issue}, we extend our observations to more diverse settings, where the LA loss is used as the objective.
\begin{itemize}
    \item Training ResNet-50 \cite{DBLP:conf/cvpr/HeZRS16} via parameter-efficient fine-tuning (PEFT) on the iNaturalist dataset, where the tunable parameters include the bias terms and a scaling-and-shifting (SSF) \cite{DBLP:conf/nips/LianZFW22} module after the backbone, as suggested in \cite{DBLP:conf/icml/Shi00SH024};
    \item FFT ViT-B/16 \cite{DBLP:conf/iclr/DosovitskiyB0WZ21} on the iNaturalist dataset; 
    \item PEFT ViT-B/16 on the iNaturalist dataset, following the practice in \cite{DBLP:conf/icml/Shi00SH024}.
\end{itemize}
Besides, we observe the whole training process, no longer limited to the last epoch. For comprehensive analysis, the total training epochs are set to 150. From the results in Fig.\ref{fig:r2q1_all}, we have the following observations:
\begin{itemize}
    \item In all the settings, the ECE values of different class groups increase rapidly during the early phase of training and gradully decrease during the later phase. This phenomenon indicates that the model first memorizes the training data and then gradually learns to calibrate itself.
    \item The calibration dynamics are different among different class groups. On one hand, the ECE values of many-shot classes increase more rapidly during the early phase, making them more miscalibrated than the other classes. On the other hand, although all the classes are calibrated to some extent during the later phase, the medium-shot and few-shot classes are significantly less calibrated than the many-shot classes. To summaryize, the local calibration phenomenon exists during the whole training process.
    \item ResNet-50 is more calibrated than the two ViT-B/16 models during the early training phase. However, FFT ViT-B/16 achieves the best calibration performance during the later training phase. This observation indicates that the overall calibration performance may vary across different architectures and training protocols.
\end{itemize}

\subsubsection{Validation of Generalization Analysis}
In this part, we validate the generalization analysis in Sec.\ref{subsec:application} on the CIFAR datasets under protocol (a). Notably, we mainly observe the baselines in this part and leave the sensitivity analysis of the newly proposed methods in Sec.\ref{subsec:sensitivity_analysis}, which can also validate the analysis.

\textbf{Validation of (In1) and (In4-b).} We report the model performance of the baselines \textit{w.r.t.} the hyperparameters in Fig.\ref{fig:alpha} and Fig.\ref{fig:sensitivity_cifar10}. From these results, we can find that (1) Both {\color[RGB]{230,180, 80} CE+ADRW} and {\color[RGB]{94, 227, 206} LDAM} perform better than {\color[RGB]{127, 231, 153} CE}. In other words, either re-weighting or logit-adjustment can boost the model performance. (2) {\color[RGB]{252, 136, 123} LDAM+ADRW} outperforms {\color[RGB]{230,180, 80} CE+ADRW} and {\color[RGB]{94, 227, 206} LDAM}. For VS+ADRW, increasing the hyperparameters $\nu$ and $\tau$ appropriately can also bring performance gains. All these results validate the compatibility between re-weighting and the additive logit-adjustment.

\textbf{Validation of (In2).} We present a series of results in Fig.\ref{fig:drw_cifar100}, where the training accuracy of different classes represents the corresponding $B_f(y)$. To be specific, Fig.\ref{fig:drw_cifar100}(a) demonstrates the trend of training accuracy on the majority classes and the minority classes. For the CB loss, the learning process only focuses on the minority classes and hinders the performance improvement on the majority classes ({\color[RGB]{64, 182, 251} CB Major} \textit{v.s.} {\color[RGB]{64, 215, 189} CB Minor}). Hence, an extremely imbalanced $B_y(f)$ induces a poor generalization performance. By contrast, the DRW scheme first focuses on the majority classes ({\color[RGB]{252, 136, 123} CE+DRW Major} \textit{v.s.} {\color[RGB]{230,180, 80} CE+DRW Minor} with the training epoch $t \le T_0 = 160$) and then pays more attention to the minority classes during the terminal phase of training ({\color[RGB]{252, 136, 123} CE+DRW Major} \textit{v.s.} {\color[RGB]{230,180, 80} CE+DRW Minor} with the training epoch $t > T_0 = 160$). Benefiting from the DRW scheme, both the majority classes and the minority classes are well-trained and thus have a balanced term $B_y(f)$ (Fig.\ref{fig:drw_cifar100}(b)), leading to a corresponding improvement on the test accuracy (Fig.\ref{fig:drw_cifar100}(c)). Even if we remove the re-weighting term ({\color[RGB]{250, 152, 146}CE+None}), the imbalance degree of $B_y(f)$ is still consistent with the test performance. Note that we also decrease the learning rate of CE+None at the corresponding epoch $T_0$, making the line in Figure 3(b) and 3(c) not constant.

\textbf{Validation of (In4-a).} In Fig.\ref{fig:gamma}, we present the sensitivity analysis of multiplicative logit-adjustment $\gamma$, under different losses. From the results, we can find 1) when the multiplicative adjustment is used alone (\textit{i.e.}, {\color[RGB]{230,180, 80} CDT+None}), a larger $\gamma$ can induce better performance, which is consistent with the prior arts \cite{DBLP:journals/corr/abs-2001-01385}. However, {\color[RGB]{230,180, 80} CDT+None} performs inferior to {\color[RGB]{64, 207, 211} VS} and {\color[RGB]{252, 136, 123} VS+DRW}, which is not surprising since they use fewer adjustment terms. 2) When the multiplicative adjustment is combined with the other terms, a larger $\gamma$ does not necessarily lead to better performance. Notably, the performance is monotonically decreasing \textit{w.r.t.} $\gamma$ on the CIFAR-100 dataset; {\color[RGB]{252, 136, 123} VS+DRW} performs inferior to {\color[RGB]{64, 207, 211} VS}, especially when $\gamma$ is large. These results validate the theoretical insight (\textbf{In4-a}), \textit{i.e.}, the multiplicative logit-adjustment is incompatible with the re-weighting term.

\begin{table}[t]
    \centering
    \caption{The balanced accuracy averaged on the CIFAR-100 LT datasets. The best and the runner-up methods are marked with \Topone{red} and \Toptwo{blue}, respectively. The best baseline model is marked with \underline{underline}.}
    \renewcommand{\arraystretch}{1.25}
    \begin{tabular}{lccccc}
        \toprule
        \multirow{2}{*}{Imbalance Ratio} & \multicolumn{4}{c}{100} & \multicolumn{1}{c}{10} \\
        \cmidrule(lr){2-6}    & Many  & Med. & Few & All  & All  \\
        \midrule
            \multicolumn{6}{c}{Training ResNets from scratch, under Protocol (c)} \\
        \midrule
        CE    & \textbf{76.7} & 47.0 & 11.1 & 46.7 & 64.6 \\
        LDAM+DRW & 68.4 & 55.9 & 31.0 & 52.9 & 65.0 \\
        VS    & 68.4 & \textbf{56.6} & 34.5 & \underline{54.1} & \underline{66.8} \\
        \midrule
        CE+ADRW & 68.3 & 56.0 & 31.9 & 53.1 & 66.7  \\
        VS+TLA+ADRW & 66.7 & 56.3 & 37.0 & 53.2 & 67.0 \\
        \midrule
        \textbf{CVS} & 69.2 & 56.1 & 35.4 & \Toptwo{54.5} & \Toptwo{67.3} \\
        \textbf{CVS+ADRW} & 66.9 & 56.5 & \textbf{37.8} & \Topone{54.6} & \Topone{67.5} \\
        \midrule
            \multicolumn{6}{c}{FFT pre-trained ViTs} \\
        \midrule
        CE & 90.4 & 74.1 & 45.5 & 71.2 & 83.8 \\
        LDAM+DRW & 87.4 & 75.6 & 48.9 & 71.8 & 84.2 \\
        VS & 88.4 & 79.3 & 67.9 & \underline{79.1} & \underline{85.1} \\
        \midrule
        CE+ADRW & 88.0 & 76.7 & 52.6 & 73.4 & 84.3 \\
        VS+TLA+ADRW & 87.2 & 78.7 & 70.5 & 79.2 & 85.3 \\
        \midrule
        \textbf{CVS} & 87.3 & 80.8 & 68.4 & \Toptwo{79.4} & \Toptwo{85.5} \\
        \textbf{CVS+ADRW} & 87.1 & 79.8 & 72.0 & \Topone{80.0} & \Topone{85.7} \\
        \midrule
            \multicolumn{6}{c}{PEFT pre-trained ViTs via LIFT \cite{DBLP:conf/icml/Shi00SH024}} \\
        \midrule
        CE & 87.9 & 70.7 & 36.9 & 66.6 & 80.5 \\
        LDAM+DRW & 84.8 & 78.9 & 70.9 & \underline{78.6} & \underline{83.0} \\
        VS & 82.1 & 80.4 & 69.4 & 77.7 & 82.7 \\
        \midrule
        CE+ADRW & 82.0 & 79.5 & 76.5 & 79.5 & 82.9 \\
        VS+TLA+ADRW & 81.9 & 79.8 & 77.2 & 79.8 & 83.1 \\
        \midrule
        \textbf{CVS} & 82.1 & 81.5 & 75.4 & \Toptwo{79.9} & \Toptwo{83.1} \\
        \textbf{CVS+ADRW} & 79.7 & 80.5 & 80.7 & \Topone{80.3} & \Topone{83.5} \\
        \bottomrule
    \end{tabular}%
    \label{tab:caifar-protocol-c}%
\end{table}

\begin{table*}[t]
    \centering
    \caption{The balanced accuracy on ImageNet-LT and iNaturalist. The results of the competitors are referenced from the corresponding paper or the survey \cite{DBLP:journals/corr/abs-2408-00483}. The best and the runner-up methods are marked with \textbf{bold} and \underline{underline}, respectively.}
    \renewcommand{\arraystretch}{1.15}
    \begin{tabular}{l|c|cccc|cccc}
      \toprule
      \multicolumn{1}{c}{\multirow{2}[2]{*}{Method}} & \multicolumn{1}{c}{\multirow{2}[2]{*}{Loss-ori.}} & \multicolumn{4}{c}{ImageNet-LT} & \multicolumn{4}{c}{iNaturalist} \\
  \cmidrule{3-10}    \multicolumn{1}{c}{} & \multicolumn{1}{c}{} & Many  &  Med. & Few   & \multicolumn{1}{c}{All } & Many  &  Med. & Few   & All  \\
      \midrule
        \multicolumn{9}{c}{Training ResNets} & \\
      \midrule
      OLTR \cite{DBLP:conf/cvpr/0002MZWGY19}  & \xmark & 43.2  & 35.1  & 18.5  & 35.6  & 59.0  & 64.1  & 64.9  & 63.9  \\
      LFMR \cite{DBLP:conf/eccv/XiangDH20}  & \xmark & 47.1  & 35.0  & 17.5  & 37.2  & -     & -     & -     & - \\
      BBN \cite{DBLP:conf/cvpr/ZhouCWC20}  & \xmark & -     & -     & -     & -     & 49.4  & 70.8  & 65.3  & 66.3  \\
      cRT \cite{DBLP:conf/iclr/KangXRYGFK20}  & \xmark & 61.8  & 46.2  & 27.3  & 49.6  & 69.0  & 66.0  & 63.2  & 65.2  \\
      $\tau$-norm \cite{DBLP:conf/iclr/KangXRYGFK20} & \xmark & 59.1  & 46.9  & 30.7  & 49.4  & 65.6  & 65.3  & 65.5  & 65.6  \\
      De-confound \cite{DBLP:conf/nips/TangHZ20} & \xmark & 62.7  & 48.8  & 31.6  & 51.8  & -     & -     & -     & - \\
      LADE \cite{DBLP:conf/cvpr/HongHCSKC21} & \xmark & 62.3 & 49.3 & 31.2 & 51.9 & - & - & - & 70.0  \\
      DRO-LT \cite{DBLP:conf/iccv/SamuelC21} & \xmark & \underline{64.0}  & 49.8  & 33.1  & 53.5  & -     & -     & -     & 69.7  \\
      DiVE \cite{DBLP:conf/iccv/HeWW21}  & \xmark & \textbf{64.1}  & 50.4  & 31.5  & 53.1  & 70.6  & 70.0  & 67.6  & 69.1  \\
      DisAlign \cite{DBLP:conf/cvpr/ZhangLY0S21} & \xmark & 61.3  & 52.2  & 31.4  & 52.9  & 69.0  & \underline{71.1}  & 70.2  & 70.6  \\
      WB \cite{DBLP:conf/cvpr/AlshammariWRK22}   & \xmark & 62.5  & 50.4  & \textbf{41.5}  & 53.9  & 71.2  & 70.4  & 69.7  & 70.2  \\
      SBCL \cite{DBLP:conf/iccv/HouZWZ23} & \xmark & 63.8 & 51.3 & 31.2 & 53.4 & -     & -     & -     & 70.8 \\
      \midrule
      CE \cite{rangwani2022escaping} & \cmark & 62.5 & 36.6 & 12.5 & 42.7 & \textbf{72.8} & 62.7 & 54.8 & 60.3 \\
      Focal \cite{DBLP:conf/cvpr/CuiJLSB19} & \cmark & 36.4  & 29.9  & 16.0  & 30.5  & -     & -     & -     & 61.1  \\
      CE+CB \cite{DBLP:conf/cvpr/CuiJLSB19} & \cmark & 39.6  & 32.7  & 16.8  & 33.2  & 53.4  & 54.8  & 53.2  & 54.0  \\
      IB \cite{Park_2021_ICCV} & \cmark & -     & -     & -     & -     & -  & - & - & 65.4  \\
      LDAM+DRW \cite{DBLP:conf/nips/CaoWGAM19} & \cmark & 61.8  & 47.2  & 31.4  & 50.7  & - & -  & - & 68.0  \\
      ALA \cite{DBLP:conf/aaai/ZhaoC0HZ22} & \cmark & 62.4  & 49.1  & 35.7  & 52.4  & \underline{71.3} & 70.8  & 70.4 & 70.7  \\
      DLLA \cite{DBLP:journals/tcsv/ZhangGLC24} & \cmark & -  & -  & -  & 53.7  & - & -  & - & \underline{71.2} \\
      \midrule
      Ours-Origin & \cmark & 62.9  & \underline{52.6}  & 37.1  & \underline{54.1}  & 64.7  & 70.7  & \underline{72.1}  & 70.7  \\
      Ours-CVS & \cmark & 63.0 & \textbf{53.2} & \underline{38.9} & \textbf{54.7} & 65.5 & \textbf{71.4} & \textbf{72.9} & \textbf{71.5}  \\
    \midrule
        \multicolumn{9}{c}{Fine-tuning pre-trained model} & \\
    \midrule
        Decoder \cite{DBLP:journals/ijcv/WangYWHCYXXZ24} & \xmark & -     & -     & -     & 73.2  & -     & -     & -     & 59.2 \\
        LPT \cite{DBLP:conf/iclr/DongZYZ23} & \xmark & -     & -     & -     & -     & -     & -     & 79.3  & 76.1 \\
        LIFT \cite{DBLP:conf/icml/Shi00SH024} & \xmark & 80.2 & 76.1 & 71.5 & 77.0 & 72.4 & 79.0 & 81.1 & 79.1   \\
        LIFT+TTE \cite{DBLP:conf/icml/Shi00SH024} & \xmark & \textbf{81.3} & \underline{77.4} & \underline{73.4} & \underline{78.3} & \underline{74.0} & \underline{80.3} & \underline{82.2} & \underline{80.4}   \\
    \midrule
        Ours-CVS & \cmark & 80.1 & 76.9 & 72.4 & 77.4  & 72.9 & 79.5 & 81.5 & 79.7   \\
        Ours-CVS+TTE & \cmark & \underline{81.2} & \textbf{78.1} & \textbf{74.2} & \textbf{78.7}  & \textbf{74.5} & \textbf{80.5} & \textbf{82.7} & \textbf{80.9}   \\
    \bottomrule
    \end{tabular}%
    \label{tab:ima_ina}%
\end{table*}

\subsection{Performance Comparison}
\label{subsec:performance_comparison}
Tab.\ref{tab:caifar} and Tab.\ref{tab:caifar-protocol-c} present the empirical results on the CIFAR datasets. When training ResNets, we have the following observations: 
\begin{itemize}
    \item Compared with CE, re-weighting and logit-adjustment both help improve the model performance. Notably, more terms tend to lead to better performance (\textit{e.g.}, VS+TLA+ADRW $\succ$ VS $\succ$ CE+ADRW $\succ$ CE; CVS+ ADRW $\succ$ CVS $\succ$ CE). These results validate the insights \textbf{(In1)} and \textbf{(In4-b)}.
    \item When $\rho = 10$ or on the CIFAR-100 dataset, VS+DRW performs inferior to VS, which is consistent with the results in Fig.\ref{fig:gamma}. Fortunately, if we do not use re-weighting and the multiplicative adjustment simultaneously, the performance can be improved (\textit{e.g.}, VS+TLA+DRW, CVS+ADRW $\succ$ VS). These results again validate our theoretical insight \textbf{(In4-a)}.
    \item CVS and CVS+ADRW achieve significant improvements over the best method in our conference version \textit{i.e.}, VS+TLA+ADRW. For example, on CIFAR-100 LT ($\rho = 100$), the results are \underline{44.4 $v.s.$ 43.4}, \underline{48.3 $v.s.$ 46.3}, \underline{54.6 $v.s.$ 53.2} under protocol (a), (b), and (c), respectively. Since the two versions mainly differ in the design of the multiplicative adjustment, this result speaks to the necessity of our calibration analysis in Sec.\ref{subsec:calibration_issue}.
    \item CVS+ADRW only slightly outperforms CVS, especially under protocol (b) and (c). From Tab.\ref{tab:caifar-protocol-c}, we can find that although ADRW can further improve the performance on the minority classes (35.4 $\to$ 37.8), the performance on the majority classes degenerates (69.2 $\to$ 66.9). Hence, the overall performance improvement only achieves a marginal gain. 
    \item The models under protocol (c) significantly outperform the ones under protocol (a) and (b), which highlights the effectiveness of more advanced training techniques, including the cosine learning rate schedule, RandAugment, and SAM.
\end{itemize}

\noindent When fine-tuning pre-trained ViTs, our method achieves a significant improvement over the baselines, particularly on more imbalanced datasets (e.g., for parameter-efficient fine-tuning, 80.3 $v.s.$ 79.8 on CIFAR-100 LT with $\rho = 100$). This performance gain mainly comes from the improvements on minority classes (80.7 $v.s.$ 77.2). 

Tab.\ref{tab:ima_ina} presents the results on the ImageNet-LT and iNaturalist datasets, where Ours-Origin and Ours-CVS denote VS+TLA+ADRW and CVS+ADRW, respectively. From these results, we have the following observations:
    \begin{itemize}
        \item Our method outperforms the competitors, including the recent loss-oriented ones, which again confirms the effectiveness of the proposed learning algorithm. For example, when training ResNets, Ours-CVS outperforms the best competitor by 1.0\% and 0.3\% on ImageNet-LT and iNaturalist, respectively. When fine-tuning pre-trained models, the performance gains on the iNaturalist dataset are 0.6\% and 0.4\% for Ours-CVS and Ours-CVS+TTE, respectively.
        \item Ours-CVS performs superior to Ours-Origin with a margin of 0.6\% and 0.8\%, which again validates the necessity of the calibration analysis in Sec.\ref{subsec:calibration_issue} and the effectiveness of the MLA loss.
    \end{itemize}

\begin{figure*}[t]
    \centering
    \subfigure[Re-weighting]{
        \includegraphics[width=0.31\linewidth]{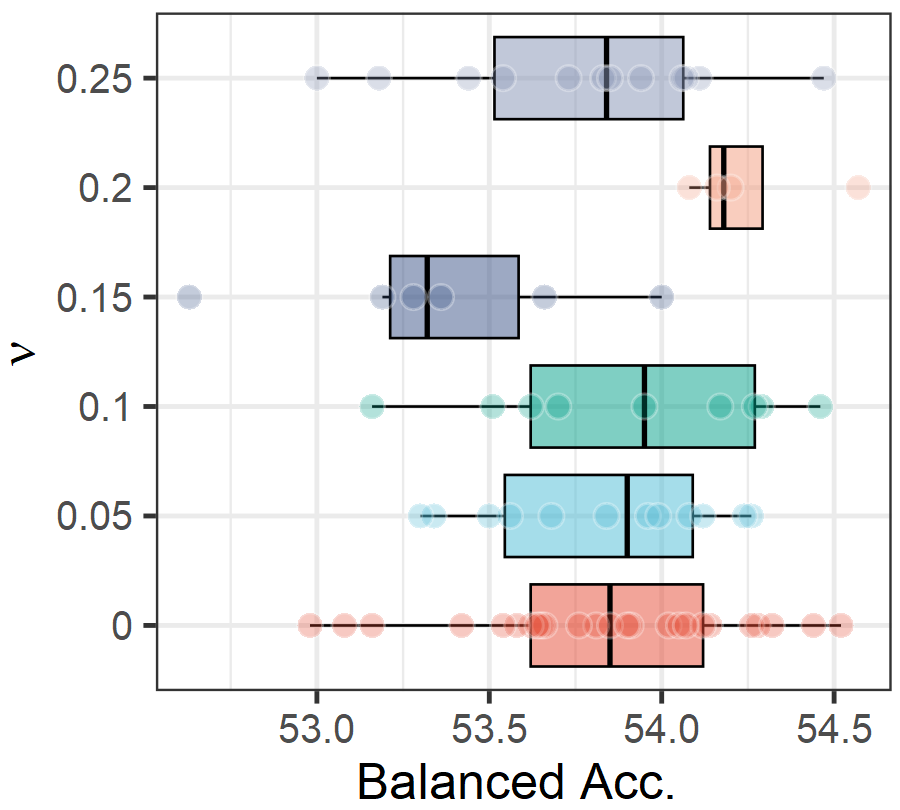}
       }
    \subfigure[Multiplicative adjustment]{
      \includegraphics[width=0.31\linewidth]{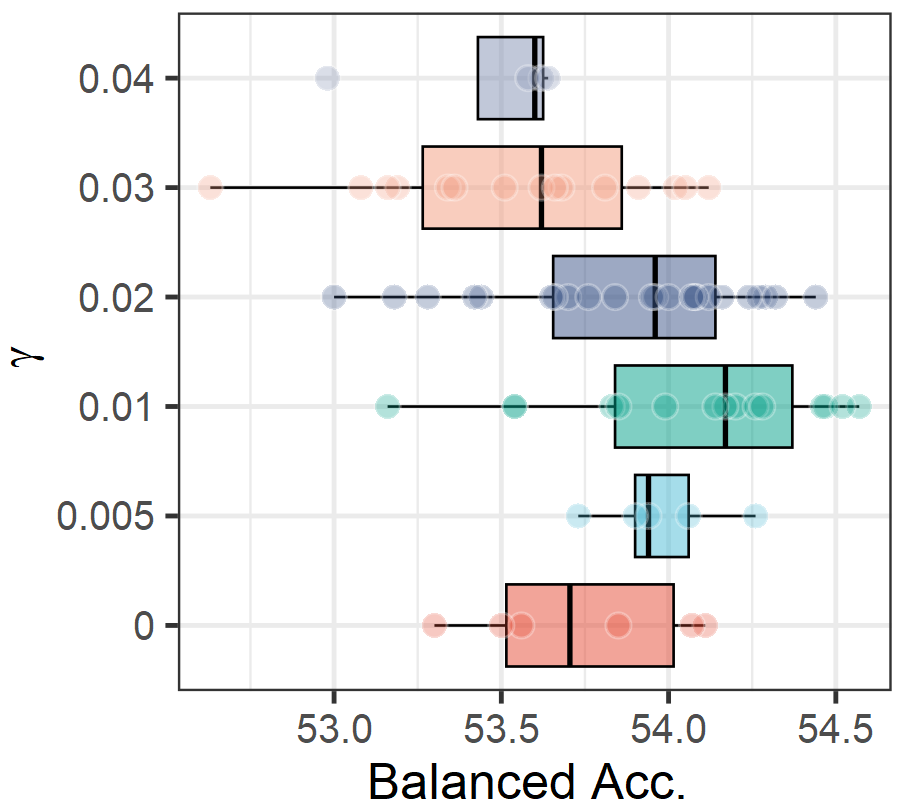}
     }
    \subfigure[Additive adjustment]{
      \includegraphics[width=0.31\linewidth]{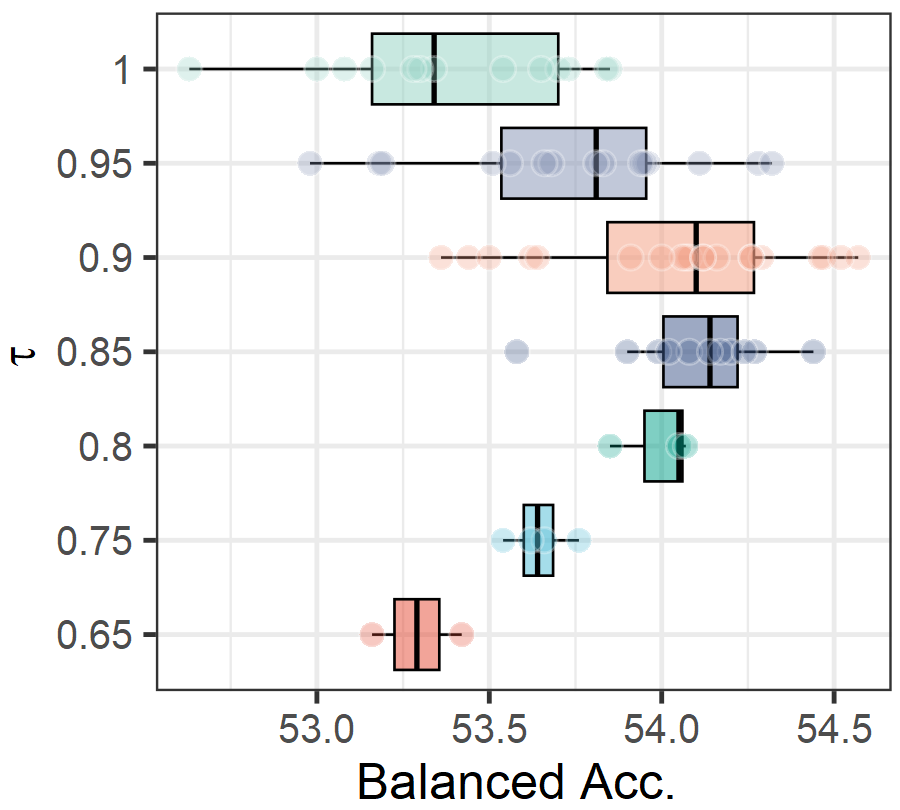}
     }
    \caption{Sensitivity analysis of the proposed method, conducted on CIFAR-100 LT ($\rho = 100$) under the protocol (c).}
    \label{fig:sensitivity_new}
\end{figure*}

\subsection{Sensitivity Analysis}
\label{subsec:sensitivity_analysis}
Next, we provide the sensitivity analysis of the proposed method. All the experiments are conducted on CIFAR-100 LT ($\rho = 100$) under the protocol (c). From the results in Fig.\ref{fig:sensitivity_new}, we have the following observations:
\begin{itemize}
    \item For $\nu$, the best overall performance is achieved when $\nu = 0.2$. However, this performance gain is marginal compared with $\nu = 0$ (54.5 $\to$ 54.6). From the results in Tab.\ref{tab:caifar-protocol-c}, we know that the performance improvement on minority classes (35.4 $\to$ 37.8) is offset by the performance degradation on majority classes (69.2 $\to$ 66.9). This marginal gain is consistent with our generalization analysis since CLA and MLA are enough to re-balance the generalization bound. Besides, the ADRW term could be more effective when the number of minority classes increases. 
    \item For $\gamma$, the best performance is achieved when $\gamma = 0.01$. When we further increase $\gamma$, the performance degrades, even worse than the case when $\gamma = 0$. On one hand, this result shows that benefiting from our learning algorithm, it is also beneficial to combine the multiplicative adjustment with the other terms, which overcomes the incompatibility issue observed in Fig.\ref{fig:gamma}. On the other hand, it is still dangerous to increase $\gamma$ too much, which again validates our generalization analysis.
    \item For $\tau$, the best performance is achieved when $\tau = 0.9$, which is consistent with the prior arts \cite{DBLP:conf/nips/KiniPOT21}. When we further increase $\tau$, the performance degrades gradually, which can also be explained by our generalization analysis.
\end{itemize}

\section{Conclusion and Future Work}
\label{sec:conclusion}
This paper provides a comprehensive analysis of consistency and generalization for loss-oriented imbalanced learning. We find that the properties used in prior arts are generally global, which should be blamed for their coarse-grained analysis. In view of this, we convert these properties to their localized counterparts to capture the model behavior and loss properties that differ among classes. On one hand, the localized calibration reveals under which assumptions the loss function can be Fisher consistent, and the multiplicative adjustment is justified by consistency analysis for the first time. On the other hand, the localized Lipschitz continuity provides a fine-grained generalization analysis for a wide range of loss functions. Based on these insights, we propose a principled learning algorithm for our CVS loss. Empirical results performed on ResNets/ViTs not only validate the theoretical insights but also demonstrate the effectiveness of the proposed method.

In the future, the proposed learning algorithm can integrate with some orthogonal techniques, such as multi-expert \cite{DBLP:conf/nips/ZhangHHF22,DBLP:conf/icml/0001XWLHBCH24} and hard-sample exploration \cite{DBLP:conf/aaai/ZhaoC0HZ22}. Besides, considering the prevalence of imbalance, it is also interesting to explore the potential of the proposed method in other tasks, such as segmentation and retrieval.

\ifCLASSOPTIONcompsoc
\section*{Acknowledgments}
\else
\section*{Acknowledgment}
\fi

This work was supported in part by National Natural Science Foundation of China: 62525212, 62236008, 62025604, 62441232, U21B2038, U23B2051 and 62441619, in part by Youth Innovation Promotion Association CAS, in part by the Strategic Priority Research Program of the Chinese Academy of Sciences under Grant No. XDB0680201, in part by the China National Postdoctoral Program for Innovative Talents under Grant BX20240384, in part by Beijing Natural Science Foundation under Grant No. L252144, in part by General Program of the Chinese Postdoctoral Science Foundation under Grant No. 2025M771558, and in part by the Fundamental Research Funds for the Central Universities under Grant No. E4EQ1101.

\ifCLASSOPTIONcaptionsoff
\newpage
\fi

  \renewcommand{\bibfont}{\normalsize}
  \bibliographystyle{IEEEtranN}
  \bibliography{IEEEabrv,citations}


\begin{IEEEbiography}
    [{\includegraphics[width=1in,height=1.25in,clip,keepaspectratio]{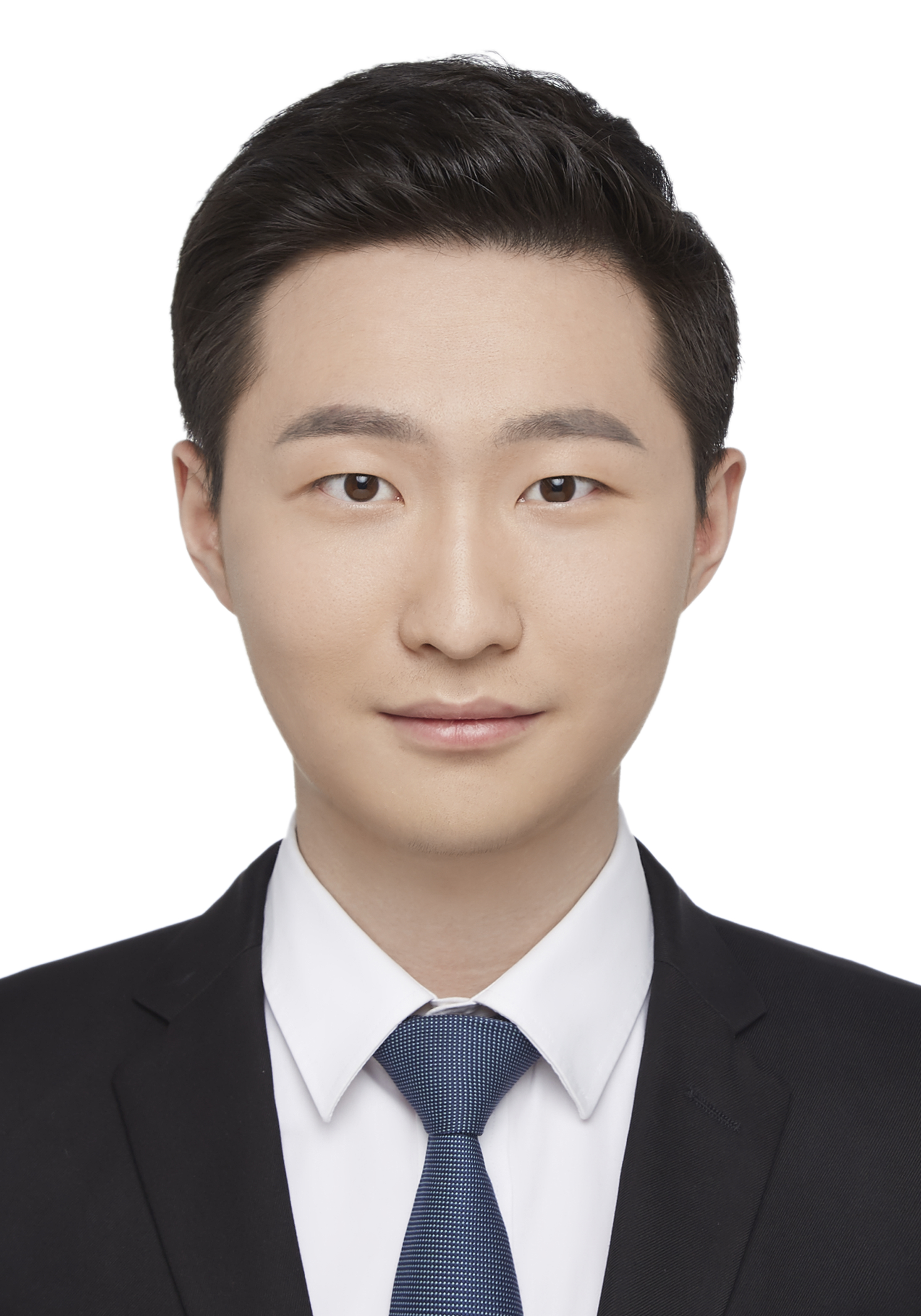}}]{Zitai Wang} received the B.E. degree in computer science and technology from Beijing Jiaotong University in 2019 and the Ph.D. degree in University of Chinese Academy of Sciences in 2024. He is currently a post-doctoral research fellow with the Institute of Computing Technology, Chinese Academy of Sciences. His research interests include machine learning and data mining. He has authored or coauthored 10+ academic papers in top-tier international conferences and journals including T-PAMI, IJCV, ICML, NeurIPS, AAAI, and ACM Multimedia. He served as a reviewer for several top-tier journals and conferences such as T-PAMI, T-CSVT, NeurIPS, ICLR, CVPR, AISTATS, and AAAI.
\end{IEEEbiography}

\begin{IEEEbiography}
	[{\includegraphics[width=1in,height=1.25in,clip,keepaspectratio]{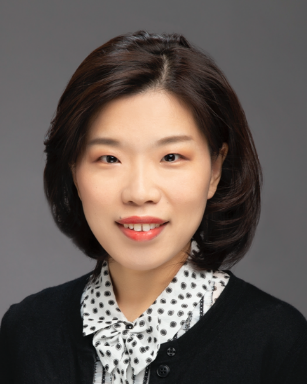}}]{Qianqian Xu} received the B.S. degree in computer science from China University of Mining and Technology in 2007 and the Ph.D. degree in computer science from University of Chinese Academy of Sciences in 2013. She is currently a Professor with the Institute of Computing Technology, Chinese Academy of Sciences, Beijing, China. Her research interests include statistical machine learning, with applications in multimedia and computer vision. She has authored or coauthored 100+ academic papers in prestigious international journals and conferences (including T-PAMI, IJCV, T-IP, NeurIPS, ICML, CVPR, AAAI, etc). Moreover, she serves as an associate editor of IEEE Transactions on Circuits and Systems for Video Technology, IEEE Transactions on Multimedia, and ACM Transactions on Multimedia Computing, Communications, and Applications.
\end{IEEEbiography}

\begin{IEEEbiography}
	[{\includegraphics[width=1in,height=1.25in,clip,keepaspectratio]{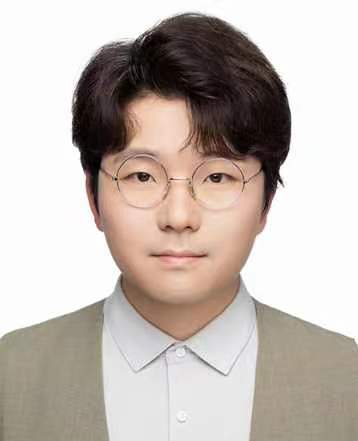}}]{Zhiyong Yang} received his M.S. degree in computer science and technology from the University of Science and Technology Beijing (USTB) in 2017, and Ph.D. degree from the University of Chinese Academy of Sciences (UCAS) in 2021. He is currently an Associate Professor at the University of Chinese Academy of Sciences. His research interests include trustworthy machine learning, long-tail learning, and optimization frameworks for complex metrics. He is one of the key developers of the XCurve learning framework (https://xcurveopt.github.io/), designed to address decision biases between model trainers and users. His work has been recognized with various awards, including Top 100 Baidu AI Chinese Rising Stars Around the World, Top-20 Nomination for the Baidu Fellowship, Asian Trustworthy Machine Learning (ATML) Fellowship, and the China Computer Federation (CCF) Doctoral Dissertation Award. He has authored or co-authored over 60 papers in top-tier international conferences and journals, including more than 30 papers in T-PAMI, ICML, and NeurIPS. He has also served as an Area Chair (AC) for NeurIPS 2024/ICLR 2025/ICML 2025, a Senior Program Committee (SPC) member for IJCAI 2021, and as a reviewer for several prestigious journals and conferences, such as T-PAMI, IJCV, TMLR, ICML, NeurIPS, and ICLR.
\end{IEEEbiography}

\begin{IEEEbiography}
	[{\includegraphics[width=1in,height=1.25in,clip,keepaspectratio]{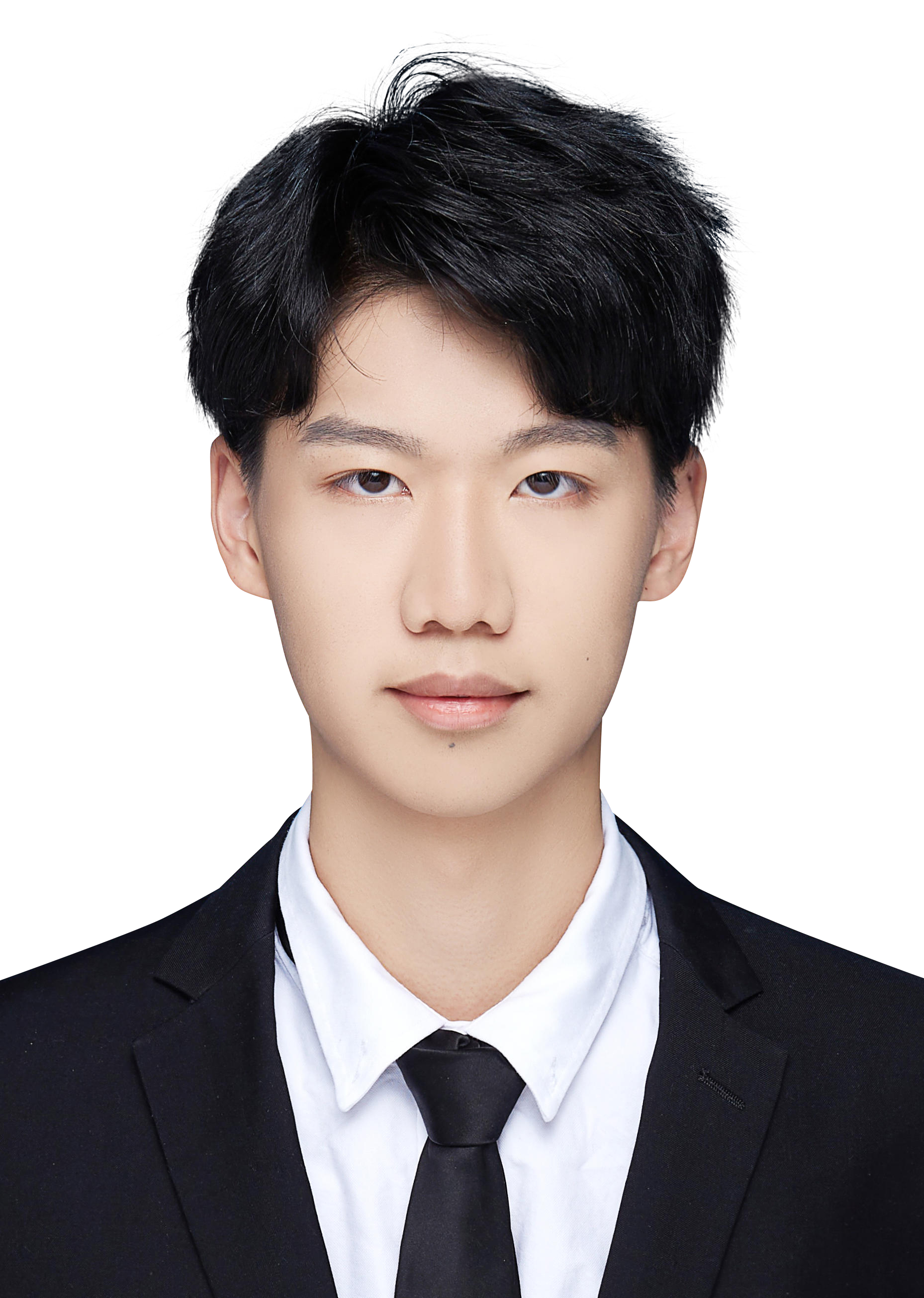}}]{Zhikang Xu} received the B.E. degree in Data Science and Big Data Technology from Tongji University in 2024. He is currently pursuing his M.S. degree from University of Chinese Academy of Sciences. His research interests include machine learning and computer vision.
\end{IEEEbiography}

\begin{IEEEbiography}
	[{\includegraphics[width=1in,height=1.25in,clip,keepaspectratio]{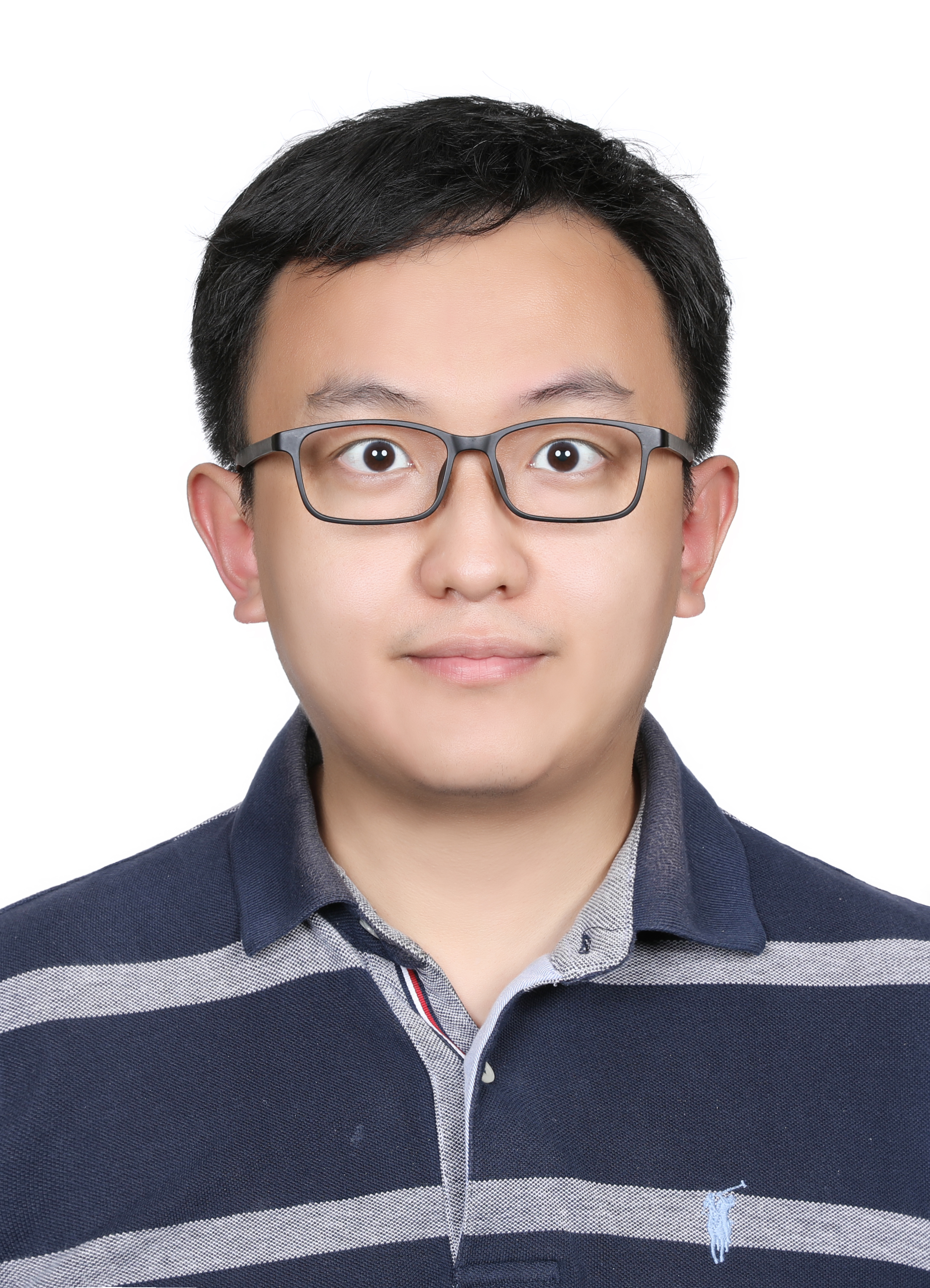}}]{Linchao Zhang} received the B.S. degree in computer science and technology from Harbin Institute of Technology, China in 2008, and the Ph.D. degree in information and system engineering from Politecnico di Torino, Italy in 2014. He is currently a senior engineer with Artificial Intelligence Institute of China Electronics Technology Group Corporation, Beijing, China. His research interests include trusted artificial intelligence and computer security.
\end{IEEEbiography}

\begin{IEEEbiography}
	[{\includegraphics[width=1in,height=1.25in,clip,keepaspectratio]{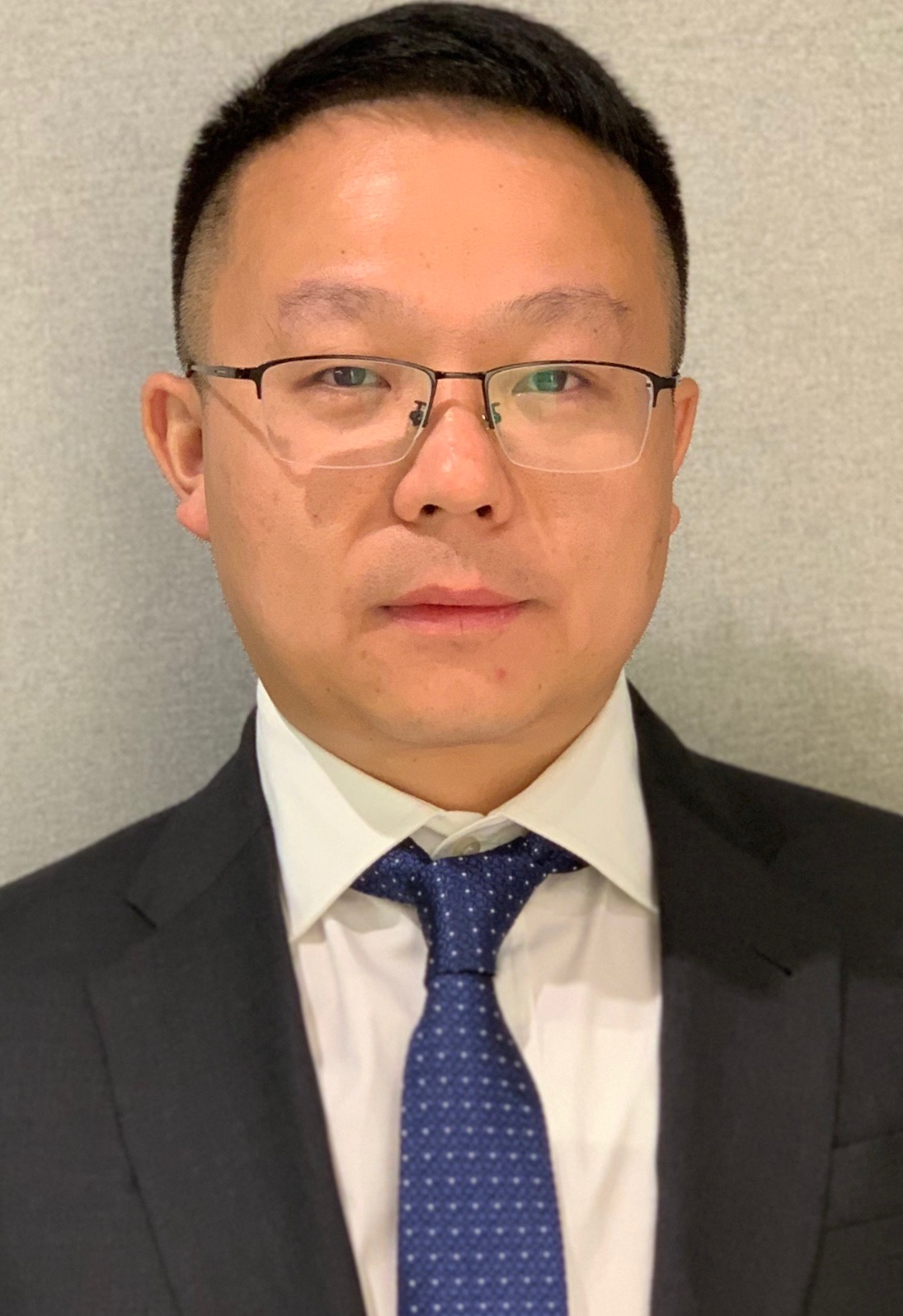}}]{Xiaochun Cao} is a Professor of School of Cyber Science and Technology, Shenzhen Campus of Sun Yat-sen University. He received the B.E. and M.E. degrees both in computer science from Beihang University (BUAA), China, and the Ph.D. degree in computer science from the University of Central Florida, USA, with his dissertation nominated for the university level Outstanding Dissertation Award. After graduation, he spent about three years at ObjectVideo Inc. as a Research Scientist. From 2008 to 2012, he was a professor at Tianjin University. Before joining SYSU, he was a professor at Institute of Information Engineering, Chinese Academy of Sciences. He has authored and coauthored over 200 journal and conference papers. In 2004 and 2010, he was the recipients of the Piero Zamperoni best student paper award at the International Conference on Pattern Recognition. He is on the editorial boards of IEEE Transactions on Image Processing and IEEE Transactions on Multimedia, and was on the editorial board of IEEE Transactions on Circuits and Systems for Video Technology.
\end{IEEEbiography}

\begin{IEEEbiography}
	[{\includegraphics[width=1in,height=1.25in,clip,keepaspectratio]{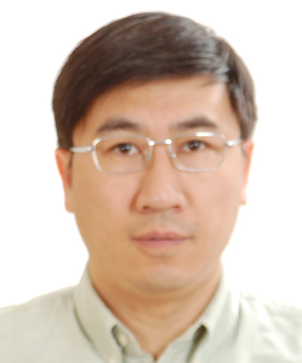}}]{Qingming Huang} is a chair professor in the University of Chinese Academy of Sciences and an adjunct research professor in the Institute of Computing Technology, Chinese Academy of Sciences. He graduated with a Bachelor degree in Computer Science in 1988 and Ph.D. degree in Computer Engineering in 1994, both from Harbin Institute of Technology, China. His research areas include multimedia computing, image processing, computer vision and pattern recognition. He has authored or coauthored more than 400 academic papers in prestigious international journals and top-level international conferences. He was the associate editor of IEEE Trans. on CSVT and Acta Automatica Sinica, and the reviewer of various international journals including IEEE Trans. on PAMI, IEEE Trans. on Image Processing, IEEE Trans. on Multimedia, etc. He is a Fellow of IEEE and has served as general chair, program chair, area chair and TPC member for various conferences, including ACM Multimedia, CVPR, ICCV, ICME, ICMR, PCM, BigMM, PSIVT, etc.
\end{IEEEbiography}


\clearpage
\onecolumn
\appendices
\section*{\textcolor{blue}{\Large{Contents}}}
\startcontents[sections]
\printcontents[sections]{l}{1}{\setcounter{tocdepth}{3}}
\newpage

\section{Proofs of Consistency Analysis}
\label{sec:proof_consistency}
\subsection{Proof of Prop.\ref{prop:local_linear_loss}}
\label{subsec:proof_local_linear_loss}
\locallinearloss*
\begin{proof}
    Let $\eta(\boldsymbol{x})_y$ denote the posterior probability $\mathbb{P}\left[ y \mid \boldsymbol{x} \right]$ and $f^*$ be the Bayes optimal score of the VS loss. According to Asm.\ref{asm:local_linear}, under constant re-weighting terms $\alpha_y = 1$, $f^*$ will satisfy 
    $$
        f^*(\boldsymbol{x})_y + \log \left( \delta_y / \kappa_y^+ \right) \propto  \log \left( \eta(\boldsymbol{x})_y / \kappa_y^+ \right) \Rightarrow f^*(\boldsymbol{x})_y \propto \log \left( \eta(\boldsymbol{x})_y / \delta_y \right).
    $$ 
    Then, the proof follows that in \cite{DBLP:conf/iclr/MenonJRJVK21}, which we attach right here for completeness. 
    Specifically, we next consider the re-weighting term $\alpha_y  = \delta_y / \pi_y$. Learning with this weighted loss is equivalent to learning with the unweighted one but with a modified class prior $\overline{\pi}_y = \pi_y \cdot \alpha_y$. Under this distribution, \citet{DBLP:conf/iclr/MenonJRJVK21} show that the posterior probability $\overline{\eta}(\boldsymbol{x})_y \propto \eta(\boldsymbol{x})_y \cdot \alpha_y$. Hence, the optimal prediction becomes
    $$
        f^*(\boldsymbol{x})_y \propto \log \left( \overline{\eta}(\boldsymbol{x})_y / \delta_y \right) = \log \left( \eta(\boldsymbol{x})_y / \pi_y \right) + A(\boldsymbol{x}),
    $$
    where $A(\boldsymbol{x})$ does not depend on $y$. Consequently, we have 
    $$
        \arg \max_{y} f^*(\boldsymbol{x})_y = \arg \max_{y} \log \left( \eta(\boldsymbol{x})_y / \pi_y \right) = \arg \max_{y} \eta(\boldsymbol{x})_y / \pi_y,
    $$
    which is exactly the Bayes optimal classifier for the balanced error.
\end{proof}

\subsection{Proof of Prop.\ref{prop:local_nonlinear_loss}}
\label{subsec:proof_local_nonlinear_loss}
\localnonlinearloss*
\begin{proof}
    Similarly, we first consider the case where $\alpha_y = 1$. Under Asm.\ref{asm:local_nonlinear}, the Bayes optimal score will satisfy
    $$
        f^*(\boldsymbol{x})_y \cdot \delta_y / \kappa_y^* = \eta(\boldsymbol{x})_y / \kappa_y^* \Rightarrow f^*(\boldsymbol{x})_y = \eta(\boldsymbol{x})_y / \delta_y.
    $$
    Next, we consider the weights. Similarly, the optimal prediction becomes
    $$
        f^*(\boldsymbol{x})_y \propto \frac{\overline{\eta}(\boldsymbol{x})_y}{\delta_y} \propto \frac{\eta(\boldsymbol{x})_y}{\pi_y}.
    $$
    Consequently, we have 
    $$
        \arg \max_{y} f^*(\boldsymbol{x})_y = \arg \max_{y} \frac{\eta(\boldsymbol{x})_y}{\pi_y}, 
    $$
    which is exactly the Bayes optimal classifier for the balanced error.
\end{proof}

\section{Proofs of Generalization Analysis}
\subsection{Proof of the Basic Lemma (Lem.\ref{lem:basic_lem})}
\label{app:basic_lem}
\basiclem*
\begin{proof}
  On one hand, 
    \begin{equation}
        \mathcal{R}^L (f) = \E{(\boldsymbol{x}, y) \sim \mathcal{D}}{L(f(\boldsymbol{x}), y)}  = \sum_{y=1}^{C} \pi_y \E{\boldsymbol{x} \sim \mathcal{D}_y}{L(f(\boldsymbol{x}), y)} = \sum_{j=1}^C \pi_y \mathcal{R}_y^L(f).
    \end{equation}
    On the other hand,
    \begin{equation}
        \mathcal{R}_\text{bal}^L (f) = \frac{1}{C} \sum_{j=1}^C \mathcal{R}_y^L(f) = \frac{1}{C} \sum_{j=1}^C \frac{1}{\pi_y} \cdot \pi_y \mathcal{R}_y^L(f) \le \frac{1}{C \pi_C} \sum_{j=1}^C \pi_y \mathcal{R}_y^L(f) = \frac{1}{C \pi_C} \mathcal{R}^L(f),
    \end{equation}
    where the inequality comes from the fact that $\forall \boldsymbol{a}, \boldsymbol{b} \in \mathbb{R}^C, \vert \left\langle \boldsymbol{a}, \boldsymbol{b} \right\rangle  \vert \le \Vert \boldsymbol{a} \Vert_\infty \Vert \boldsymbol{b} \Vert_1 $. Then, combining the traditional results in \cite{10.5555/2371238}, for any $\delta \in (0, 1)$, with probability at least $1 - \delta$ over the training set $\mathcal{S}$, the following generalization bound holds for all the $g \in \mathcal{G}$:
    \begin{equation}
        \mathcal{R}_\text{bal}^L (f) \precsim \frac{1}{C \pi_C} \left[ \widehat{\mathcal{R}}^L(f) + \hat{\mathfrak{C}}_{\mathcal{S}}(\mathcal{G}) + 3 M \sqrt{\frac{\log 2 / \delta}{2 N}} \right].
    \end{equation}
\end{proof}

\subsection{Proof of the Data-Dependent Contraction Lemma (Lem.\ref{lem:data_dependent_contraction})}
\label{app:data_dependent_contraction}
\datadeplem*
\begin{proof}
    According to the definition of complexity, we have
    \begin{equation}
        \begin{aligned}
            \hat{\mathfrak{C}}_{\mathcal{S}}(\mathcal{G}) & = \E{\boldsymbol{\xi}}{\sup_{g \in \mathcal{G}} \frac{1}{N}\sum_{n=1}^{N} \xi^{(n)} g(\boldsymbol{z}^{(n)}) } = \E{\boldsymbol{\xi}}{\sup_{g \in \mathcal{G}} \frac{1}{N} \sum_{y=1}^{C} \sum_{n=1}^{N_y} \xi_y^{(n)} g(\boldsymbol{z}_y^{(n)}) } \\
            & \le \sum_{y=1}^{C} \E{\boldsymbol{\xi}_y}{\frac{1}{N} \sup_{g \in \mathcal{G}} \sum_{n=1}^{N_y} \xi_y^{(n)} g(\boldsymbol{z}_y^{(n)}) } = \sum_{y=1}^{C} \frac{N_y}{N} \E{\boldsymbol{\xi}_y}{\frac{1}{N_y} \sup_{g \in \mathcal{G}} \sum_{n=1}^{N_y} \xi_y^{(n)} g(\boldsymbol{z}_y^{(n)}) } \\
            & = \sum_{y=1}^{C}\pi_y \hat{\mathfrak{C}}_{\mathcal{S}_y}(\mathcal{G}) \precsim \sum_{y=1}^{C} \sqrt{\pi_y} \mu_{y} \hat{\mathfrak{C}}_{\mathcal{S}}(\mathcal{F}), \\
        \end{aligned}
    \end{equation}
    where the last inequality comes from Asm.\ref{asm:complexity}.
\end{proof}

\subsection{Proof of the Local Lipschitz Property of the VS Loss (Prop.\ref{lem:Lip_of_vs})}
\label{app:Lip_of_vs}
\begin{lemma}
    \label{lem:squre_ineq}
    Given $\{a_i\}_{i=1}^C, \{b_i\}_{i=1}^C$, if $b_i \ge 0$, we have $\sum_{i=1}^C a_i^2 b_i^2 \le \left( \sum_{i=1}^C a_i^2 \right) \left( \sum_{i=1}^C b_i \right)^2$.
\end{lemma}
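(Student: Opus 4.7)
The plan is to prove this by a short chain of elementary inequalities that only exploits the non-negativity hypothesis. Because all terms are non-negative, squaring a sum preserves (and in fact dominates) the sum of squares, so the main observation I would use repeatedly is the schoolbook identity
\begin{equation}
\Bigl(\sum_i x_i\Bigr)^2 \;=\; \sum_i x_i^2 \;+\; 2\sum_{i<j} x_i x_j \;\ge\; \sum_i x_i^2 \qquad (x_i \ge 0).
\end{equation}

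First I would apply this with $x_i = a_i b_i$ to get
\begin{equation}
\sum_i a_i^2 b_i^2 \;\le\; \Bigl(\sum_i a_i b_i\Bigr)^2.
\end{equation}
Next I would bound the inner sum $\sum_i a_i b_i$ by $\bigl(\sum_i a_i\bigr)\bigl(\sum_i b_i\bigr)$; this again uses only non-negativity, since expanding the product gives
\begin{equation}
\Bigl(\sum_i a_i\Bigr)\Bigl(\sum_i b_i\Bigr) \;=\; \sum_i a_i b_i \;+\; \sum_{i \ne j} a_i b_j \;\ge\; \sum_i a_i b_i.
\end{equation}
Squaring this inequality (both sides are non-negative) and chaining it with the previous display yields the claim. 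An equivalent route is to first upper bound $\sum_i a_i^2 b_i^2$ by $\bigl(\sum_i a_i^2\bigr)\bigl(\sum_i b_i^2\bigr)$ (keeping only the diagonal of the double sum) and then apply the schoolbook identity separately to $\sum_i a_i^2$ and $\sum_i b_i^2$.

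There is essentially no obstacle here: the result is a purely algebraic convenience lemma, and the only thing to be careful about is that each inequality step genuinely requires non-negativity (otherwise cross terms could be negative and the inequalities would flip). I would therefore just make sure to invoke the hypothesis $a_i, b_i \ge 0$ at each step and keep the write-up to a handful of lines.
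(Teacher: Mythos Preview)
Your proposal is correct. Your ``equivalent route'' --- bounding $\sum_i a_i^2 b_i^2 \le \bigl(\sum_i a_i^2\bigr)\bigl(\sum_i b_i^2\bigr)$ by dropping the off-diagonal terms, then using $\sum_i a_i^2 \le \bigl(\sum_i a_i\bigr)^2$ and likewise for $b$ --- is exactly what the paper does. Your primary route through the intermediate $\bigl(\sum_i a_i b_i\bigr)^2$ is an equally short and valid variant; either one is fine for this convenience lemma.
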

\begin{proof}
    According to the definition, we have
    \begin{equation}
        \left( \sum_{i=1}^C a_i^2 \right) \left( \sum_{i=1}^C b_i \right)^2 = \sum_{i=1}^{C} a_i^2 \left( \sum_{j=1}^C b_j \right)^2 = \sum_{i=1}^{C} a_i^2 \left( \sum_{j=1}^C b_j^2 + 2 \sum_{j \neq k} b_j b_k \right) \ge \sum_{i=1}^C a_i^2 b_i^2.
    \end{equation}
\end{proof}

\lipofmargin*
\begin{proof}
    According to the definition of the VS loss, we have
    \begin{equation}
        \begin{aligned}
            L_\text{VS}(f(\boldsymbol{x}), y) & = - \alpha_y \log \left( \frac{e^{\beta_y f(\boldsymbol{x})_y + \Delta_y}}{\sum_{y'} e^{\beta_{y'} f(\boldsymbol{x})_{y'} + \Delta_{y'}} } \right) \\
            & = \alpha_y \log [1 + \sum_{y' \neq y} e^{\beta_{y'} f(\boldsymbol{x})_{y'} - \beta_y f(\boldsymbol{x})_y + \Delta_{y'} - \Delta_y}], 
        \end{aligned}
    \end{equation}
    Let $\boldsymbol{s} := f(\boldsymbol{x})$, and define
    \begin{equation}
        \ell_y(\boldsymbol{s}) := \sum_{y' \neq y} e^{\beta_{y'} \boldsymbol{s}_{y'} + \Delta_{y'}}.
    \end{equation}
    In other words, $L_\text{VS}(f, y) = \alpha_y \log \left[ 1 + e^{- \left( \beta_{y} \boldsymbol{s}_{y} + \Delta_{y} \right)} \ell_y(\boldsymbol{s}) \right]$. Then, 
    \begin{equation}
        \begin{aligned}
            \frac{\partial L_\text{VS}(f, y)}{\partial \boldsymbol{s}_{y}} & = - \alpha_y \beta_y \frac{e^{- \left( \beta_{y} \boldsymbol{s}_{y} + \Delta_{y} \right)} \ell_y(\boldsymbol{s})}{1 + e^{- \left( \beta_{y} \boldsymbol{s}_{y} + \Delta_{y} \right)} \ell_y(\boldsymbol{s})}, \\
            \frac{\partial L_\text{VS}(f, y)}{\partial \boldsymbol{s}_{y'}} & = \alpha_y \beta_{y'} \frac{e^{- \left( \beta_{y} \boldsymbol{s}_{y} + \Delta_{y} \right)}}{1 + e^{- \left( \beta_{y} \boldsymbol{s}_{y} + \Delta_{y} \right)} \ell_y(\boldsymbol{s})} \cdot e^{\beta_{y'} \boldsymbol{s}_{y'} + \Delta_{y'}}, y' \neq y. \\
        \end{aligned}
    \end{equation}
    Hence,
    \begin{equation}
        \begin{aligned}
            & \Vert \nabla_{\boldsymbol{s}} L_\text{VS}(f, y) \Vert^2 = \left[ \beta_y^2 \ell_y(\boldsymbol{s})^2 + \sum_{y' \neq y} \left(\beta_{y'} e^{\beta_{y'} \boldsymbol{s}_{y'} + \Delta_{y'}} \right)^2 \right] \cdot \left[ \frac{\alpha_{y} e^{- \left( \beta_y \boldsymbol{s}_{y} + \Delta_{y} \right)}}{1 + e^{- \left( \beta_{y} \boldsymbol{s}_{y} + \Delta_{y} \right)} \ell_y(\boldsymbol{s})} \right]^2  \\
            & \le \left[ \beta_y^2 \ell_y(\boldsymbol{s})^2 + \left( \sum_{y' \neq y} \beta_{y'}^2 \right) \left( \sum_{y' \neq y} e^{\beta_{y'} \boldsymbol{s}_{y'} + \Delta_{y'}} \right)^2 \right] \cdot \left[ \frac{\alpha_{y} e^{- \left( \beta_y \boldsymbol{s}_{y} + \Delta_{y} \right)}}{1 + e^{- \left( \beta_{y} \boldsymbol{s}_{y} + \Delta_{y} \right)} \ell_y(\boldsymbol{s})} \right]^2 \\
            & = \left( \sum_{y'=1}^{C} \beta_{y'}^2 \right) \cdot \left[ \frac{\alpha_{y} e^{- \left( \beta_y \boldsymbol{s}_{y} + \Delta_{y} \right)} \ell_y(\boldsymbol{s}) }{1 + e^{- \left( \beta_{y} \boldsymbol{s}_{y} + \Delta_{y} \right)} \ell_y(\boldsymbol{s})} \right]^2,
        \end{aligned}
    \end{equation} 
    where the inequality comes from Lem.\ref{lem:squre_ineq}. Thus, let $\tilde{\beta} = \sqrt{\sum_{y'=1}^{C} \beta_{y'}^2}$, we have
    \begin{equation}
        \begin{aligned}
            \Vert \nabla_{\boldsymbol{s}} L_\text{VS}(f, y) \Vert & \le \alpha_{y} \tilde{\beta} \frac{e^{- \left( \beta_y \boldsymbol{s}_{y} + \Delta_{y} \right)} \ell_y(\boldsymbol{s}) }{1 + e^{- \left( \beta_{y} \boldsymbol{s}_{y} + \Delta_{y} \right)} \ell_y(\boldsymbol{s})} = \alpha_{y} \tilde{\beta} \frac{\ell_y(\boldsymbol{s})}{e^{  \beta_{y} \boldsymbol{s}_{y} + \Delta_{y} } +  \ell_y(\boldsymbol{s})} \\
            & = \alpha_{y} \tilde{\beta} \left[ 1 - \frac{e^{  \beta_{y} \boldsymbol{s}_{y} + \Delta_{y} }}{ \sum_{y'} e^{  \beta_{y'} \boldsymbol{s}_{y'} + \Delta_{y'} }} \right] = \alpha_{y} \tilde{\beta} \left[ 1 - \textit{softmax}\left( \beta_{y} \boldsymbol{s}_{y} + \Delta_{y} \right) \right] \\
        \end{aligned}
    \end{equation}
    Since the score function is bounded, for any $y \in \mathcal{Y}$, there exists a constant $B_y(f)$ such that $B_y(f) = \inf_{\boldsymbol{x} \in \mathcal{S}_y} \boldsymbol{s}_y$, which completes the proof.
\end{proof}

\begin{figure*}[h]
    \centering
    \includegraphics[width=0.9\textwidth]{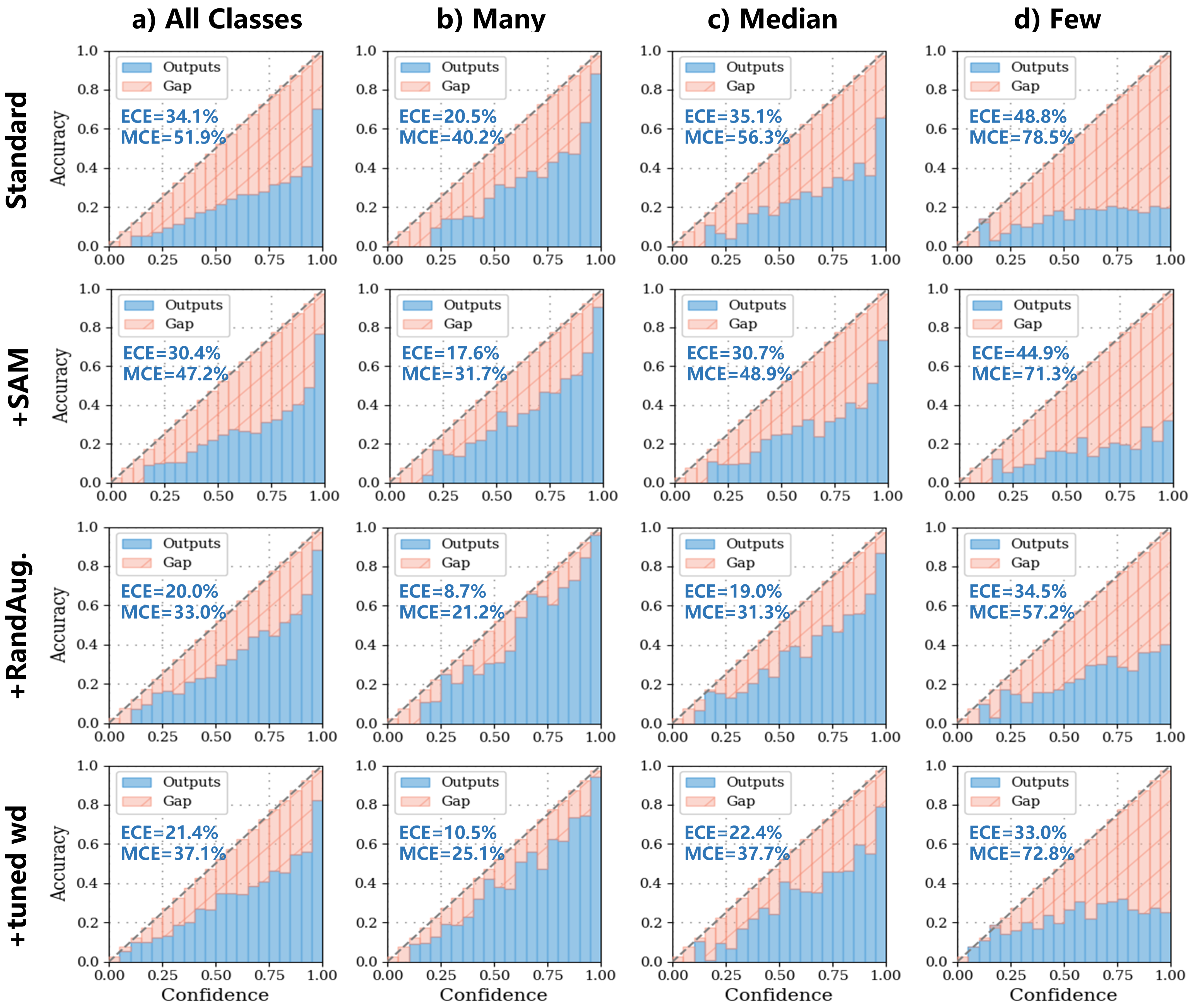}
    \caption{The calibration issue of minority classes: Although the selected techniques can improve the overall calibration (\textbf{a}), the model still exhibits poor calibration, especially in the minority classes (\textbf{d}).}
    \label{fig:local_calibration_exp}
\end{figure*}

\section{More Empirical Results}
\label{app:more_exp_results}
\subsection{Further Validation of the Calibration Issue}
Sec.\ref{subsec:calibration_issue} has shown that even with the mixup technique, the minority classes are not well calibrated. In Fig.\ref{fig:local_calibration_exp}, we additionally select three techniques commonly used by loss-oriented methods, including SAM \cite{DBLP:conf/iclr/ForetKMN21,rangwani2022escaping}, RandAugment \cite{DBLP:conf/nips/CubukZS020}, and tuned weight decay \cite{DBLP:conf/cvpr/AlshammariWRK22}. From the results, we can find that
\begin{itemize}
    \item All these techniques can improve the overall calibration to some extent. However, they are less effective than the mixup technique, no matter on the majority or minority classes. 
    \item Similar to the observations in Fig.\ref{fig:local_calibration}, the model exhibits poor calibration in the minority classes. This phenomenon again validates the necessity of our analysis in Sec.\ref{subsec:calibration_issue}.
\end{itemize}

\subsection{Validaion of Generalization Analysis}

Fig.\ref{fig:alpha_ratio10} reports more results of the baseline models on the CIFAR datasets with $\rho = 10$. Again, {\color[RGB]{230,180, 80} CE+ADRW} and {\color[RGB]{94, 227, 206} LDAM} perform better than {\color[RGB]{127, 231, 153} CE}, and {\color[RGB]{252, 136, 123} LDAM+ADRW} outperforms both {\color[RGB]{230,180, 80} CE+ADRW} and {\color[RGB]{94, 227, 206} LDAM}. All These results again validate the theoretical insights \textbf{(In1)} and \textbf{(In4-b)}.

Fig.\ref{fig:sensitivity_app} presents more sensitivity analysis of VS+ADRW on the CIFAR datasets. Similar to the results in Fig.\ref{fig:sensitivity_cifar10}, appropriately increasing $\nu$ and $\tau$ can improve the model performance, which again validates the theoretical insights \textbf{(In1)} and \textbf{(In4-b)}. Notably,  CIFAR-10 ($\rho = 100$), \textit{i.e.}, Fig.\ref{fig:sensitivity_cifar10}, shows a similar trend to CIFAR-100 ($\rho = 100$), \textit{i.e.}, (a) and (b), instead of those with $\rho = 10$, \textit{i.e.}, (c) and (d). This shows that the datasets with similar  imbalance ratios may have similar optimal hyperparameters, which is beneficial for hyperparameter searching.

Fig.\ref{fig:drw_app} provides a series of results on the CIFAR-10 dataset to validate the theoretical insight \textbf{(In2)}, similar to Fig.\ref{fig:drw_cifar100}. Once again, the imbalance of $B_y(f)$ is highly correlated with the model performance on the test set, not only for CE+DRW but also for CE+None. Besides, the optimal DRW epoch $T_0$ is around 60 when $\rho = 100$ but larger than 100 when $\rho = 10$. This again shows that the optimal hyperparameters are highly related to the imbalance ratio. For efficiency, we fix $T_0$ to 160 in Sec.\ref{subsec:performance_comparison}, which is a common choice in practice.

\begin{figure}[h]
    \centering
    \subfigure[CIFAR-10 LT]{
      \includegraphics[width=0.23\columnwidth]{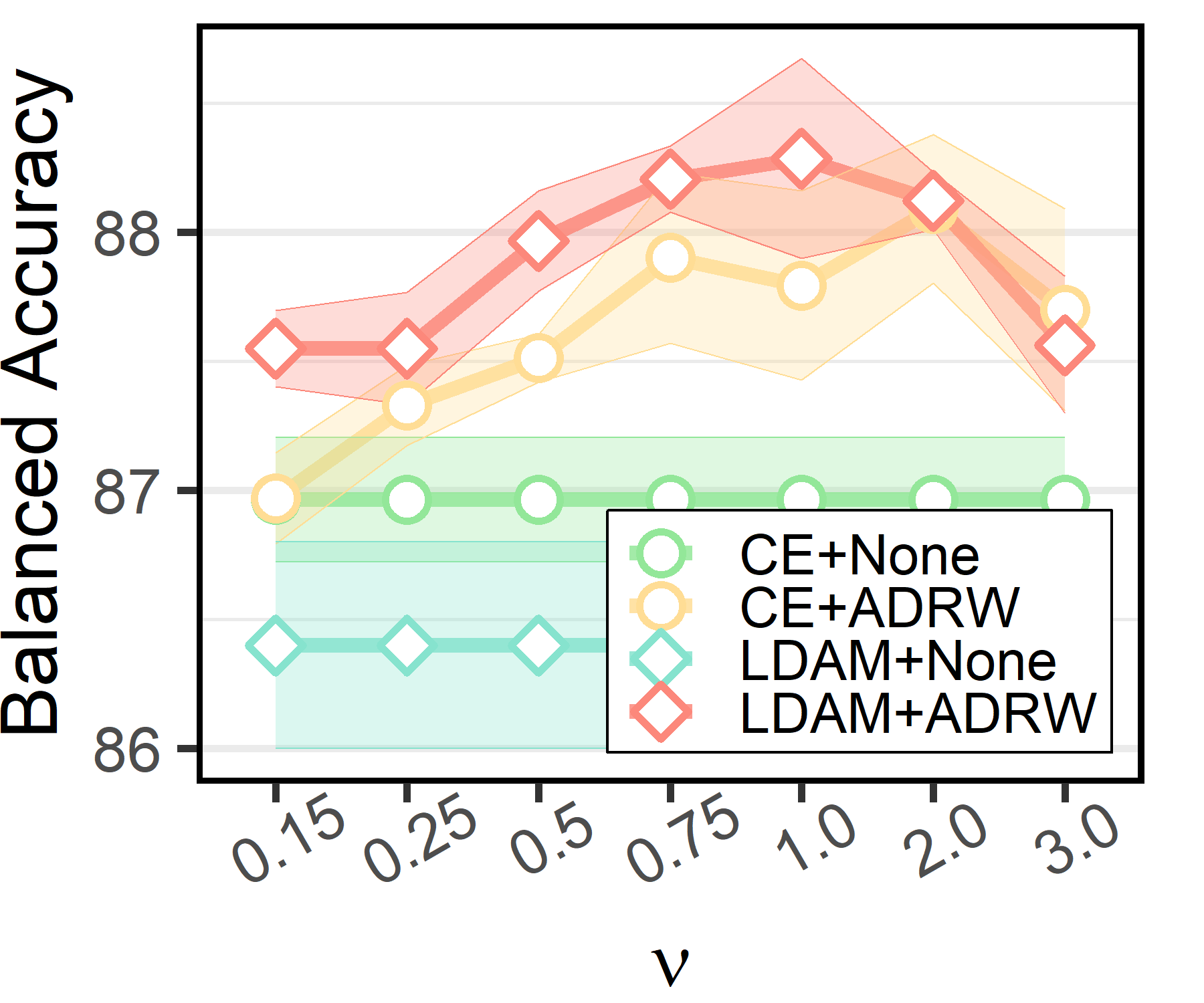}
     }
    \subfigure[CIFAR-10 Step]{
      \includegraphics[width=0.23\columnwidth]{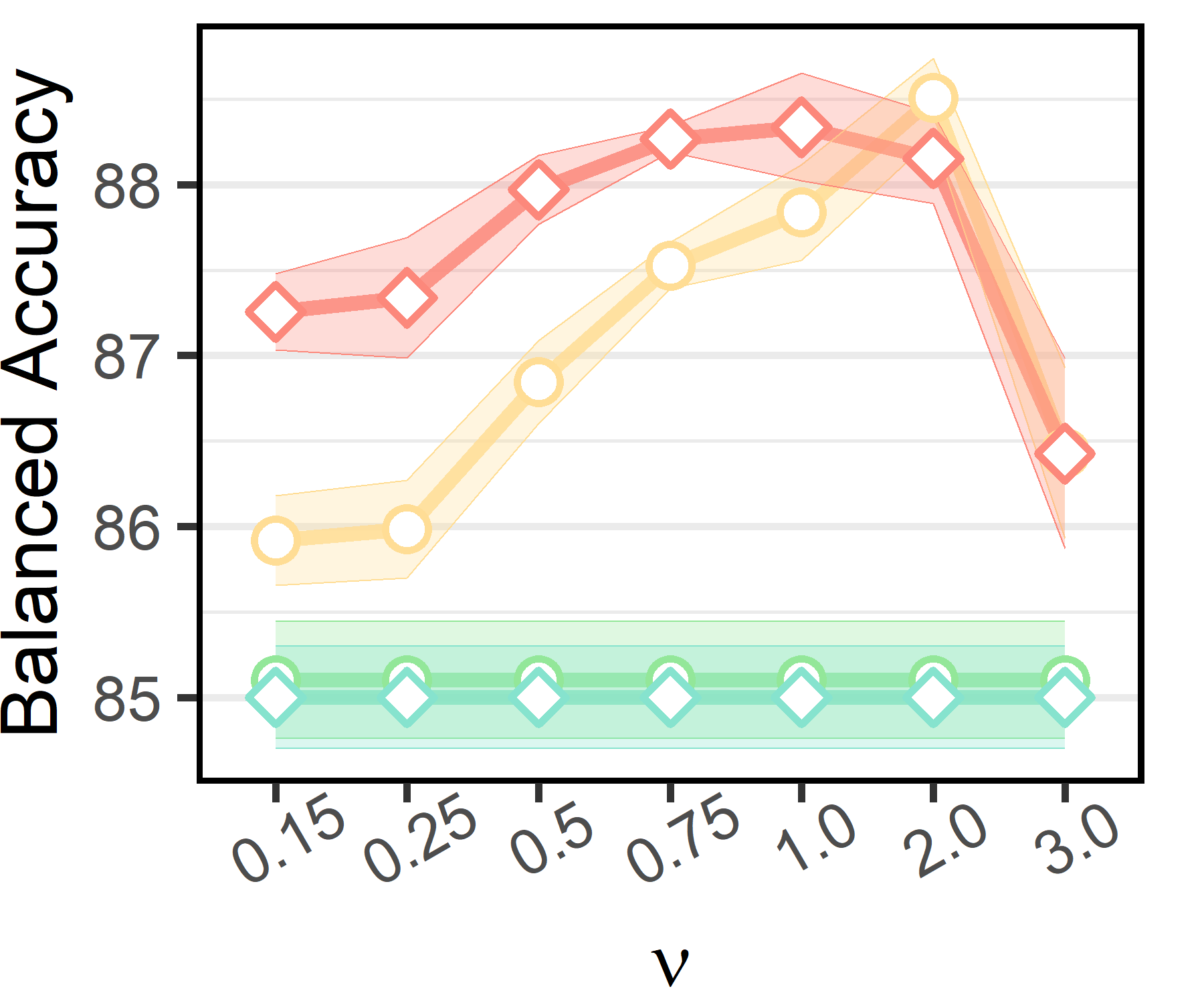}
     }
    \subfigure[CIFAR-100 LT]{
      \includegraphics[width=0.23\columnwidth]{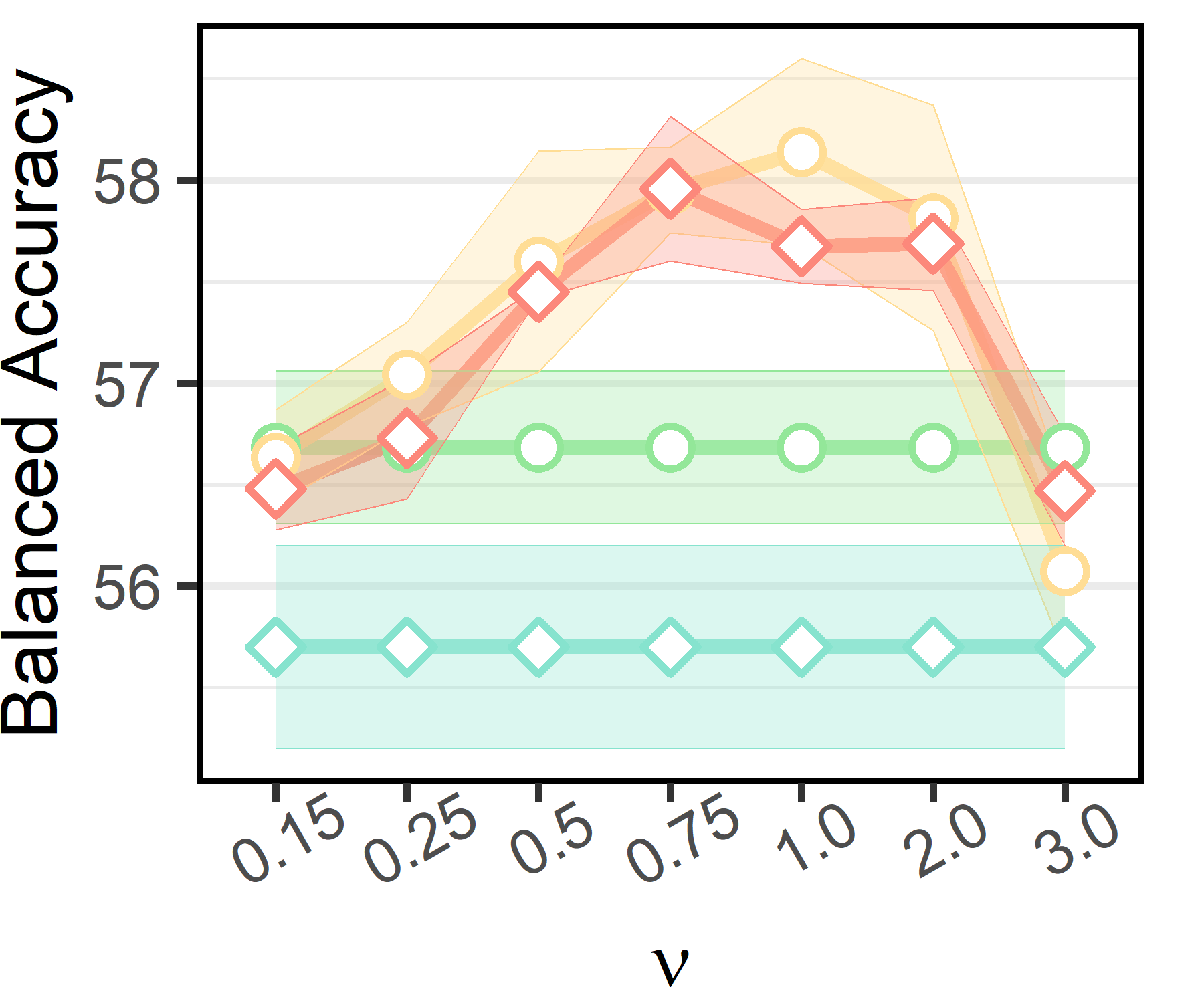}
     }
    \subfigure[CIFAR-100 Step]{
    \includegraphics[width=0.23\columnwidth]{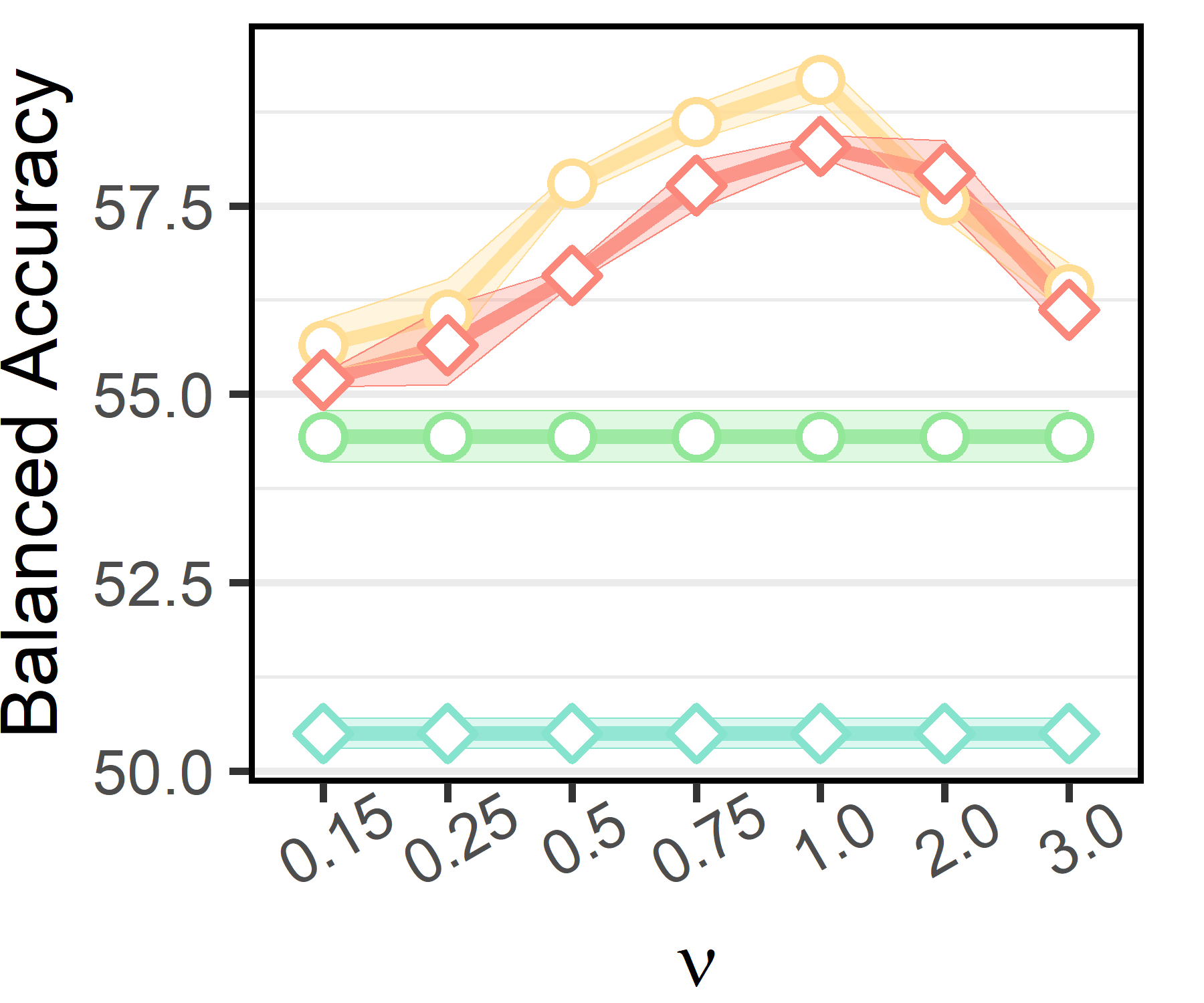}
    }
    \vspace{-2mm}
    \caption{The balanced accuracy of the CE loss and the LDAM loss \textit{w.r.t.} $\alpha_y \propto \pi_y^{-\nu}$ on the CIFAR datasets, where the imbalance ratio $\rho = 10$. Both re-weighting and logit-adjustment boost the model performance, which is consistent with the theoretical insight \textbf{(In1)} and \textbf{(In4-b)}.}
    \label{fig:alpha_ratio10}
\end{figure}

\begin{figure}[h]
    \centering
    \subfigure[CIFAR-100 LT ($\rho = 100$)]{
      \includegraphics[width=0.42\columnwidth]{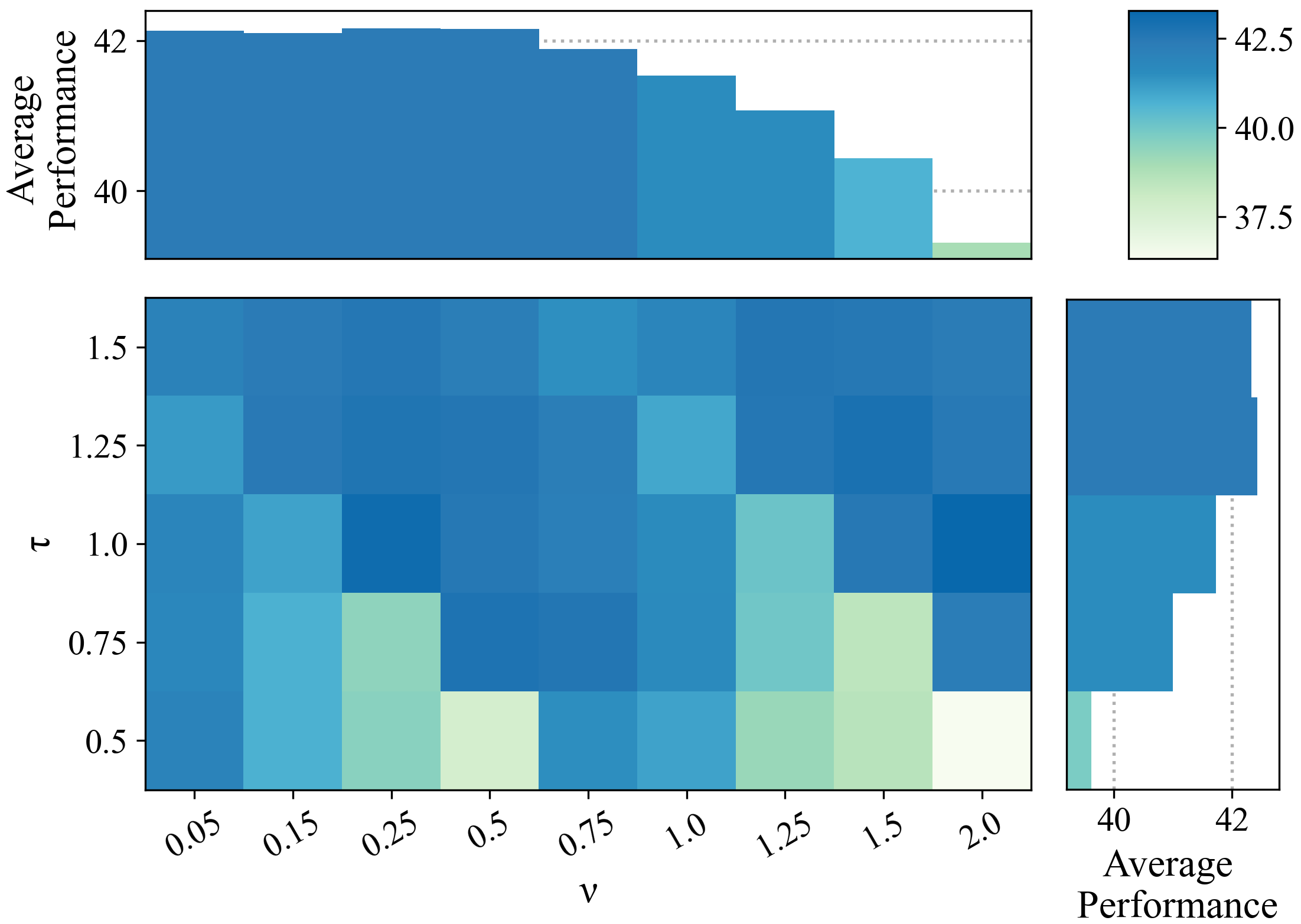}
     }
    \subfigure[CIFAR-100 Step ($\rho = 100$)]{
      \includegraphics[width=0.42\columnwidth]{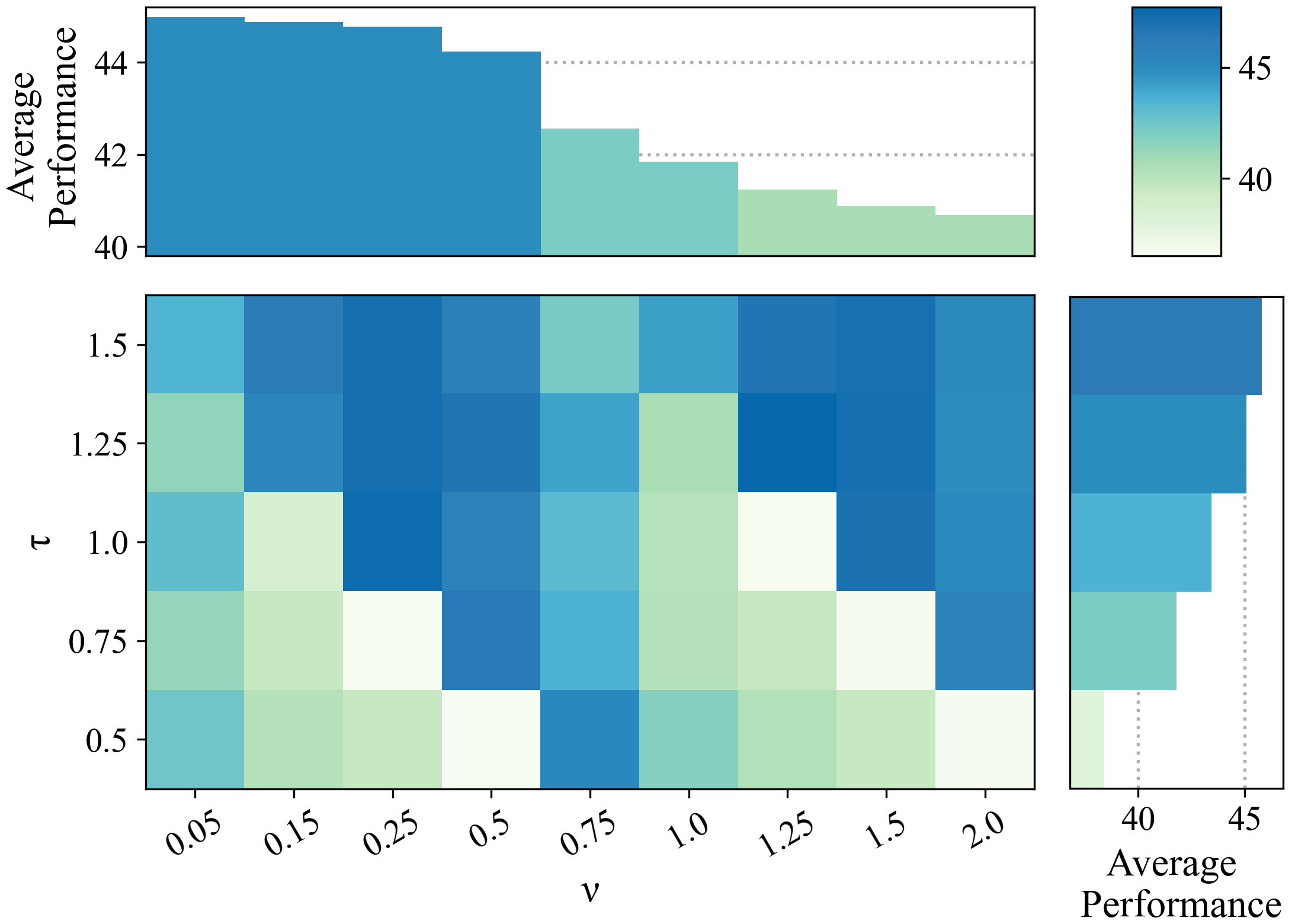}
     }

     \subfigure[CIFAR-100 LT ($\rho = 10$)]{
        \includegraphics[width=0.42\columnwidth]{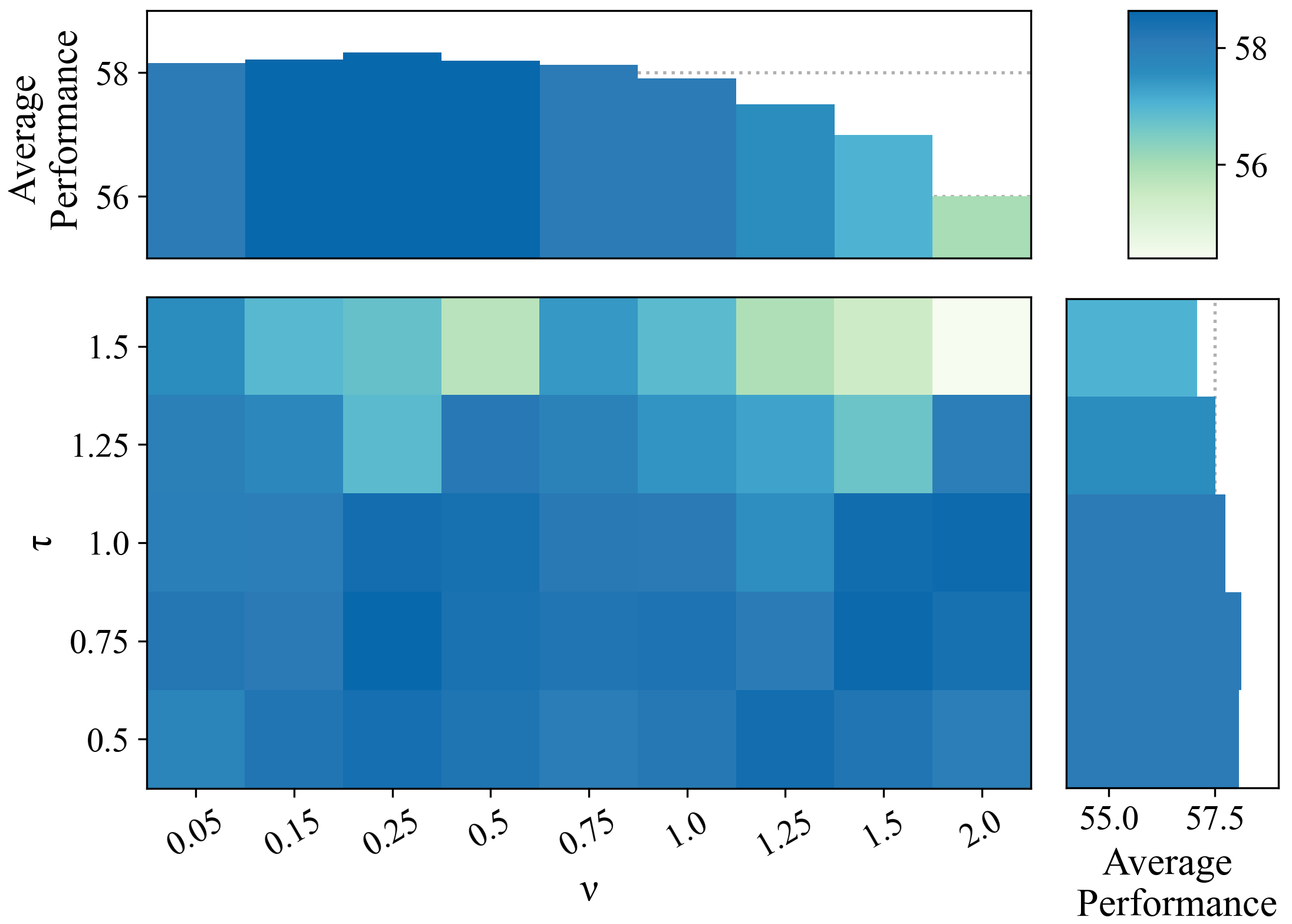}
       }
      \subfigure[CIFAR-10 LT ($\rho = 10$)]{
        \includegraphics[width=0.42\columnwidth]{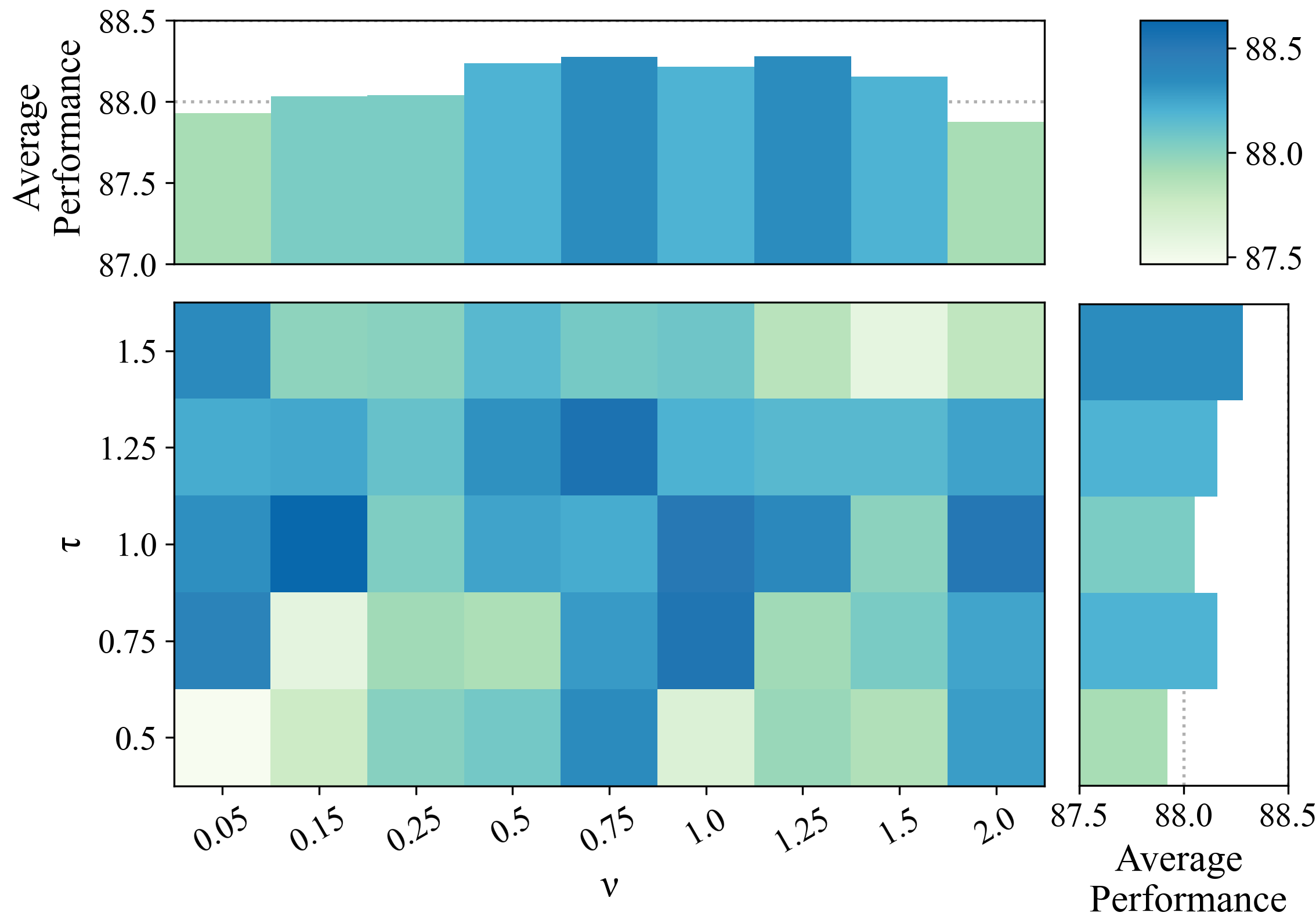}
       }
     \caption{Sensitivity analysis of VS+ADRW \textit{w.r.t.} $\alpha_y \propto \pi_y^{-\nu}$ and $\Delta_y = \tau \log \pi_y$ on the CIFAR datasets, where the imbalance ratio $\rho = 100$. Both re-weighting and logit-adjustment boost the model performance, which is consistent with the theoretical insights \textbf{(In1)} and \textbf{(In4-b)}.}
     \label{fig:sensitivity_app}
\end{figure}

\begin{figure}[!t]
    \centering
    \subfigure[CIFAR-10 LT ($\rho = 100$)]{
        \includegraphics[width=0.28\columnwidth]{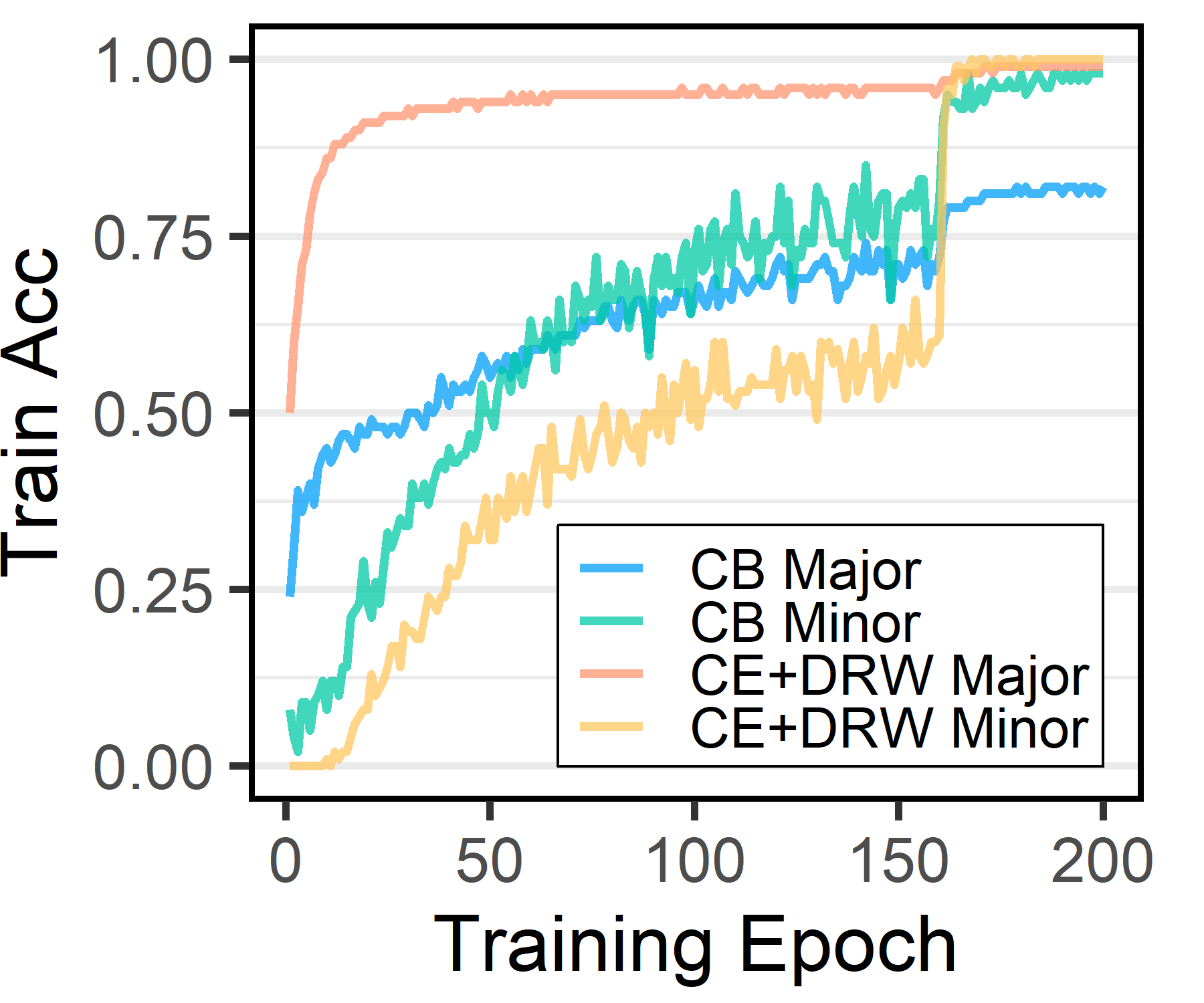}
    }
    \subfigure[CIFAR-10 LT ($\rho = 100$)]{
    \includegraphics[width=0.28\columnwidth]{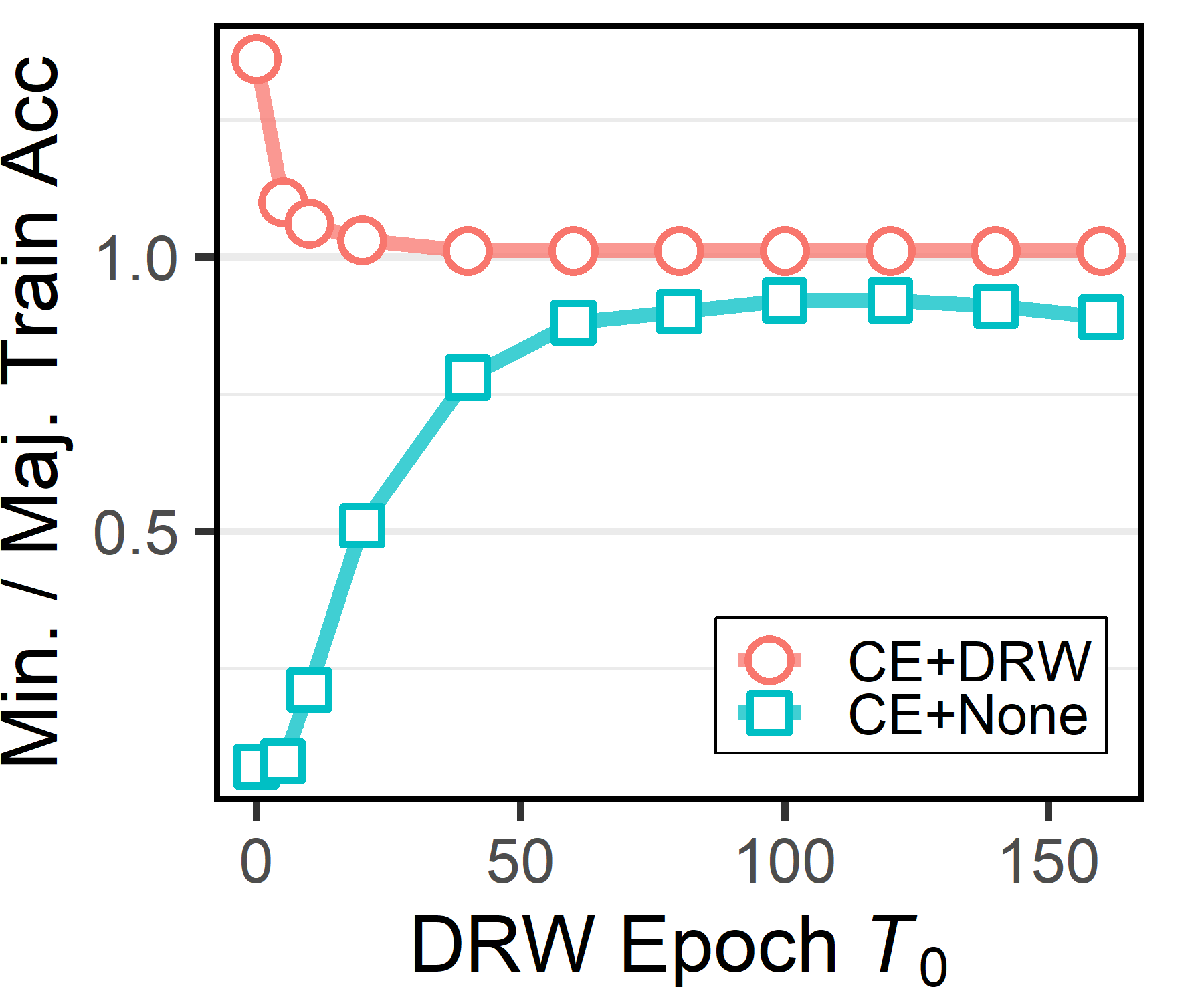}
    }
    \subfigure[CIFAR-10 LT ($\rho = 100$)]{
      \includegraphics[width=0.28\columnwidth]{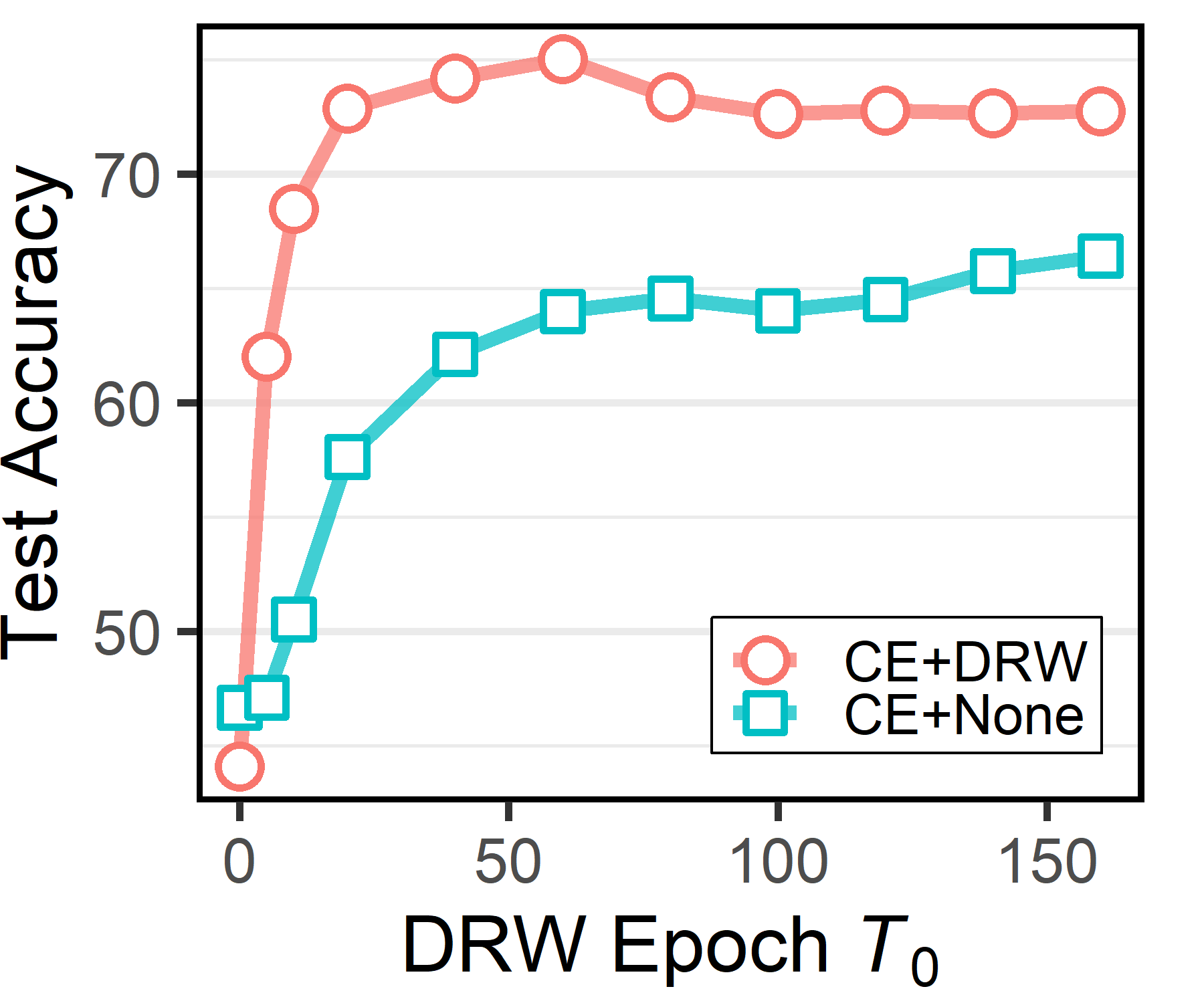}
    }

    \subfigure[CIFAR-100 LT ($\rho = 10$)]{
        \includegraphics[width=0.28\columnwidth]{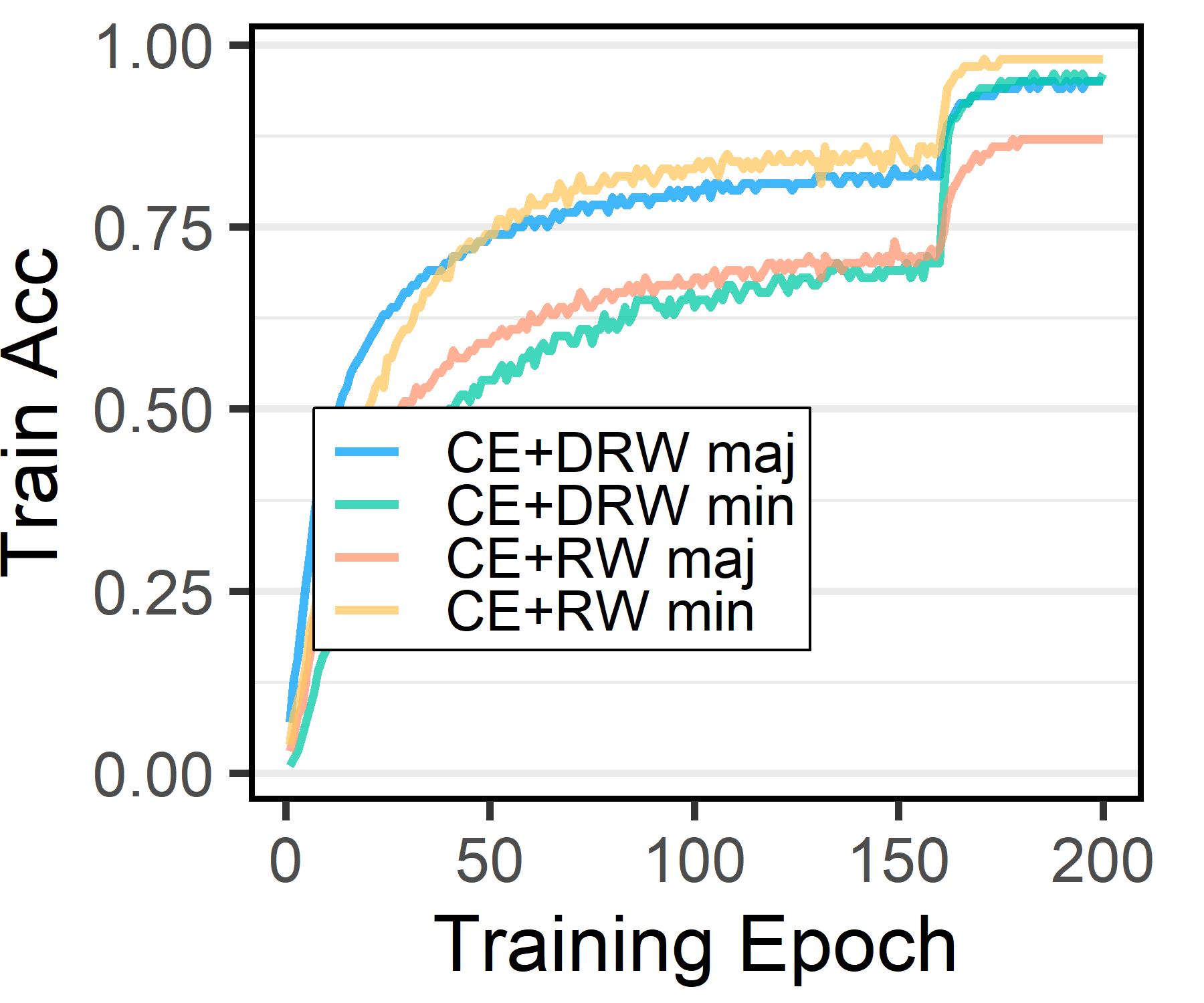}
    }
    \subfigure[CIFAR-100 LT ($\rho = 10$)]{
    \includegraphics[width=0.28\columnwidth]{drw-cifar100,exp,ratio.png}
    }
    \subfigure[CIFAR-100 LT ($\rho = 10$)]{
      \includegraphics[width=0.28\columnwidth]{drw-cifar100,exp,acc.png}
    }
    
    \subfigure[CIFAR-10 LT ($\rho = 10$)]{
        \includegraphics[width=0.28\columnwidth]{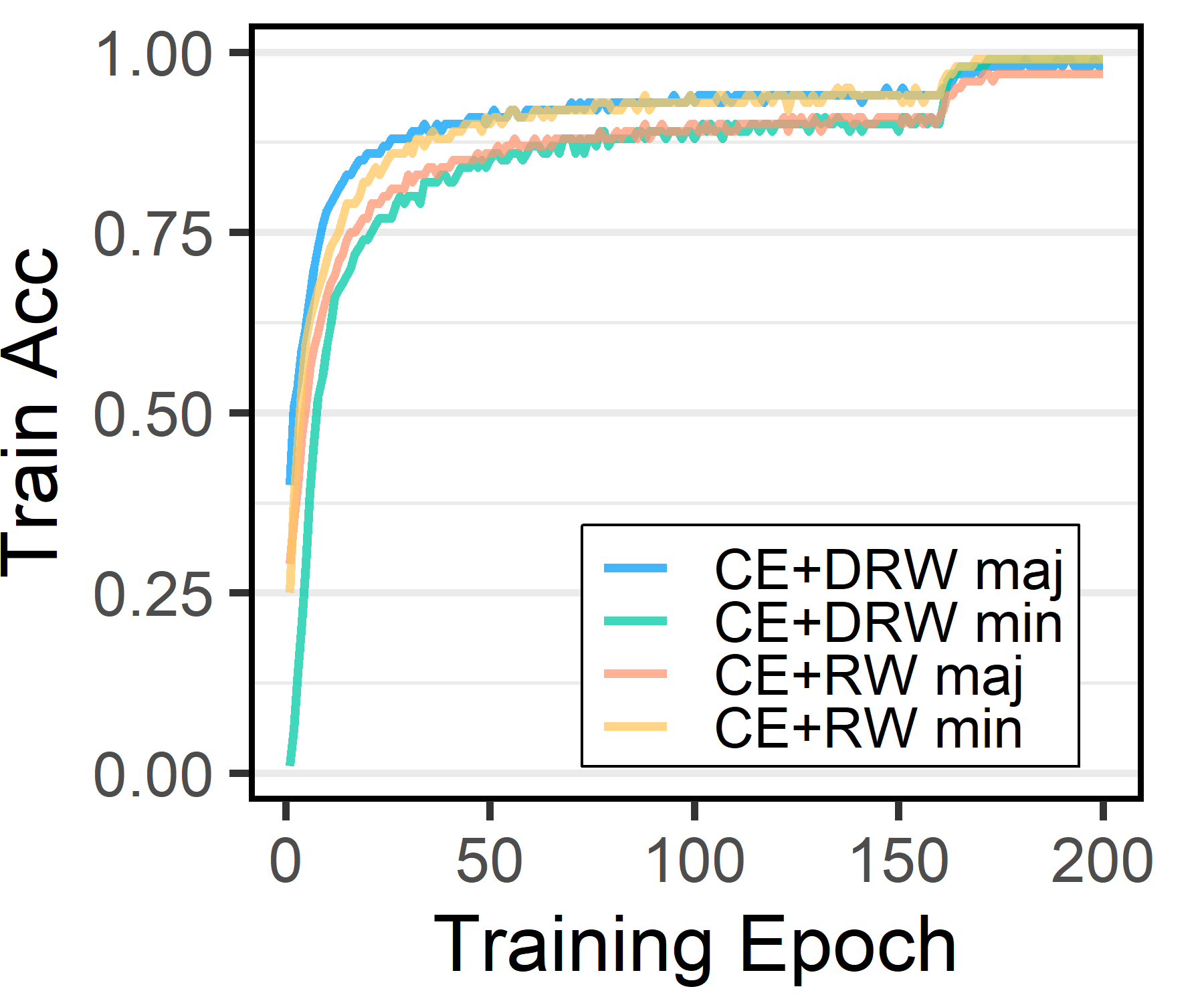}
    }
    \subfigure[CIFAR-10 LT ($\rho = 10$)]{
    \includegraphics[width=0.28\columnwidth]{drw-cifar10,exp,ratio.png}
    }
    \subfigure[CIFAR-10 LT ($\rho = 10$)]{
      \includegraphics[width=0.28\columnwidth]{drw-cifar10,exp,acc.png}
    }
    \caption{(a) Training accuracy of CE+DRW ($T_0 = 160$) and the CB loss ($\alpha_y = (1 - p) / (1 - p^{N_y})$). (b) The ratio of the training accuracy between the minority classes and the majority classes of the best model \textit{w.r.t.} the DRW epoch $T_0$. (c) The test accuracy of the best model \textit{w.r.t.} the DRW epoch $T_0$. We can find that the DRW scheme balances the training accuracy between the majority classes and the minority classes and thus improves the model performance on the test set, which is consistent with the theoretical insight \textbf{(In2)}.}
    \label{fig:drw_app}
\end{figure}

\begin{figure*}[h]
    \centering
    \subfigure[Re-weighting]{
        \includegraphics[width=0.29\linewidth]{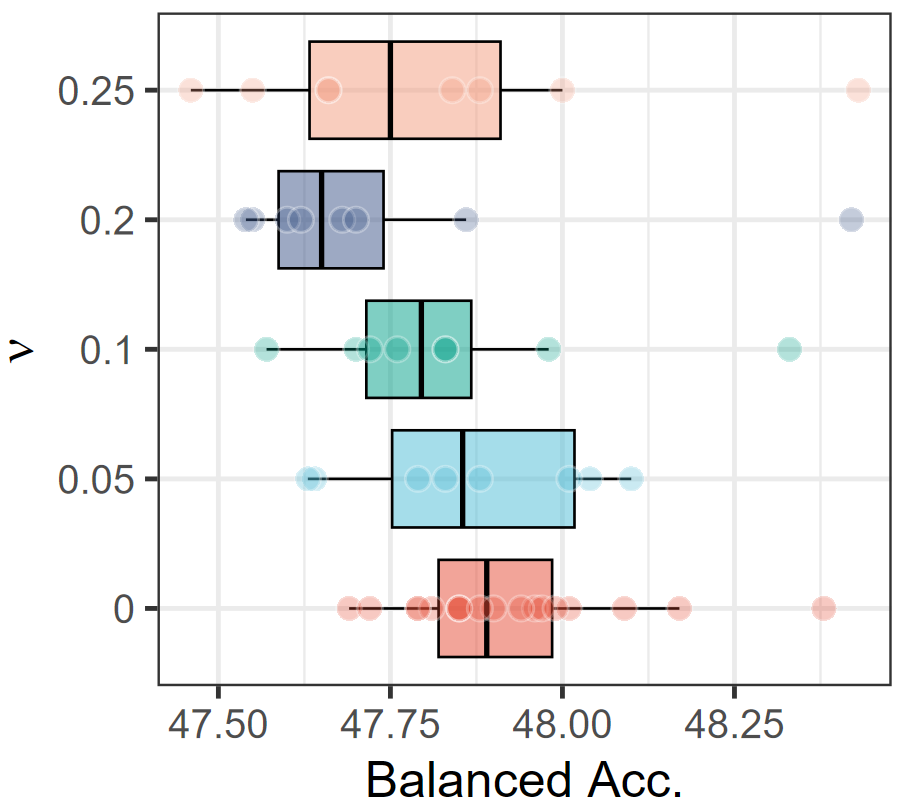}
       }
    \subfigure[Multiplicative adjustment]{
      \includegraphics[width=0.29\linewidth]{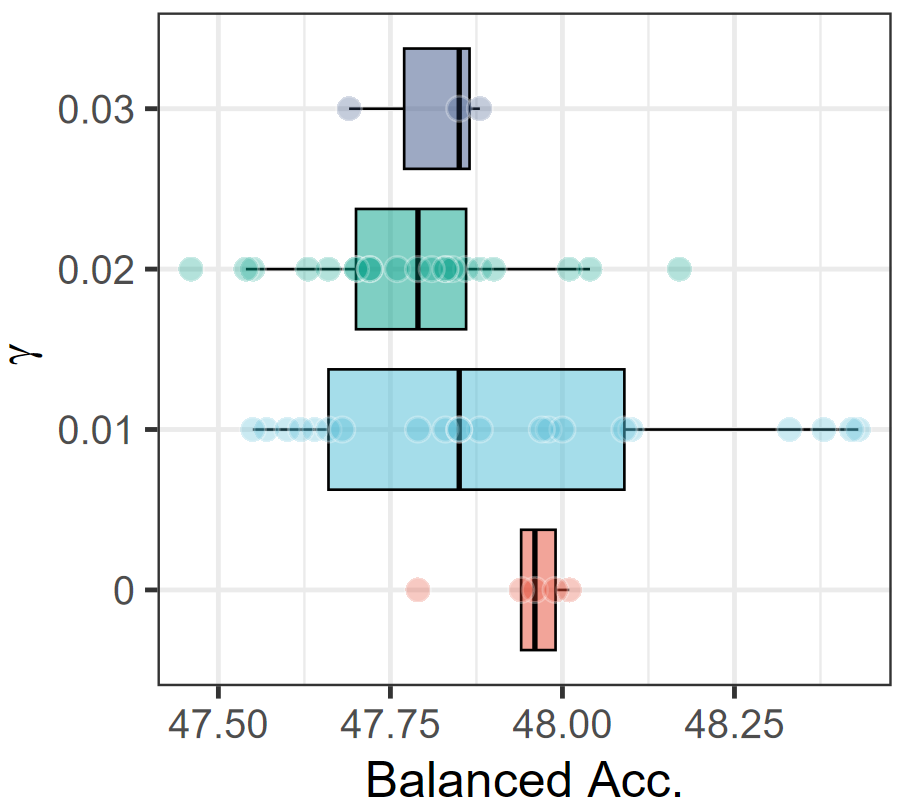}
     }
    \subfigure[Additive adjustment]{
      \includegraphics[width=0.29\linewidth]{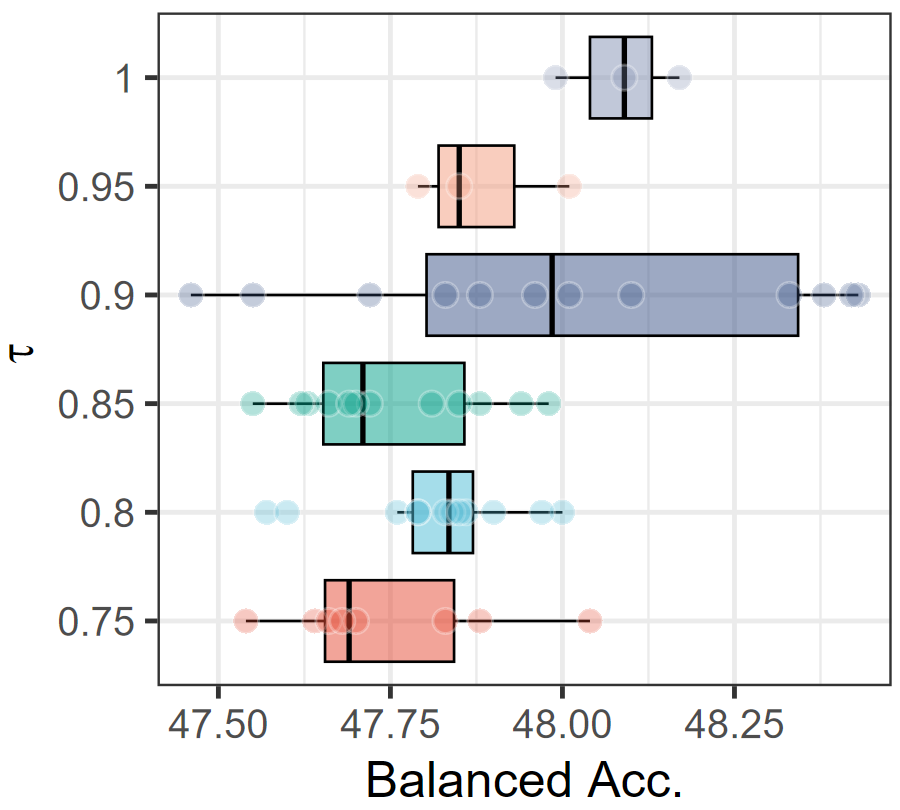}
     }
    \caption{Sensitivity analysis of the proposed method, conducted on CIFAR-100 LT ($\rho = 100$) under the protocol (b).}
    \label{fig:sensitivity_new_b}
\end{figure*}

\clearpage
\subsection{Sensitivity Analysis}
\label{app:sensitivity}

Fig.\ref{fig:sensitivity_new_b} presents the sensitivity analysis of the proposed method on CIFAR-100 LT ($\rho = 100$) under the protocol (b). On one hand, the optimal $\nu, \gamma, \tau$ are 0.25, 0.01, 0.9, respectively, which is similar to those under protocol (c), \textit{i.e.}, Fig.\ref{fig:sensitivity_new}. This again shows that the optimal hyperparameters are highly related to the imbalance ratio. On the other hand, $\gamma$ and $\tau$ are less sensitive than those under protocol (c). This difference is not surprising if we check the generalization bound in Prop.\ref{prop:vs_generalization}. Specifically, in protocol (b), the model has not been well trained due to fewer training epochs and the lack of RandAugment. Hence, the term $\Phi(L_\text{VS}, \delta)$ is larger than that in protocol (c), weakening the impact of the class-dependent terms. Hence, this result highlights the importance of training protocols in making the class-dependent terms effective.

\subsection{Computational and GPU memory usage}
We  evaluate the computational and GPU memory costs on the CIFAR-100 LT ($\rho = 100$) and iNaturalist datasets. Specifically, a ViT-B/16 model is trained via LIFT \cite{DBLP:conf/icml/Shi00SH024} on a single NVIDIA RTX 4090 GPU, equipped with common acceleration techniques such as mixed precision training. The slope estimation is accelerated by Pytorch method \texttt{torch.linalg.lstsq}. From the results in Tab.\ref{tab:computational_costs}, we can find that:
\begin{itemize}
    \item The computational cost of slope estimation is negligible on small-scale datasets (\textit{e.g.}, CIFAR-100 LT), where the slope estimation only takes 0.49 seconds per epoch. On large-scale datasets (\textit{e.g.}, iNaturalist), the slope estimation takes 40.10 seconds per epoch, which is also acceptable considering the training time (1359.67 seconds per epoch).
    \item The memory cost of slope estimation could be relatively high on large-scale datasets due to the increased number of classes and samples. For example, the memory cost of slope estimation on iNaturalist is 14286.4 MB, which accounts for 70.7\% of the training GPU memory. Even though, such GPU memory cost is still acceptable. On one hand, model training and slope estimation are performed sequentially. Hence, as long as the GPU memory of slope estimation is less than the training one, it will not cause out-of-memory issues. On the other hand, the estimation is parallelized across all the classes simultaneously. If a low GPU memory usage is essential, we reduce the number of parallelized classes to reduce the memory cost, slightly sacrificing the time efficiency.
\end{itemize}


\begin{table}[htbp]
    \centering
    \caption{Computational and memory costs of slope estimation on CIFAR-100 LT and iNaturalist datasets.}
    \begin{tabular}{ccccc}
    \toprule
    \multirow{2}{*}{Dataset} & \multicolumn{2}{c}{Training} & \multicolumn{2}{c}{Slope Estimation} \\
    \cmidrule(lr){2-3} \cmidrule(lr){4-5}
    & Time (s/epoch) & GPU Memory (MB) & Time (s/epoch) & GPU Memory (MB) \\
    \midrule
    CIFAR-100 LT & 26.39 & 12033.8 & 0.49 (1.9\%) & 648.5 (5.4\%) \\
    iNaturalist & 1359.67 & 20217.6 & 40.10 (2.9\%) & 14286.4 (70.7\%) \\
    \bottomrule
    \end{tabular}
    \label{tab:computational_costs}
\end{table}

    \subsection{Alternative Methods for Slope Estimation}
    In default, we estimate $\kappa_y^+$ and $\kappa_y^*$ via least square regression. For stability, we explore alternative methods for slope estimation. Specifically, the estimation of the majority classes is relatively stable due to the sufficient number of samples. The stability issue mainly arises from the minority classes. To address this issue, we can group the minority classes and estimate a shared slope for each group. Besides, we replace the least square regression with Huber regression to improve robustness against outliers. From the results in Tab.\ref{tab:cifar-100-lt-huber}, we can see that 1) When the imbalance ratio is high, the improved estimation can effectively improve the model performance on the minority classes. 2) When the imbalance ratio is low, the performance of the minority classes is already satisfactory, and the improvements become marginal.

\begin{table}[htbp]
    \centering
    \caption{The comparison of different slope estimation methods on the CIFAR-100 LT dataset.}
    \renewcommand{\arraystretch}{1.05}
    \begin{tabular}{lccccc}
        \toprule
        \multirow{2}{*}{Imbalance Ratio} & \multicolumn{4}{c}{100} & \multicolumn{1}{c}{10} \\
        \cmidrule(lr){2-6}    & Many  & Med. & Few & All  & All  \\
        \midrule
            \multicolumn{6}{c}{PEFT pre-trained ViTs via LIFT \cite{DBLP:conf/icml/Shi00SH024}} \\
        \midrule
        \textbf{CVS (LSTSQ)} & 82.1 & 81.5 & 75.4 & 79.9 & 83.1 \\
        \textbf{CVS (Huber)} & 82.0 & 81.6 & 77.2 & \underline{81.1} & \underline{83.2} \\
        \midrule
        \textbf{CVS+ADRW (LSTSQ)} & 79.7 & 80.5 & 80.7 & 80.3 & 83.5 \\
        \textbf{CVS+ADRW (Huber)} & 79.6 & 80.6 & 81.2 & \underline{80.4} & \underline{83.5} \\
        \bottomrule
    \end{tabular}%
    \label{tab:cifar-100-lt-huber}%
\end{table}

\end{document}